\newcommand{\SigmaP}[1]{\ComplexityFont{\Sigma}_{#1}^{\P}}
\newcommand{\PiP}[1]{\ComplexityFont{\Pi}_{#1}^{\P}}
\tikzstyle{arg}=[draw,circle,fill=gray!15,inner sep=1pt,minimum size=.5cm]
\tikzstyle{attack}=[->,left,thick,>=stealth]
\newcommand{\AF}{F}
\newcommand{\cf}{\mathit{cf}}
\newcommand{\naive}{\mathit{naive}}
\newcommand{\stable}{{\mathit{stb}}}
\newcommand{\stb}{\stable}
\newcommand{\adm}{\mathit{adm}}
\newcommand{\pref}{\mathit{prf}}
\newcommand{\stage}{\mathit{stg}}
\newcommand{\semi}{\mathit{sem}}
\newcommand{\comp}{\mathit{com}}
\newcommand{\Args}{A}
\newcommand{\Claims}{\mathit{Claims}}
\newcommand{\wfCAF}{\mathit{wfCAF}}
\newcommand{\CAF}{\mathit{CAF}}
\newcommand{\PCAF}{\mathit{PCAF}}
\newcommand{\Image}[1]{\mathcal{R}_{#1}\text{-}\mathbf{CAF}}
\newcommand{\ImageTrans}[1]{\Image{#1}_{\mathit{tr}}}
\newcommand{\setOfAllwfCAFs}{\mathbf{wfCAF}}
\newcommand{\setOfAllCAFs}{\mathbf{CAF}}
\newcommand{\problematicPart}{\mathit{wfp}}
\newcommand{\U}{U}
\newcommand{\Att}{R} %
\newcommand{\Def}{R'} %
\newcommand{\cl}{\mathit{claim}}
\newcommand{\red}[1]{\mathcal{R}_{{#1}}}
\newcommand{\Cred}{\mathit{Cred}}
\newcommand{\Skept}{\mathit{Skept}}
\newcommand{\Ver}{\mathit{Ver}}
\declaretheorem[name=Definition]{definition}
\declaretheorem[name=Example]{example}
\declaretheorem[numbered=no,name=Remark]{remark}
\declaretheorem[name=Lemma]{lemma}
\declaretheorem[name=Proposition]{proposition}
\newenvironment{proofsketch}{\proof}{\endproof}
\title{The Effect of Preferences in Abstract Argumentation Under a Claim-Centric View}
\author{
	Michael Bernreiter,
	Wolfgang Dvo\v{r}\'{a}k,
	Anna Rapberger, \\
	Stefan Woltran
	\\
	Institute of Logic and Computation, TU Wien, Austria
	\\
	\{mbernrei,dvorak,arapberg,woltran\}@dbai.tuwien.ac.at
}
\begin{document}

\maketitle

\begin{abstract}
  In this paper, we study the effect of preferences in abstract argumentation
  under a claim-centric perspective. Recent work has revealed that semantical
  and computational properties can change when reasoning is performed on 
  claim-level rather than on the argument-level, while under certain 
  natural restrictions (arguments with the same claims have the 
  same outgoing attacks) these properties are conserved. We now investigate
  these effects when, in addition, preferences have to be taken into account and consider four prominent reductions to handle preferences between arguments.
  As we shall see, these reductions give rise to %
  different classes of claim-augmented argumentation frameworks, and behave 
  differently in terms of semantic properties and computational complexity.
  This strengthens the view that the actual choice for handling preferences 
  has to be taken with care.
\end{abstract}

\section{Introduction} \label{sec:introduction}

Arguments vary in their plausibility.
Research in formal argumentation has taken up this %
aspect in both quantitative %
and qualitative terms \cite{AtkinsonB21,LiON11,AtkinsonBGHPRST17}. %
Indeed, preferences 
are nowadays a standard feature of many structured argumentation formalisms \cite{PrakkenS97,ModgilP13,CyrasT16,GarciaS14}.
In abstract argumentation frameworks \cite{Dung95} in which conflicts are expressed as a binary relation between abstract arguments (\textit{attack relation}), the incorporation of preferences typically results in the deletion or reversion of attacks between arguments of different strength---deciding acceptability of arguments via argumentation semantics is thus implicitly reflected in terms of the modified attack relation \cite{KaciTV18}. %

The difference in argument strength and the resulting modification of the attack relation naturally influences the acceptability of the arguments' conclusion (the \textit{claim} of the argument). 
Claim justification in argumentation systems, i.e., the evaluation of commonly acceptable statements while disregarding their particular justifications, is an integral part of many structured argumentation formalisms \cite{ModgilP18,DungKT09}; its relevance is also expressed in the discussion about the advantages and disadvantages of floating conclusions \cite{Horty02}.
Recently, the connection between abstract argumentation and claim-based reasoning has received increasing attention \cite{BaroniR19,DvorakW20}. %
Claim-augmented Argumentation Frameworks (CAFs) \cite{DvorakW20} extend AFs by a function which assigns to each argument a claim, representing its conclusion. CAFs serve as an ideal target formalism for ASPIC+ \cite{ModgilP18} and other structured argumentation formalisms which utilize abstract argumentation semantics whilst also considering the claims of the arguments in the evaluation; they are equally well-suited for AF instantiations of related formalisms like logic programming %
which typically require the translation from the constructed arguments into their claims as a final step before the outcome can be compared with the original instance \cite{CaminadaSAD15a}.
Semantics for CAFs can be obtained by evaluating the underlying AF before inspecting the claims of the acceptable arguments in the final step (thus simulating the process of instantiation procedures as outlined above).

In contrast to Dung AFs where arguments are unique, claims can appear multiple times as conclusions of different arguments within a single argumentation framework. As a consequence, several properties of AF semantics such as \emph{I-maximality}, i.e., $\subseteq$-maximality of extensions, cannot be taken for granted when considered in terms of the arguments' claims \cite{DvorakRW20c}. Furthermore, the additional level of claims causes a rise in the computational complexity of standard decision problems (in particular, verification is one level higher in the polynomial hierarchy as for standard AFs), see \cite{DvorakW20,DvorakGRW21}. 
Luckily, these drawbacks can be alleviated by taking fundamental properties of the attack relation into account: the basic observation that attacks typically depend on the claim of the attacking arguments %
gives rise to the central class of \textit{well-formed CAFs}. 
CAFs from this class require that arguments with the same claim attack the same arguments, thus
modeling -- on the abstract level -- a very natural behavior of arguments that is common to all leading structured argumentation formalisms and instantiations. %
Well-formed CAFs have the main advantage that most of the semantics behave `as expected'. For instance, they retain the fundamental property of I-maximality, and their computational complexity is located at the same level of the polynomial hierarchy as for AFs.
Unfortunately, it turns out that well-formedness cannot be assumed if one deals with preferences in argumentation, as arguments with the same claim are not necessarily equally plausible.
Let us %
have a look on a simple example.

\begin{figure}[ht]
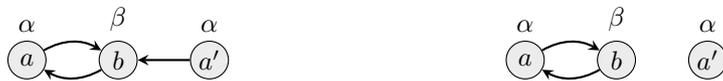

	\captionsetup[subfigure]{labelformat=empty}
	\centering
	\begin{subfigure}{0.45\columnwidth}
		\centering
		\tikz{
			\node[arg, label={above}:$\alpha$] (a) at (0,0) {$a$};
			\node[arg, label={above}:$\beta$] (b) at (1.2,0) {$b$};
			\node[arg, label={above}:$\alpha$] (c) at (2.4,0) {$a'$};
			\draw[attack] 
			(a) edge [bend left] (b)
			(b) edge [bend left] (a)
			(c) edge (b);
		}
	\end{subfigure}
	\hfill
	\begin{subfigure}{0.45\columnwidth}
		\centering
		\tikz{
			\node[arg, label={above}:$\alpha$] (a) at (0,0) {$a$};
			\node[arg, label={above}:$\beta$] (b) at (1.2,0) {$b$};
			\node[arg, label={above}:$\alpha$] (c) at (2.4,0) {$a'$};
			\draw[attack] 
			(a) edge [bend left] (b)
			(b) edge [bend left] (a)
		}
	\end{subfigure}
	\hfill
	\caption{$F$ (left) and $F'$ (right) from Example~\ref{ex:intro}.}
	\label{fig:intro}
\end{figure}
\begin{example} \label{ex:intro}
	Consider two arguments $a,a'$ with claim $\alpha$, and another argument $b$ having claim $\beta$.
	Without specifying the particular formalism, we may assume that each argument has a support which entails its claim:
	the claim $\alpha$ is entailed by $\neg \beta$ (in argument $a$) and by $\gamma$ (in argument $a'$); and $\beta$ is entailed by $\neg \alpha$. The corresponding CAF $F$ 
	is depicted in Figure~\ref{fig:intro} (left); as usual, an argument $a$ attacks another argument $b$ if the claim of $a$ contradicts the support of $b$. Note that $F$ is well-formed. The unique acceptable claim-set w.r.t.\ stable semantics (cf.~Definition~\ref{def:semantics})
	is $\{\alpha\}$ (obtained from the stable extension $\{a,a'\}$ of the underlying AF).
	
	Now assume that $b$ is preferred over $a'$ (for example, if assumptions in the support of $b$ are stronger than assumptions made by $a'$). A common method to integrate such information on argument rankings is to \emph{delete} attacks from arguments that attack preferred arguments. In this case, we delete the attack from $a'$ to $b$ (cf.\ CAF $F'$ in Figure~\ref{fig:intro}). Observe that the resulting CAF is not well-formed. Moreover, we obtain that I-maximality is violated since both claim-sets $\{\alpha\}$, $\{\alpha,\beta\}$ are acceptable w.r.t.\ stable semantics.
\end{example}
As preferences or argument rankings are a standard feature of many argumentation systems it is often the case that the resulting CAF violates the property of well-formedness.
Nevertheless, this does not imply arbitrary behavior of the resulting CAF:
on the one hand, preferences conform to a certain type of ordering (e.g., strict, partial, or total orders) over the set of arguments; on the other hand, it is evident that the deletion, reversion, and other types of attack manipulation impose certain restrictions on the structure of the resulting CAF. Combining both aspects, we obtain that, assuming well-formedness of the initial framework, it is unlikely that preference incorporation results in arbitrary behavior. An easy example which supports this hypothesis is given by self-attacking arguments: assuming a reasonable preference ordering, it is clear that self-attacking arguments are not affected by commonly used attack modifications since an argument cannot be preferred to itself.
The key motivation of this paper is thus to identify and exploit structural properties of preferential argumentation in the scope of claim justification. The aforementioned restrictions suggest beneficial impact on both the computational complexity as well as in terms of desired semantical properties.

In this paper, we tackle this question by considering four commonly used methods, so-called reductions, to integrate preference orderings into the attack relation: 
the most common modification is the deletion of attacks in case the attacking argument is less preferred than its target (cf.\ Example~\ref{ex:intro}). This method is typically utilized to transform preference-based argumentation frameworks (PAFs) \cite{AmgoudC98} into AFs but is also used in many structured argumentation formalisms such as ASPIC+. 
This reduction has been criticized due to several problematic side-effects, e.g., it can be the case that two conflicting arguments are jointly acceptable, and has been accordingly adapted in \cite{AmgoudV14};
two other reductions have been introduced in \cite{KaciTV18}.
We apply these four preference reductions to well-formed CAFs with preferences.
In particular, our main contributions are as follows:
\begin{itemize}
	\item For each of the four preference reductions,
	we characterize the possible structure of CAFs that are obtained by applying the reduction to a well-formed CAF and a preference relation. This results in four novel CAF classes, each of which constitutes a proper extension of well-formed CAFs but does not retain the full expressiveness of general CAFs. Moreover, we investigate the relationship between these classes.
	
	\item We study I-maximality of stable, preferred, semi-stable, stage, and naive semantics of the novel CAF classes. Our results highlight a significant advantage of a particular reduction: we show that, for admissible-based semantics, this modification preserves I-maximality. The other reductions fail to preserve I-maximality; moreover, for %
	naive and stage semantics, I-maximality cannot be guaranteed for any of the four reductions. 
	
	\item Finally, we investigate the 
	complexity of reasoning for %
		CAFs with preferences
	with respect to the five semantics mentioned above plus conflict-free, admissible, and complete semantics.
	We show that for three of the four reductions, the verification problem drops by one level in the polynomial hierarchy for almost all
	considered semantics and is thus not harder than for well-formed CAFs (which in turn has the same complexity as the corresponding problems for AFs). Complete semantics are the only exception, remaining hard for all but one preference reduction.
	On the other hand, it turns out that verification for the most commonly used reduction, i.e., deleting attacks from weaker arguments, %
	remains as hard as for general CAFs.
\end{itemize}

\section{Preliminaries} \label{sec:preliminaries}

We first define (abstract) 
argumentation frameworks~\cite{Dung95}. $U$ denotes a countable infinite domain of arguments.

\begin{definition} \label{def:AF}
	An argumentation framework (AF) is a tuple $\AF = (\Args,\Att)$ where $\Args \subseteq \U$ is a finite set of arguments and $\Att \subseteq \Args \times \Args$ is an attack relation between arguments. 
	Let $E \subseteq \Args$. We say $E$ \emph{attacks} $b$ (in $\AF$) if $(a,b)\in \Att$ for some $a\in E$; 
	$E^+_\AF=\{b\in \Args \mid\exists a\in E: (a,b)\in \Att\}$ denotes the set of arguments attacked by $E$.
	$E^\oplus_\AF=E\cup E^+_\AF$ is the \emph{range} of $E$ in $\AF$.
	An argument $a \in \Args$ is \emph{defended} (in $\AF$) by $E$ if $b\in E^+_F$ for each $b$ with $(b,a)\in \Att$.
\end{definition}

Given an AF $F = (\Args,\Att)$ it can be convenient to write $a \in F$ for $a \in \Args$ and $(a,b) \in F$ for $(a,b) \in \Att$. 
Semantics for AFs are defined as 
functions $\sigma$ which assign to each AF
$\AF=(\Args,\Att)$ 
a set
$\sigma(\AF)\subseteq 2^{\Args}$
of extensions. 
We consider for 
$\sigma$ the functions
$\cf$,
$\adm$, 
$\comp$,   
$\naive$,
$\stable$, 
$\pref$,  
$\semi$, 
and 
$\stage$ 
which 
stand for
conflict-free,
admissible, 
complete,  
naive, 
stable,
preferred, 
semi-stable,
and 
stage, 
respectively \cite{BaroniCG18}.

\begin{definition}\label{def:semantics}
	Let $F = (\Args,\Att)$ be an AF.  A set $S\subseteq \Args$ is 
	{\em conflict-free (in $F$)}, 
	if there are no 
	$a, b \in S$, such that $(a,b) \in \Att$.
	$\cf(F)$ denotes the collection of conflict-free sets of $F$.
	For a conflict-free set $S \in \cf(F)$, it holds that
	\begin{itemize}
		\item $S\in\adm(F)$ 
		if  each $a\in S$ is defended by $S$ in $F$;
		\item $S\in\comp(F)$ 
		if $S\in\adm(F)$ and each $a\in \Args$ defended by $S$ in $F$ is contained in $S$;
		\item $S\in\naive(F)$
		if there is no $T\in\cf(F)$ with $S\subset T$;
		\item $S\in\stable(F)$ 
		if each $a\in \Args \setminus S$ is attacked by $S$ in $F$;
		\item $S\in\pref(F)$ 
		if $S\in\adm(F)$ and 
		there is no $T\in\adm(F)$ with $S \subset T$;
		\item $S\in\semi(F)$ if $S\in\adm(F)$
		and there is no $T\in\adm(F)$ with $S^\oplus_F\subset T^\oplus_F$;
		\item $S\in\stage(F)$ if there is no $T\in\cf(F)$ 
		with $S^\oplus_F\subset T^\oplus_F$.
	\end{itemize}
\end{definition}

\begin{example} \label{ex:AF}
	Consider the AF $F = (\{a,a',b\},\{(a,b),$ $(a',b),(b,a)\})$ depicted in Figure~\ref{fig:intro} (ignoring claims $\alpha$ and $\beta$). It can be check that  
	$\cf(F) = \allowbreak \{\emptyset,\allowbreak \{a\},\allowbreak \{a'\},\allowbreak \{b\},\allowbreak \{a,a'\}\}$, 
	$\adm(F) = \allowbreak \{\emptyset,\allowbreak \{a\},\allowbreak \{a'\},\allowbreak \{a,a'\}\}$, 
	$\naive(F) = \allowbreak \{\{b\},\allowbreak \{a,a'\}\}$, and
	$\sigma(F) = \allowbreak \{\{a, a'\}\}$ for $\sigma\in\{\comp,\allowbreak \stb,\allowbreak \pref,\allowbreak \semi,\allowbreak \stage\}$.
\end{example}

CAFs are AFs in which each argument is assigned a claim, and thus constitute a straightforward generalization of AFs \cite{DvorakW20}.

\begin{definition} \label{def:CAF}
	A claim-augmented argumentation framework (CAF) is a triple $(\Args,\Att,\cl)$ where $(\Args,\Att)$ is an AF and $\cl \colon \Args \to \Claims$ is a function that maps arguments to %
	claims. 
	The claim-function 
	can be extended to sets of arguments,
	i.e., $\cl(E)=\{\cl(a)\mid a\in E\}$. 
	A well-formed CAF (wfCAF) is a CAF $(\Args,\Att,\cl)$ in which all arguments with the same claim attack the same arguments, i.e., for all $a,b \in \Args$ with $\cl(a) = \cl(b)$ we have that $\{c \mid (a,c) \in \Att\} = \{c \mid (b,c) \in \Att\}$. 
\end{definition}

The semantics of CAFs are based on those of AFs.

\begin{definition}
	Let $F = (\Args,\Att,\cl)$ be a CAF. The claim-based variant of a semantics $\sigma$ is defined as $\sigma\!_c(F) = \{\cl(S) \mid S\in\sigma((\Args,\Att))\}$.
\end{definition}

\begin{example} \label{ex:CAF}
	Consider the CAF $F$ shown on the left in Figure~\ref{fig:intro}. Formally,  $F = (A,R,\cl)$ with $A = \{a,a',b\}$, $R = \{(a,b),(a',b),(b,a)\}$, $\cl(a) = \cl(a') = \alpha$, and $\cl(b) = \beta$.  $F$ is well-formed and the underlying AF of $F$ was investigated in Example~\ref{ex:AF}. From there we can infer that, e.g.,
	$\cf\!_c(F) = \{\emptyset,\{\alpha\},\{\beta\}\}$, 
	$\adm_c(F) = \{\emptyset, \{\alpha\}\}$, 
	$\naive_c(F) = \{\{\alpha\},\{\beta\}\}$, and
	$\stb\!_c(F) = \{\{\alpha\}\}$.
\end{example}

Well-known basic relations between different AF semantics $\sigma$ also hold for $\sigma_c$: $\stb_c(F) \subseteq \semi_c(F) \subseteq \pref_c(F) \subseteq \adm_c(F)$ as well as $\stb_c(F) \subseteq \stage_c(F) \subseteq \naive_c(F) \subseteq \cf\!_c(F)$ \cite{DvorakRW20c}.  

Note that the semantics $\sigma \in \{ \naive,\stb,\pref,\semi,\stage\}$ employ argument maximization and result in incomparable extensions on regular AFs: for all $S,T \in \sigma(F)$, $S \subseteq T$ implies $S = T$. This property is referred to as I-maximality, and is defined analogously for CAFs:

\begin{definition}
	$\sigma\!_c$ is I-maximal for a class $\mathcal{C}$ of CAFs if, for all CAFs $F \in \mathcal{C}$ and all $S,T \in \sigma\!_c(F)$, $S \subseteq T$ implies $S = T$.
\end{definition}

Table~\ref{tab:imax_CAFwfCAF} shows I-maximality properties of CAFs \cite{DvorakRW20c}, 
revealing an important property of wfCAFs compared to general CAFs: I-maximality is preserved in all semantics except $\naive_c$, implying natural behavior of these maximization-based semantics 
analogous to regular AFs; see, e.g.,  \cite{TorreVesic2017} for a general discussion of such properties.

\begin{table}
	\centering
	\begin{tabular}{l|*{5}{c}}
		& $\naive\!_c$ & $\stable\!_c$ & $\pref\!_c$ & $\semi\!_c$ & $\stage\!_c$\\
		\hline 
		$\CAF$ & \textsf{x} & \textsf{x} & \textsf{x} & \textsf{x} & \textsf{x}\\
		$\wfCAF$ & \textsf{x} & $\checkmark$ & $\checkmark$ & $\checkmark$ & $\checkmark$
	\end{tabular}
	\caption{I-maximality of CAFs.}
	\label{tab:imax_CAFwfCAF}
\end{table}  

Regarding computational complexity, we consider the following standard decision problems pertaining to CAF-semantics $\sigma\!_c$:
\begin{itemize}
	\item \emph{Credulous Acceptance} ($\Cred_{\sigma}^\CAF$): Given a 
	CAF $F$ and claim $\alpha$, is $\alpha$ contained in some $S \in \sigma\!_c(F)$?
	\item \emph{Skeptical Acceptance} ($\Skept_{\sigma}^\CAF$): Given a 
	CAF $F$ and claim $\alpha$, is $\alpha$ contained in each $S\in \sigma\!_c(F)$?
	\item \emph{Verification} ($\Ver_{\sigma}^\CAF$): Given a 
	CAF $F$ and a set of claims $S$, is $S \in \sigma\!_c(F)$?
\end{itemize}
We furthermore consider these reasoning problems restricted to wfCAFs and denote them by $\Cred_{\sigma}^\wfCAF$, $\Skept_{\sigma}^\wfCAF$, and $\Ver_{\sigma}^\wfCAF$. 
Table~\ref{tab:complexity_CAFwfCAF} shows the complexity of these problems \cite{DvorakW20, DvorakGRW21}. Here we see that the complexity of the verification problem drops by one level in the polynomial hierarchy when comparing general CAFs to wfCAFs. This is an important advantage of wfCAFs, as a lower complexity in the verification problem allows for a more efficient enumeration of claim-extensions (cf.\  \cite{DvorakW20}).

\setlength{\tabcolsep}{4pt}
\renewcommand{\arraystretch}{1.1}
\begin{table}
	\centering
	\begin{tabular}{c|*{5}{c} }
		$\sigma$ &  $\Cred_{\sigma}^\Delta$ & $\Skept_{\sigma}^\Delta$ & $\Ver_{\sigma}^\CAF$ & $\Ver_{\sigma}^\wfCAF$ \\
		\hline 
		$\cf$ & in $\P$&  trivial &  $\NP$-c & in $\P$ \\
		$\adm$ & $\NP$-c & trivial &  $\NP$-c & in $\P$ \\
		$\comp$ & $\NP$-c & $\P$-c &  $\NP$-c & in $\P$ \\
		$\naive$ & in $\P$ & $\coNP$-c & $\NP$-c & in $\P$ \\
		$\stable$ & $\NP$-c & $\coNP$-c & $\NP$-c & in P \\
		$\pref$& $\NP$-c & $\PiP{2}$-c & $\SigmaP{2}$-c & $\coNP$-c \\
		$\semi/\stage$ & $\SigmaP{2}$-c & $\PiP{2}$-c & $\SigmaP{2}$-c & $\coNP$-c \\
	\end{tabular}
	\caption{Complexity of CAFs ($\Delta\in\{\CAF,\wfCAF\}$).}
	\label{tab:complexity_CAFwfCAF}
\end{table}

\section{Preference-based CAFs} \label{sec:pcafs}

As discussed in the previous sections, wfCAFs are a natural subclass of CAFs with advantageous properties in terms of I-maximality and computational complexity. 
However, when resolving preferences among arguments the resulting CAFs are typically no longer well-formed (cf.~Example~\ref{ex:intro}).
In order to study preferences under a claim-centric view we introduce preference-based CAFs. 
These frameworks enrich the notion of wfCAFs with the concept of argument strength in terms of preferences. 
Our main goals are then to understand the effect of resolved preferences on the structure of the underlying wfCAF on the one hand, and to determine whether 
the advantages of wfCAFs are maintained on the other hand. Given this motivation, it is reasonable to consider the impact of preferences on \emph{well-formed} CAFs only. 

\begin{definition} \label{def:PCAF}
	A preference-based claim-augmented argumentation framework (PCAF) is a quadruple $F = (\Args,\Att,\cl,\succ)$ where $(\Args,\Att,\cl)$ is a well-formed CAF and $\succ$ is an asymmetric preference relation over $\Args$. 
\end{definition}

Note that 
preferences in PCAFs are not required to be transitive. While transitivity of preferences is often assumed in argumentation \cite{AmgoudV14,KaciTV18}, it cannot always be guaranteed in practice \cite{KaciEtAl2021hofa2}. 
However, we will consider the effect of transitive orderings when applicable.

If $a$ and $b$ are arguments and $a \succ b$ holds then we say that $a$ is stronger than $b$. 
But what effect should this ordering have? How should this influence, e.g., the set of admissible arguments? One possibility is to remove all attacks from weaker to stronger arguments in our PCAF, and to then determine the set of admissible arguments in the resulting CAF. This altering of attacks in a PCAF based on its preference-ordering is called a reduction. 
The literature describes four such reductions for regular AFs \cite{KaciTV18,AmgoudC02,AmgoudV14}. Following \cite{KaciTV18} we next recall these reductions.

\begin{figure*}[t]
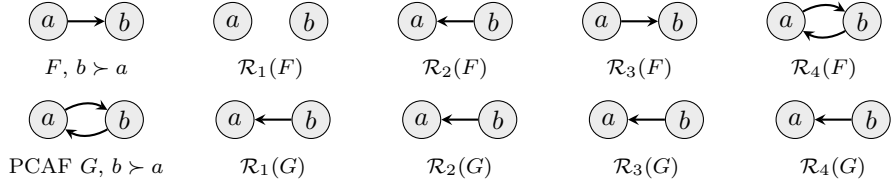

	\captionsetup[subfigure]{labelformat=empty}
	\centering
	\begin{subfigure}{0.19\textwidth}
		\centering
		\tikz{
			\node[arg] (a) at (0,0) {$a$};
			\node[arg] (b) at (1,0) {$b$};
			\draw[attack] 
			(a) edge (b);
		}
		\caption{$F$, $b\succ a$}
		\label{fig:beforeReductionSingle}
	\end{subfigure}
	\hfill
	\begin{subfigure}{0.19\textwidth}
		\centering
		\tikz{
			\node[arg] (a) at (0,0) {$a$};
			\node[arg] (b) at (1,0) {$b$};
		}
		\caption{$\red{1}(F)$}
		\label{fig:afterReduction1Single}
	\end{subfigure}
	\hfill
	\begin{subfigure}{0.19\textwidth}
		\centering
		\tikz{
			\node[arg] (a) at (0,0) {$a$};
			\node[arg] (b) at (1,0) {$b$};
			\draw[attack] 
			(b) edge (a);
		}
		\caption{$\red{2}(F)$}
		\label{fig:afterReduction2Single}
	\end{subfigure}
	\hfill
	\begin{subfigure}{0.19\textwidth}
		\centering
		\tikz{
			\node[arg] (a) at (0,0) {$a$};
			\node[arg] (b) at (1,0) {$b$};
			\draw[attack] 
			(a) edge(b);
		}
		\caption{$\red{3}(F)$}
		\label{fig:afterReduction3Single}
	\end{subfigure}
	\hfill
	\begin{subfigure}{0.19\textwidth}
		\centering
		\tikz{
			\node[arg] (a) at (0,0) {$a$};
			\node[arg] (b) at (1,0) {$b$};
			\draw[attack] 
			(a) edge [bend left] (b)
			(b) edge [bend left] (a);
		}
		\caption{$\red{4}(F)$}
		\label{fig:afterReduction4Single}
	\end{subfigure}
	\hfill
	\par\medskip
	\begin{subfigure}{0.19\textwidth}
		\centering
		\tikz{
			\node[arg] (a) at (0,0) {$a$};
			\node[arg] (b) at (1,0) {$b$};
			\draw[attack] 
			(a) edge [bend left] (b)
			(b) edge [bend left] (a);
		}
		\caption{PCAF $G$, $b \succ a$}
		\label{fig:beforeReductionDouble}
	\end{subfigure}
	\hfill
	\begin{subfigure}{0.19\textwidth}
		\centering
		\tikz{
			\node[arg] (a) at (0,0) {$a$};
			\node[arg] (b) at (1,0) {$b$};
			\draw[attack] 
			(b) edge (a);
		}
		\caption{$\red{1}(G)$}
		\label{fig:afterReduction1Double}
	\end{subfigure}
	\hfill
	\begin{subfigure}{0.19\textwidth}
		\centering
		\tikz{
			\node[arg] (a) at (0,0) {$a$};
			\node[arg] (b) at (1,0) {$b$};
			\draw[attack] 
			(b) edge (a);
		}
		\caption{$\red{2}(G)$}
		\label{fig:afterReduction2Double}
	\end{subfigure}
	\hfill
	\begin{subfigure}{0.19\textwidth}
		\centering
		\tikz{
			\node[arg] (a) at (0,0) {$a$};
			\node[arg] (b) at (1,0) {$b$};
			\draw[attack] 
			(b) edge (a);
		}
		\caption{$\red{3}(G)$}
		\label{fig:afterReduction3Double}
	\end{subfigure}
	\hfill
	\begin{subfigure}{0.19\textwidth}
		\centering
		\tikz{
			\node[arg] (a) at (0,0) {$a$};
			\node[arg] (b) at (1,0) {$b$};
			\draw[attack] 
			(b) edge (a);
		}
		\caption{$\red{4}(G)$}
		\label{fig:afterReduction4Double}
	\end{subfigure}
	\caption{Effect of the four reductions on the attack relation between two arguments.}
	\label{fig:reductionVisualization}
\end{figure*}

\begin{definition} \label{def:reductions}
	Given a PCAF $F = (\Args,\Att,\cl,\succ)$, a corresponding CAF $\red{i}(F) = (\Args,\Def,\cl)$ is constructed via Reduction~$i$, where $i \in \{1,2,3,4\}$, as follows:
	\begin{itemize}
		\item $i = 1$: $\forall a,b\in \Args:\ (a,b)\in \Def \Leftrightarrow (a,b)\in \Att,b\not \succ a$
		\item $i = 2$: $\forall a,b\in \Args:\ (a,b)\in \Def \Leftrightarrow ((a,b)\in \Att,b\not \succ a) \allowbreak \lor \allowbreak ((b,a)\in \Att, (a,b)\notin \Att, a\succ b)$
		\item $i = 3$: $\forall a,b\in \Args:\ (a,b)\in \Def \Leftrightarrow ((a,b)\in \Att,b\not \succ a) \allowbreak \lor \allowbreak ((a,b) \in \Att, (b,a) \not\in \Att)$ 
		\item $i = 4$: $\forall a,b\in \Args:\ (a,b)\in \Def \Leftrightarrow ((a,b)\in \Att,b\not \succ a) \allowbreak \lor \allowbreak ((b,a)\in \Att, (a,b)\notin \Att, a\succ b) \lor ((a,b) \in \Att, (b,a) \not\in \Att)$ 
	\end{itemize}
\end{definition}

Figure~\ref{fig:reductionVisualization} visualizes the above reductions. Intuitively, Reduction~$1$ removes attacks that contradict the preference ordering while Reduction~$2$ reverts such attacks. Reduction~$3$ removes attacks that contradict the preference ordering, but only if the weaker argument is attacked by the stronger argument also. Reduction~$4$ can be seen as a combination of Reductions~$2$ and~$3$. 
Observe that all four reductions are polynomial time computable with respect to the input PCAF.

The semantics for PCAFs can now be defined in a straightforward way: first, one of the four reductions is applied to the given PCAF; then, CAF-semantics are applied to the resulting CAF.

\begin{definition}
	Let $F$ be a PCAF and let $i \in \{1,2,3,4\}$. The preference-claim-based variant of a semantics $\sigma$ relative to Reduction~$i$ is defined as $\sigma_p^i(F) = \sigma_c(\red{i}(F))$.
\end{definition}

\begin{example} \label{ex:PCAF}
	Let $F = (A,R,\cl,\succ)$ be the PCAF where $A = \{a,a',b\}$, $R = \{(a,b),(a',b),(b,a)\}$, $\cl(a) = \cl(a') = \alpha$, $\cl(b) = \beta$, and $b \succ a'$. The underlying CAF $(A,R,\cl)$ of $F$ was examined in Example~\ref{ex:CAF}. 
	
	$\red{1}(F) = (A,R',\cl)$ with $R' = \{(a,b),(b,a)\}$.  $\red{1}(F)$ is depicted on the right in Figure~\ref{fig:intro}. It can be verified that, e.g.,
	$\adm_p^1(F) = \adm_c(\red{1}(F)) = \{\{\emptyset, \{\alpha\},\{\beta\},\{\alpha,\beta\}\}$
	and 
	$\stb_p^1(F) = \{\{\alpha\},\{\alpha,\beta\}\}$.
	
	Indeed, the choice of reduction can %
	influence the extensions of a PCAF. For example, $\red{2}(F) = (A,R'',\cl)$ with $R'' = \{(a,b),(b,a),(b,a')\}$, $\adm_p^2(F) = \{\emptyset,\{\alpha\},$ $\{\beta\}\}$, and $\stable_p^2(F) = \{\{\alpha\},\{\beta\}\}$.
\end{example}
It is easy to see that basic relations between semantics carry over from CAFs, as, if we have $\sigma_c(F) \subseteq \tau_c(F)$ for two semantics $\sigma,\tau$ and all CAFs $F$, then also $\sigma_p^i(F) \subseteq  \tau_p^i(F)$ for all PCAFs $F$.
It thus holds that for all $i \in \{1,2,3,4\}$, $\stb_p^i(F) \subseteq \semi_p^i(F) \subseteq \pref_p^i(F) \subseteq \adm_p^i(F)$ as well as $\stb_p^i(F) \subseteq \stage_p^i(F) \subseteq \naive_p^i(F) \subseteq \cf_p^i(F)$.

\begin{remark}
	In this paper we require the underlying CAF of a PCAF to be well-formed. The reason for this is that we are interested in whether the benefits of well-formed CAFs are preserved when preferences have to be taken into account. Even from a technical perspective, admitting PCAFs with a non-well-formed underlying CAF is not very interesting with respect to the questions addressed in this paper. Indeed, any CAF could be obtained from such general PCAFs, regardless of which preference reduction we are using, by simply specifying the desired CAF and an empty preference relation. Thus, such general PCAFs have the same properties regarding I-maximality and complexity as general CAFs. %
\end{remark}

\section{Resulting CAF Classes} \label{sec:characterization}

Our first step towards understanding the effect of preferences on wfCAFs is to examine the impact of resolving preferences on the structure of the underlying CAF. To this end, we consider four new CAF classes which are obtained from applying the reductions of Definition~\ref{def:reductions} to PCAFs.

\begin{definition} \label{def:images}
	$\Image{i}$ denotes the set of CAFs that can be obtained by applying Reduction~$i$ to PCAFs, i.e., $\Image{i} = \{\red{i}(F) \mid F \text{ is a PCAF} \}$. 
\end{definition}

It is easy to see that $\Image{i}$, with $i \in \{1,2,3,4\}$, contains all wfCAFs (we can simply specify the desired wfCAF and an empty preference relation). However, not all CAFs are contained in $\Image{i}$. For example, $F = (\{a,b\},\allowbreak \{(a,b),\allowbreak (b,a)\}, \cl)$ with $\cl(a) = \cl(b)$ can not be obtained from a PCAF $F'$: such $F'$ would need to contain either $(a,b)$ or $(b,a)$. But then, since the underlying CAF of a PCAF must be well-formed, $F'$ would have to contain a self-attack which can not be removed by any of the reductions. This is enough to conclude that the four new classes are located in-between wfCAFs and general CAFs:

\begin{restatable}{proposition}{fourImagesLieBetweenCAFandwfCAF}
	\label{prop:fourImagesLieBetweenCAFandwfCAF}
	Let $\setOfAllCAFs$ be the set of all CAFs and $\setOfAllwfCAFs$ the set of all wfCAFs. For all $i \in \{1,2,3,4\}$ it holds that $\setOfAllwfCAFs \subset \Image{i} \subset \setOfAllCAFs$. 
\end{restatable}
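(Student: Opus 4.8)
The plan is to prove the two strict inclusions $\setOfAllwfCAFs \subset \Image{i}$ and $\Image{i} \subset \setOfAllCAFs$ separately, for each $i \in \{1,2,3,4\}$.

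For the first inclusion $\setOfAllwfCAFs \subseteq \Image{i}$, I would use the observation already stated in the text: given any wfCAF $(\Args,\Att,\cl)$, form the PCAF $F = (\Args,\Att,\cl,\emptyset)$ with the empty preference relation. Since no preferences hold, the condition $b \not\succ a$ is vacuously satisfied for every pair, and the disjuncts involving $a \succ b$ never fire, so each reduction $\red{i}$ leaves the attack relation unchanged; hence $\red{i}(F) = (\Args,\Att,\cl)$ and the wfCAF lies in $\Image{i}$. One should check this claim reduction-by-reduction: for Reductions~$2$ and~$4$ the extra disjunct $((b,a)\in\Att,(a,b)\notin\Att,a\succ b)$ is empty because $a\succ b$ never holds, and for Reductions~$3$ and~$4$ the disjunct $((a,b)\in\Att,(b,a)\notin\Att)$ only re-asserts attacks that were already present, so no new edges appear. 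This confirms $\red{i}(F)=(\Args,\Att,\cl)$.

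For the second inclusion $\Image{i} \subseteq \setOfAllCAFs$, there is nothing to prove since every element of $\Image{i}$ is by construction a CAF. The strictness of both inclusions is where the actual work lies, and I would handle them with explicit witnesses. For strictness of $\Image{i} \subset \setOfAllCAFs$, I would reuse precisely the counterexample given in the text: the CAF $F = (\{a,b\},\{(a,b),(b,a)\},\cl)$ with $\cl(a)=\cl(b)$. The argument is that any PCAF $F'$ with $\red{i}(F')=F$ must, by well-formedness of its underlying CAF, satisfy that $a$ and $b$ (having the same claim) attack the same arguments; since $F$ contains a mutual attack between $a$ and $b$, the underlying wfCAF would be forced to contain a self-attack on both $a$ and $b$, and no reduction can create or remove a self-attack (as $a\not\succ a$ always holds and the reversion/duplication disjuncts require distinct arguments). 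Hence $F$ cannot be in the image, so $\Image{i} \neq \setOfAllCAFs$.

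The main obstacle is the strictness of the first inclusion, $\setOfAllwfCAFs \subset \Image{i}$: I must exhibit, for each $i$, a CAF in $\Image{i}$ that is \emph{not} well-formed. Example~\ref{ex:intro} and Example~\ref{ex:PCAF} already supply such a witness for Reduction~$1$, namely $\red{1}(F)$ with attacks $\{(a,b),(b,a)\}$ and $\cl(a)=\cl(a')=\alpha$, $\cl(b)=\beta$: here $a$ and $a'$ share a claim but $a$ attacks $b$ while $a'$ does not, so the result is not well-formed. The care needed is that a single witness may not serve all four reductions simultaneously, since the reductions act differently (deletion versus reversion). I would therefore either verify that a common witness (a wfCAF plus a preference pair breaking the equal-outgoing-attacks condition) works across all four, or provide a small tailored PCAF for each $i$; inspecting Figure~\ref{fig:reductionVisualization} shows that in every reduction a preference can be chosen so that of two same-claim arguments only one retains its outgoing attack, which suffices to break well-formedness. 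Assembling these witnesses completes the strictness argument and hence the proposition.
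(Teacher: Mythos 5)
Your proposal follows essentially the same route as the paper: the empty preference relation for $\setOfAllwfCAFs \subseteq \Image{i}$, the mutual-attack/same-claim CAF together with the ``self-attacks are never created or removed'' observation for $\Image{i} \subset \setOfAllCAFs$, and an explicit non-well-formed image for strictness of the first inclusion. One caveat on the last point: your hedge about needing tailored witnesses is genuinely necessary, since the Example~\ref{ex:PCAF} witness (with $b \succ a'$) is left unchanged by Reduction~3 and yields a well-formed CAF under Reduction~4; the paper sidesteps this with a single uniform witness, the PCAF on $\{a,b\}$ with all four attacks $(a,a),(a,b),(b,a),(b,b)$, $\cl(a)=\cl(b)$ and $b \succ a$, for which all four reductions delete exactly $(a,b)$ and thus produce the same non-well-formed CAF.
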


Furthermore, the new classes are all distinct from each other, i.e., we are indeed dealing with \emph{four} new CAF classes:

\begin{restatable}{proposition}{imagesAreIncomparable} \label{prop:imagesAreIncomparable}
	For all $i \in \{1,2,4\}$ and all $j \in \{1,2,3,4\}$ such that $i \neq j$ it holds that $\Image{i} \not\subseteq \Image{j}$ and $\Image{3} \subset \Image{i}$.
\end{restatable}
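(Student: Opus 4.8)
The plan is to establish the two assertions separately: the strict inclusions $\Image{3} \subset \Image{i}$ for $i \in \{1,2,4\}$, and the non-inclusions $\Image{i} \not\subseteq \Image{j}$ for all distinct $i \in \{1,2,4\}$, $j$. The key tool throughout is to analyze, for a \emph{fixed} pair of arguments $a,b$ with $\cl(a)=\cl(b)$ or $\cl(a)\neq\cl(b)$, which local attack configurations each reduction can and cannot produce from a well-formed source PCAF. Since well-formedness forces arguments sharing a claim to have identical outgoing attacks, the decisive structural signatures will be certain \emph{forbidden local patterns}: a pair of distinct-claim arguments connected by a single attack that is \emph{not} present in $\Att$ as a symmetric conflict, or a same-claim pair carrying an asymmetric attack, etc. I would first tabulate, reading off Definition~\ref{def:reductions} and Figure~\ref{fig:reductionVisualization}, exactly which of the four local outcomes (no edge, $a\to b$ only, $b\to a$ only, both) each reduction yields from each input local configuration (no attack, one-way attack, mutual attack) together with each admissible preference on that pair.

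For the inclusions $\Image{3}\subseteq\Image{i}$, the plan is to show that any CAF realizable by Reduction~$3$ is also realizable by the others. The natural approach is to take $G=\red{3}(F)$ for a PCAF $F=(\Args,\Att,\cl,\succ)$ and exhibit a (possibly modified) PCAF $F'$ with $\red{i}(F')=G$. Since Reduction~$3$ only ever \emph{deletes} attacks (it never reverts), and it deletes an attack $(a,b)$ precisely when $b\succ a$ and $a\to b$ is part of a mutual attack, I expect the cleanest route is to keep the same underlying wfCAF $(\Args,\Att,\cl)$ and adjust the preference relation so that Reductions $1$, $2$, $4$ produce the identical edge set. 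Because Reductions $2$ and $4$ revert rather than delete in the one-way case, I would engineer $\succ$ (e.g.\ restricting preferences to mutually attacking pairs) so that no reversion is triggered and the behavior collapses onto that of Reduction~$3$; one must verify this can be done while keeping $\succ$ asymmetric. Strictness of $\Image{3}\subset\Image{i}$ then follows by producing a single CAF in $\Image{i}\setminus\Image{3}$ — for instance a framework containing a reverted edge or an asymmetric single attack that Reduction~$3$ can never create, since Reduction~$3$ preserves every one-way attack and only removes edges within mutual pairs.

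For the non-inclusions I would, for each target pair $(i,j)$, construct a small witness CAF $C \in \Image{i}\setminus\Image{j}$. The standard technique is the pre-image argument already illustrated in the paragraph before Proposition~\ref{prop:fourImagesLieBetweenCAFandwfCAF}: assume $C=\red{j}(F')$ for some well-formed PCAF $F'$, then trace back what $\Att$ and $\succ$ of $F'$ must have contained to produce each edge of $C$ under Reduction~$j$, and derive a contradiction with well-formedness (typically by forcing two same-claim arguments to have differing outgoing attacks, or by forcing a self-attack that no reduction removes). The discriminating features I expect to exploit are: Reduction~$1$ can delete an edge of a mutual pair without reverting, which $2,3,4$ handle differently; Reductions $2$ and $4$ can \emph{revert} a one-way attack, producing an edge $b\to a$ where $\Att$ had only $a\to b$, which $1$ and $3$ can never do; and Reduction~$4$ combines reversion with the ``only-if-mutual'' deletion of $3$, giving a signature that neither $2$ nor $3$ alone reproduces.

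The main obstacle will be the reversion cases for Reductions~$2$ and~$4$, and in particular distinguishing $\Image{2}$ from $\Image{4}$ (and vice versa). Because both can revert attacks, a naive single-edge witness may be realizable by both; the contradiction must instead be forced using a configuration of \emph{several} edges tied together by the well-formedness constraint, so that matching one edge under the wrong reduction forces a conflicting choice elsewhere. I would therefore spend the most care designing two- or three-argument gadgets (at least one pair sharing a claim) where the outgoing-attack equality mandated by Definition~\ref{def:CAF} is incompatible with any preference assignment yielding $C$ under $\red{j}$ but compatible under $\red{i}$. Verifying asymmetry of the reconstructed $\succ$ and checking the handful of local cases exhaustively is routine once these gadgets are fixed; the creative step is choosing gadgets whose forbidden-pattern signatures cleanly separate all the required pairs simultaneously.
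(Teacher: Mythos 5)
Your plan matches the paper's proof: the inclusion $\Image{3}\subseteq\Image{i}$ is obtained exactly as you describe, by keeping the underlying wfCAF and restricting $\succ$ to the mutually attacking pairs where Reduction~$3$ actually deleted an edge (so no reversion is triggered), and the non-inclusions are shown by the pre-image argument you outline, with the self-attack trick doing the separating work. The only remaining step is to write down the concrete witnesses, and the paper's are precisely the two-argument same-claim gadgets you anticipate: $\{(a,b),(b,b)\}$ with $b\succ a$, whose images under Reductions $1$, $2$, and $4$ each lie outside all the other classes because the unremovable self-attack $(b,b)$ forces $(a,b)$ into any well-formed pre-image.
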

\begin{proofsketch}
	Figure~\ref{fig:imagesAreIncomparable} shows CAFs that are in only one of $\Image{1}$, $\Image{2}$, and $\Image{4}$. Consider the PCAF $F = (\{a,b\}, \{(a,b),(b,b)\}, \cl, \succ)$ with $\cl(a) = \cl(b) = \alpha$ and $b \succ a$. Then $\red{1}(F)$, $\red{2}(F)$, and $\red{4}(F)$ are the CAFs of Figure~\ref{fig:imagesAreIncomparable}. Since self-attacks are not removed or introduced by any  reduction, and the underlying CAF must be well-formed, $F$ is the only PCAF from which $\red{1}(F)$, $\red{2}(F)$, and $\red{4}(F)$ can be obtained. Note that $\red{3}(F)$ is simply the underlying CAF of $F$. 
	$\Image{3} \subset \Image{i}$ follows by the fact that if an attack $(a,b)$ is removed by Reduction~3 from some PCAF $G$, then $(b,a) \in G$. In this case, all reductions behave in the same way (cf.\ Definition~\ref{def:reductions} or Figure~\ref{fig:reductionVisualization}).
\end{proofsketch}
\begin{figure}[t]
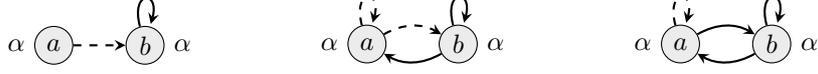

	\centering
	\begin{subfigure}{0.32\columnwidth}
		\centering
		\tikz{
			\node[arg, label={left}:$\alpha$] (a) at (0,0) {$a$};
			\node[arg, label={right}:$\alpha$] (b) at (1.2,0) {$b$};
			\draw[attack] 
			(a) edge [dashed] (b)
			(b) edge [loop above] (b);
		}
	\end{subfigure}
	\hfill
	\begin{subfigure}{0.32\columnwidth}
		\centering
		\tikz{
			\node[arg, label={left}:$\alpha$] (a) at (0,0) {$a$};
			\node[arg, label={right}:$\alpha$] (b) at (1.2,0) {$b$};
			\draw[attack] 
			(b) edge [bend left] (a)
			(a) edge [bend left, dashed] (b)
			(a) edge [dashed, loop above] (a)
			(b) edge [loop above] (b);
		}
	\end{subfigure}
	\hfill
	\begin{subfigure}{0.32\columnwidth}
		\centering
		\tikz{
			\node[arg, label={left}:$\alpha$] (a) at (0,0) {$a$};
			\node[arg, label={right}:$\alpha$] (b) at (1.2,0) {$b$};
			\draw[attack] 
			(a) edge [bend left] (b)
			(b) edge [bend left] (a)
			(b) edge [loop above] (b)
			(a) edge [dashed, loop above] (a);
		}
	\end{subfigure}
	\caption{CAFs contained only in $\Image{1}$, $\Image{2}$, and $\Image{4}$ respectively. Solid arrows are attacks, dashed arrows indicate where attacks are missing for the CAF to be well-formed.}
	\label{fig:imagesAreIncomparable}
\end{figure}
While the classes
$\Image{1}$, $\Image{2}$, and $\Image{4}$ are incomparable
we observe $\Image{3} \subset \Image{i}$ which reflects that Reduction~3 is the most conservative of the four reductions, removing attacks only when there is a counter-attack from the stronger argument.

We now know that applying preferences to wfCAFs results in four distinct CAF-classes that lie in-between wfCAFs and general CAFs. It is still unclear, however, how to determine whether some CAF %
belongs to one of these classes or not. Especially for $\Image{2}$ and $\Image{4}$ this is not straightforward, since Reductions~2 and~4 not only remove but also introduce attacks and therefore allow for many possibilities 
to obtain a particular CAF as result. 
We tackle this problem by characterizing the new classes via the so-called wf-problematic part of a CAF.

\begin{definition}
	A pair of arguments $(a,b)$ is wf-problematic in a given CAF  $F = (\Args, \Att, \cl)$ if $a,b \in \Args$, $(a,b) \not\in \Att$, and there is $a' \in \Args$ with $\cl(a') = \cl(a)$ and $(a',b) \in \Att$. $\problematicPart(F) = \{(a,b) \mid \allowbreak  (a,b) \allowbreak  \text{ is wf-problematic in } F\}$ is called the wf-problematic part of $F$. 
\end{definition}

Intuitively, the wf-problematic part of a CAF $F$ consists of those attacks that are missing for $F$ to be well-formed (cf.\ Figure~\ref{fig:imagesAreIncomparable}). 
Indeed, $F$~is a wfCAF if and only if $\problematicPart(F) = \emptyset$. The four new classes can be characterized as follows:

\begin{restatable}{proposition}{characterizations} \label{prop:characterizations}
	Let $F = (A,R,\cl)$ be a CAF. Then
	\begin{itemize}
		
		\item $F \in \Image{1}$ iff
		$(a,b) \in \problematicPart(F)$ implies $(b,a) \not\in \problematicPart(F)$;
		
		\item $F \in \Image{2}$ iff
		there are no arguments $a,a',b,b'$ in $F$ with $\cl(a) = \cl(a')$ and $\cl(b) = \cl(b')$ such that $(a,b) \in \problematicPart(F)$, $(b,a) \not\in R$, $(a',b) \in R$, and either $(b,a') \in R$ or $((a',b') \not\in R$ and $(b',a') \not\in R)$;
		
		\item $F \in \Image{3}$ iff
		$(a,b) \in \problematicPart(F)$ implies $(b,a) \in R$;
		
		\item $F \in \Image{4}$ iff
		there are no arguments $a,a',b,b'$ in $F$ with $\cl(a) = \cl(a')$ and $\cl(b) = \cl(b')$ such that $(a,b) \in \problematicPart(F)$, $(b,a) \not\in R$, $(a',b) \in R$, and either $(b,a') \not\in R$ or $((a',b') \not\in R$ and $(b',a') \not\in R)$.
	\end{itemize}
\end{restatable}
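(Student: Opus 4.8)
The plan is to prove each of the four equivalences by the same two-sided template: for the forward direction I assume a PCAF $F_0 = (A,R_0,\cl,\succ)$ with $\red{i}(F_0) = F$ and read off the stated condition, while for the backward direction I construct such an $F_0$ from $F$. Two facts are used throughout: self-attacks are never added or removed (as $\succ$ is asymmetric, $b \not\succ b$), and the underlying CAF $(A,R_0,\cl)$ is well-formed, so arguments sharing a claim have identical out-neighbourhoods in $R_0$.

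For the forward directions I fix a wf-problematic pair $(a,b) \in \problematicPart(F)$ with a witness $a'$ (so $\cl(a') = \cl(a)$ and $(a',b) \in R$) and analyse how $(a',b)$ can arise under $\red{i}$. For Reductions~1 and~3, which only delete attacks, $R \subseteq R_0$, hence $(a',b) \in R_0$ and well-formedness forces $(a,b) \in R_0$; since $(a,b) \notin R$ this attack was deleted, which under Reduction~1 needs $b \succ a$ and under Reduction~3 needs $b \succ a$ together with $(b,a) \in R_0$. Asymmetry of $\succ$ then rules out $(b,a) \in \problematicPart(F)$ (Reduction~1) and shows the counter-attack $(b,a)$ cannot itself be deleted, so $(b,a) \in R$ (Reduction~3). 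For Reductions~2 and~4 I additionally allow $(a',b)$ to be a reversed attack and split accordingly: if $(a',b)$ is kept (or one-directionally kept, for Reduction~4) then $(a,b) \in R_0$, and combined with $(a,b) \notin R$ and the assumed $(b,a) \notin R$ this contradicts the mutual-attack bookkeeping recorded in Figure~\ref{fig:reductionVisualization}; hence $(a',b)$ must be reversed, giving $(b,a') \in R_0$, $a' \succ b$ and $(a,b) \notin R_0$. I then refute both disjuncts of the forbidden pattern: the disjunct constraining $(b,a')$ follows by computing the status of $(b,a')$ in $R$ directly from $\red{i}$, and the disjunct involving $b'$ (with $\cl(b') = \cl(b)$) follows by transporting $(b,a') \in R_0$ to $(b',a') \in R_0$ through well-formedness and then, using asymmetry of $\succ$, forcing $a'$ and $b'$ to be in conflict in $R$, contradicting the disjunct.

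For the backward directions I build $R_0$ and $\succ$ explicitly. For Reductions~1 and~3, which never reverse attacks, the completion $R_0 = R \cup \problematicPart(F)$ is well-formed—adding exactly the wf-problematic pairs equalises the out-neighbourhoods of same-claim arguments—and declaring $b \succ a$ for every $(a,b) \in \problematicPart(F)$ yields an asymmetric relation precisely because the stated conditions forbid $(a,b)$ and $(b,a)$ from both being problematic (Reduction~1) or guarantee $(b,a) \in R$ lies outside the problematic part (Reduction~3); checking against Definition~\ref{def:reductions} gives $\red{1}(F_0) = \red{3}(F_0) = F$. For Reductions~2 and~4 this completion fails: since these reductions also reverse attacks, $R$ need not contain $R_0$, and (as in the self-loop instance underlying Figure~\ref{fig:imagesAreIncomparable}) a reversed attack in $R$ can create a \emph{spurious} wf-problematic pair that $R_0$ must not realise, as naively adding it could demand an irremovable self-attack. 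Here I instead classify each wf-problematic pair $(a,b)$ by whether its counter-attack is present: pairs with $(b,a) \in R$ are realised in deletion style by placing $(a,b)$ in $R_0$, setting $b \succ a$, and letting $(b,a)$ re-appear as a reversal (so $(b,a) \notin R_0$); pairs with $(b,a) \notin R$—the only ones constrained by the forbidden pattern—are realised by keeping $(a,b) \notin R_0$ and reconstructing the witness $(a',b)$ as a reversal of some $(b,a') \in R_0$ with $a' \succ b$.

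I expect the backward direction for Reductions~2 and~4 to be the main obstacle: one must show that these local deletion/reversal choices can be made simultaneously across all problematic pairs so that $R_0$ stays well-formed and $\succ$ stays asymmetric, and that no attack of $R$ is forced to be both original and reversed. The conditions in the statement are tailored exactly to this—requiring, for each problematic pair with $(b,a) \notin R$ and witness $a'$, that $(b,a') \notin R$ (respectively $(b,a') \in R$ for Reduction~4) and that every $b'$ with $\cl(b') = \cl(b)$ is in conflict with $a'$—so the technical heart is to verify that these requirements make the reversal-style reconstruction globally consistent and to confirm $\red{i}(F_0) = F$ by checking each pair of arguments against Definition~\ref{def:reductions}.
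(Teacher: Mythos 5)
Your handling of the forward directions of all four items, and of the backward directions for Reductions~1 and~3 (complete $R$ to $R_0 = R \cup \problematicPart(F)$ and read $\succ$ off the added pairs), matches the paper's proof and is fine. The gap is in the backward direction for Reductions~2 and~4 --- exactly the step you flag as the technical heart and leave unverified --- and the scheme you propose there does not survive verification. You classify each wf-problematic pair $(a,b)$ by whether $(b,a) \in R$ and realise the two cases locally: deletion style ($(a,b) \in R_0$, $b \succ a$, $(b,a) \notin R_0$) when $(b,a) \in R$, and reversal style ($(a,b) \notin R_0$, with the witness $(a',b)$ obtained by reversing $(b,a') \in R_0$, so in particular $(a',b) \notin R_0$) when $(b,a) \notin R$. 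These prescriptions collide through well-formedness. Take $\cl(a) = \cl(c) = \cl(c') = \alpha$, $\cl(b) = \beta$, and $R = \{(b,a),(c',b)\}$. Then $\problematicPart(F) = \{(a,b),(c,b)\}$ with common witness $c'$, and the condition of the proposition for $\Image{2}$ holds (for $(a,b)$ the pattern is blocked by $(b,a) \in R$; for $(c,b)$ it is blocked because $(b,c') \notin R$ and the only $\beta$-argument $b$ satisfies $(c',b) \in R$), so $F \in \Image{2}$ must be witnessed. Your case~1 applied to $(a,b)$ demands $(a,b) \in R_0$; your case~2 applied to $(c,b)$ demands $(c,b) \notin R_0$ and $(c',b) \notin R_0$; but well-formedness of $(A,R_0,\cl)$ forces $(a,b) \in R_0 \iff (c,b) \in R_0 \iff (c',b) \in R_0$. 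The correct witness is $R_0 = \{(b,a),(b,c')\}$ with $c' \succ b$: here $(a,b)$ is not realised by deleting anything but is created as a side effect of reversing $(b,c')$, so the relevant dichotomy is not whether $(b,a) \in R$ but whether the witness attack $(a',b)$ is forced to be a reversal.

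The paper's construction is organised around precisely that latter notion: it first marks as \emph{forced} those attacks $(a',b) \in R$ for which some $a$ with $\cl(a) = \cl(a')$ has no conflict with $b$ at all (these must arise by reversal, since Reductions~2 and~4 preserve conflicts), un-reverses exactly those to obtain an intermediate relation $R'$, proves that every pair in $\problematicPart((A,R',\cl))$ then has its counter-attack present in $R'$, and only afterwards completes to a well-formed $R''$ by adding the remaining problematic pairs and defining $\succ$ from the symmetric difference with $R$. Your forward directions would stand as written, but the backward directions for Reductions~2 and~4 need to be rebuilt along these lines; as stated, the local choices are not simultaneously realisable, so the claimed PCAF $F_0$ need not exist.
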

\begin{proofsketch}
	Regarding $\Image{1}$, observe that Reduction~$1$ can only delete but not introduce attacks. If $(a,b) \in \problematicPart(F)$ implies $(b,a) \not\in \problematicPart(F)$ then we can construct a PCAF $F'$ with $R' = R \cup \{(a,b) \mid (a,b) \in \problematicPart(F)\}$ and $b \succ a$ iff $(a,b) \in R' \setminus R$. Observe that $\succ$ is asymmetric. Conversely, a CAF $G$ with arguments $a,b$ such that $(a,b) \in \problematicPart(G)$ and $(b,a) \in \problematicPart(G)$ can not be obtained via Reduction~$1$ from a PCAF $G'$, since $G'$ would have to contain both the attacks $(a,b), (b,a)$ as well as the preferences $b \succ a, a \succ b$. The argument for $\Image{3}$ is similar. 
	
	For $\Image{2}$, suppose there are $a,a',b$ with $\cl(a) = \cl(a')$, $(a,b) \in \problematicPart(F)$, $(b,a) \not\in R$, and $(a',b) \in R$. Assume there is a PCAF $F' = (A,R',\cl, \allowbreak \succ)$ such that $\red{2}(F') = F$. Since Reduction~$2$ can not completely remove conflicts, $(a,b) \not\in R'$ and $(b,a) \not\in R'$. If $(b,a') \in R$, then $(a',b) \in R'$ and $(b,a') \in R'$ since Reduction~2 can not introduce symmetric attacks. But then $(A,R',\cl)$ is not well-formed. Now suppose $(b,a') \not\in R$, but there is some $b'$ with $\cl(b) = \cl(b')$, $(a',b') \not\in~R$, and $(b',a') \not\in~R$. Then also $(a',b') \not\in~R'$ and $(b',a') \not\in~R'$. But since $(a',b) \in~R$ we have either $(a',b) \in~R'$ or $(b,a') \in~R'$, which means that $(A,R',\cl)$ is not well-formed. In all other cases we can construct a PCAF $F'' = (A,R'',\cl,\succ)$ such that $\red{2}(F'') = F$: first revert all attacks $(a',b)$ in $F$ for which there is some $a$ with $\cl(a) = \cl(a')$ and $(a,b) \not\in R$, $(b,a) \not\in R$; then, add all remaining pairs $(a,b)$ that are still wf-problematic as attacks. Define $b \succ a$ iff $(a,b) \in R'' \setminus R$. It can be verified that $(A,R'',\cl)$ is well-formed, $\succ$ is asymmetric, and $\red{2}(F'') = F$. The argument for $\Image{4}$ is similar. 
\end{proofsketch}

The above characterizations give us some insights into the effect of the various reductions on wfCAFs. Indeed, the similarity between the characterizations of $\Image{1}$ and $\Image{3}$, resp.\ $\Image{2}$ and $\Image{4}$, can intuitively be explained by the fact that Reductions~1 and~3 only remove attacks, while Reductions~2 and~4 can also introduce attacks.
Furthermore, Proposition~\ref{prop:characterizations} allows us to decide in polynomial time whether a given CAF~$F$ can be obtained by applying one of the four preference reductions to a PCAF. 

But what happens if we restrict ourselves to transitive preferences? Analogously to $\Image{i}$, by $\ImageTrans{i}$ we denote the set of CAFs obtained by applying Reduction~$i$ to PCAFs with a transitive preference relation. It is clear that $\ImageTrans{i} \subseteq \Image{i}$ for all $i \in \{1,2,3,4\}$. Interestingly, %
the relationship between the classes $\ImageTrans{i}$ is different to that between $\Image{i}$ %
(Proposition~\ref{prop:imagesAreIncomparable}).
Specifically, $\ImageTrans{3}$ is not contained in the other classes. 
Intuitively, this is because, in certain PCAFs $F$, transitivity can force $a_1 \succ a_n$ via $a_1 \succ a_2 \succ \ldots \succ a_n$ such that $(a_n,a_1) \in F$ but $(a_1,a_n) \not\in F$. In this case, only Reduction~3 leaves the attacks between $a_1$ and $a_n$ unchanged.

\begin{restatable}{proposition}{imagesAreIncomparableTransitive} \label{prop:imagesAreIncomparable-transitive}
	For $i,j \in \{1,2,3,4\}$ with $i \neq j$ it holds that $\ImageTrans{i} \not\subseteq \ImageTrans{j}$.
\end{restatable}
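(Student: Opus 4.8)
The plan is to treat the twelve ordered pairs $(i,j)$ with $i\neq j$ in two groups. For the pairs with $i\in\{1,2,4\}$ I would re-use the three CAFs of Figure~\ref{fig:imagesAreIncomparable} that already witness Proposition~\ref{prop:imagesAreIncomparable}. Each of them is produced by applying the respective reduction to the PCAF $F=(\{a,b\},\{(a,b),(b,b)\},\cl,\succ)$ with the single preference $b\succ a$; since a one-element relation is vacuously transitive, $\red{1}(F)$, $\red{2}(F)$, $\red{4}(F)$ already lie in $\ImageTrans{1}$, $\ImageTrans{2}$, $\ImageTrans{4}$, respectively. As each of them lies in only one of $\Image{1},\Image{2},\Image{4}$ (and hence, using $\Image{3}\subseteq\Image{i}$, in none of the other three classes $\Image{j}$), and $\ImageTrans{j}\subseteq\Image{j}$, each is excluded from every $\ImageTrans{j}$ with $j\neq i$. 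This settles $\ImageTrans{i}\not\subseteq\ImageTrans{j}$ for all $i\in\{1,2,4\}$ and $j\neq i$.

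The remaining and genuinely new cases are $\ImageTrans{3}\not\subseteq\ImageTrans{j}$ for $j\in\{1,2,4\}$, which contrast with $\Image{3}\subset\Image{j}$ from Proposition~\ref{prop:imagesAreIncomparable}. Here I would exhibit a single CAF $\hat C$ in $\ImageTrans{3}$ but in none of $\ImageTrans{1},\ImageTrans{2},\ImageTrans{4}$, following the hint in the text: force a long-range preference by transitivity and let it clash with a surviving one-directional attack. Concretely I take arguments $a_1,a_2,a_3$ together with twins $c_1,c_2,c_3$ (sharing the claims of $a_1,a_2,a_3$) and build a well-formed PCAF whose preference relation is exactly the transitive order $a_1\succ a_2\succ a_3$, with $c_2$ attacking $a_1$, $c_3$ attacking $a_2$, $c_1$ attacking only $a_2$, and suitable mutual attacks $a_1\leftrightarrow a_2$, $a_2\leftrightarrow a_3$ plus the one-directional $a_3\to a_1$. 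Applying Reduction~3 leaves $(a_3,a_1)$ untouched (it is not counter-attacked) and removes only the counter-attacks $(a_2,a_1)$ and $(a_3,a_2)$, so that $\hat C$ contains the directed cycle $a_1\to a_2\to a_3\to a_1$ and has $\problematicPart(\hat C)=\{(a_2,a_1),(a_3,a_2)\}$; since $\succ$ is transitive this gives $\hat C\in\ImageTrans{3}$ (consistent with the $\Image{3}$-criterion of Proposition~\ref{prop:characterizations}). The twins $c_2,c_3$ are what make $(a_2,a_1)$ and $(a_3,a_2)$ wf-problematic, whereas $c_1$ is needed only against Reductions~2 and~4.

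To show $\hat C\notin\ImageTrans{j}$ I would assume a transitive PCAF $F'=(A,R',\cl,\succ')$ with $\red{j}(F')=\hat C$ and derive a contradiction. For $j=1$ this is direct: Reduction~1 only deletes, so well-formedness of $R'$ forces $(a_2,a_1),(a_3,a_2)\in R'$, their deletion forces $a_1\succ' a_2$ and $a_2\succ' a_3$, transitivity gives $a_1\succ' a_3$, and the surviving $(a_3,a_1)\in\hat C\subseteq R'$ forces $a_1\not\succ' a_3$, a contradiction. The main obstacle is $j\in\{2,4\}$, since these reductions can \emph{introduce} attacks (reversal for~$2$, doubling for~$4$): a preimage could a priori manufacture the twin-attacks $(c_2,a_1),(c_3,a_2)$ out of $(a_1,c_2),(a_2,c_3)$ without committing to $a_1\succ' a_2$ or $a_2\succ' a_3$, breaking the chain. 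This is exactly where $c_1$ enters. I would first prove that Reductions~$2$ and~$4$ preserve the \emph{undirected} conflict graph (they never delete a conflict entirely and never create one between previously unrelated arguments). Having arranged that $\hat C$ has no conflict between $c_1$ and $c_2$ and none between $c_2$ and $c_3$, this invariant gives $(c_1,c_2),(c_2,c_3)\notin R'$, and well-formedness of $R'$ (the claim groups $\{a_1,c_1\}$ and $\{a_2,c_2\}$ share their out-attacks) then gives $(a_1,c_2),(a_2,c_3)\notin R'$. Hence the twin-attacks cannot arise by reversal/doubling and must already be in $R'$; well-formedness again forces $(a_2,a_1),(a_3,a_2)\in R'$, and a short case analysis of the per-pair behaviour of Reductions~$2$ and~$4$ shows these can vanish in $\hat C$ only if $a_1\succ' a_2$ and $a_2\succ' a_3$. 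Transitivity yields $a_1\succ' a_3$, and a final case analysis on how the one-directional $(a_3,a_1)$ could survive (keeping it needs $a_1\not\succ' a_3$, whereas reversal/doubling would either delete it or create the absent $(a_1,a_3)$) delivers the contradiction. The bulk of the work is this conflict-graph-preservation lemma together with the routine but careful bookkeeping of each reduction on individual argument pairs.
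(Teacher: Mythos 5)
Your proposal is correct and follows essentially the same route as the paper: the cases $i\in\{1,2,4\}$ are discharged by observing that the witnesses from Proposition~\ref{prop:imagesAreIncomparable} use a singleton (hence transitive) preference, and the cases $\ImageTrans{3}\not\subseteq\ImageTrans{j}$ are handled by a six-argument CAF that is, up to relabelling, the paper's Figure~\ref{fig:cafOnlyInImage3Trans} (two mutually attacking pairs along a chain, a surviving one-directional attack from the weakest to the strongest element, and claim-twins enforcing the well-formedness constraints). Your ``conflict-graph preservation'' step for Reductions~2 and~4 is exactly the observation underlying Lemma~\ref{lemma:cfDoesNotChangeForReductions234}, which the paper instead unfolds inline in its case analysis; just note that in a fully written-out version your twins $c_2,c_3$ must inherit \emph{all} outgoing attacks of $a_2,a_3$ for the preimage to be well-formed, which is harmless for the rest of the argument.
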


We will not characterize all four classes $\ImageTrans{i}$. However, capturing $\ImageTrans{1}$ will prove useful when analyzing the computational complexity of PCAFs using Reduction~$1$ (see Section~\ref{sec:complexity}). Note that $\problematicPart(F)$ can be seen as a directed graph, with an edge between vertices $a$ and $b$ whenever $(a,b) \in \problematicPart(F)$. Thus, we may use notions such as paths and cycles in the wf-problematic part of a CAF.

\begin{restatable}{proposition}{characterizationFirstImageTransitive} \label{prop:characterizationImage1-transitive}
	$F \in \ImageTrans{1}$ for a CAF $F$ if and only if 
	(1)~$\problematicPart(F)$ is acyclic and 
	(2)~$(a,b) \in F$ implies that there is no path from $a$ to $b$ in $\problematicPart(F)$. 
\end{restatable}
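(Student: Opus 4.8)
Let me plan a proof for the claim that $F \in \ImageTrans{1}$ iff (1) $\problematicPart(F)$ is acyclic and (2) $(a,b) \in F$ implies there is no path from $a$ to $b$ in $\problematicPart(F)$.

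First, let me understand the setup. Reduction 1 only deletes attacks (when $(a,b)\in R$ but $b \succ a$). So if $\red{1}(F') = F = (A,R,\cl)$ with $F'=(A,R_0,\cl,\succ)$ well-formed and transitive, then $R \subseteq R_0$, and the deleted attacks $R_0 \setminus R$ are exactly those $(a,b)$ with $(a,b)\in R_0$ and $b\succ a$.

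Now, $\problematicPart(F)$ consists of pairs $(a,b)\notin R$ where some $a'$ with $\cl(a')=\cl(a)$ has $(a',b)\in R$. Since $F'$ is well-formed, $(a',b)\in R \subseteq R_0$ forces $(a,b)\in R_0$ (same claim, same targets). So every wf-problematic pair $(a,b)$ must have been an attack in $R_0$ that got deleted by Reduction 1 — meaning $b \succ a$.

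Let me think about the forward direction. Suppose $F\in\ImageTrans{1}$. Every $(a,b)\in\problematicPart(F)$ gives $b\succ a$ in the transitive $\succ$. So there is a homomorphism from $\problematicPart(F)$ (as a digraph) into the strict order $\succ$: an edge $a\to b$ maps to $b\succ a$.

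For condition (1): a cycle $a_1\to a_2\to\cdots\to a_k\to a_1$ in $\problematicPart(F)$ would give $a_2\succ a_1$, $a_3\succ a_2,\ldots, a_1\succ a_k$, so by transitivity $a_1 \succ a_1$, contradicting asymmetry (which forbids $a_1\succ a_1$). Hence acyclic.

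For condition (2): suppose $(a,b)\in R$ (so $(a,b)\in R_0$, since $R\subseteq R_0$, and Reduction 1 didn't delete it, so $b\not\succ a$) but there is a path $a=c_0\to c_1\to\cdots\to c_m=b$ in $\problematicPart(F)$. Each edge gives $c_{i+1}\succ c_i$, so by transitivity $b=c_m\succ c_0=a$, i.e. $b\succ a$ — contradiction. So no such path exists.

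Now the harder converse direction. Assume (1) and (2). I need to build a transitive, asymmetric $\succ$ and a well-formed $F'=(A,R_0,\cl,\succ)$ with $\red{1}(F')=F$. The natural construction: take $R_0 = R \cup \problematicPart(F)$ (this completes $F$ to a well-formed CAF, the minimal well-formed supergraph). For the preference, I want $b\succ a$ exactly when $(a,b)\in\problematicPart(F)$, extended to a transitive relation; then Reduction 1 deletes precisely the $\problematicPart(F)$ edges, recovering $R$.

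**The main obstacle.** The difficulty is constructing a transitive preference that (a) includes $b\succ a$ for each wf-problematic $(a,b)$, (b) stays asymmetric, and (c) does NOT accidentally force $b\succ a$ for any $(a,b)\in R$ — otherwise Reduction 1 would wrongly delete a real attack of $F$. The plan is to take the transitive closure of the relation $\{(b,a) \mid (a,b)\in\problematicPart(F)\}$, i.e. set $b \succ a$ whenever there is a $\problematicPart(F)$-path from $a$ to $b$. Condition (1) guarantees this relation is asymmetric (a path from $a$ to $b$ and back would yield a cycle). Condition (2) guarantees that no $(a,b)\in R$ gets a path from $a$ to $b$, so $b\not\succ a$ and Reduction 1 leaves $(a,b)$ intact. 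I then verify that $\red{1}(F')$ deletes exactly the $\problematicPart(F)$ edges from $R_0$: each such edge $(a,b)$ has $b\succ a$ (trivial length-one path), so is deleted; each surviving edge is in $R$, since by condition (2) no $R$-edge is deleted and the only candidates for deletion in $R_0\setminus R = \problematicPart(F)$ are all deleted. The last routine check is that $R_0=R\cup\problematicPart(F)$ is genuinely well-formed — which follows because adding exactly the missing same-claim attacks closes the well-formedness defect by the definition of $\problematicPart$. The most delicate point to get right is verifying that no spurious deletion occurs, which is precisely where condition (2) does its work.
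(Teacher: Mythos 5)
Your proposal is correct and follows essentially the same route as the paper's proof: the forward direction derives $b\succ a$ for every wf-problematic pair from well-formedness of the source PCAF and then uses transitivity plus asymmetry to rule out cycles and paths ending at an $R$-edge, and the converse uses exactly the paper's construction $R_0 = R\cup\problematicPart(F)$ with $b\succ a$ iff there is a $\problematicPart(F)$-path from $a$ to $b$. No gaps; the only cosmetic difference is that you argue the forward direction directly rather than by contrapositive.
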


\begin{proofsketch}
	Assume there is a cycle $(a_1,\ldots,a_n,a_1)$ in $\problematicPart(F)$. Then, since Reduction~$1$ can not introduce attacks, if there is a PCAF $F'$ such that $\red{1}(F') = F$, we have $(a_1,a_2),\ldots,(a_n,a_1) \in F'$. This implies $a_1 \succ a_n \succ a_{n-1} \succ \cdots \succ a_1$, i.e., $\succ$ is not asymmetric. Similarly, if there is a path $(a_1,\ldots,a_n)$ in $\problematicPart(F)$ we have to define $a_n \succ \cdots \succ a_1$ in $F'$. But then $(a_1,a_n) \not\in \red{1}(F')$. 
	
	If $\problematicPart(F)$ is acyclic and there is no path from $a$ to $b$ in $\problematicPart(F)$ such that $(a,b) \in F$, then we can construct a PCAF $F'$ such that $\red{1}(F') = F$ in the same way as when $\succ$ is not transitive (cf.\ proof of Proposition~\ref{prop:characterizations}). 
\end{proofsketch}

\section{I-Maximality} \label{sec:imax}

One of the advantages of wfCAFs over general CAFs is that they preserve I-maximality under most maximization-based semantics (cf.\ Table~\ref{tab:imax_CAFwfCAF}), which leads to more intuitive behavior of these semantics when considering extensions on the claim-level. We now investigate whether these advantages are preserved when preferences are introduced.

\begin{definition}
	$\sigma^i_p$ is I-maximal for a class $\mathcal{C}$ of PCAFs if, for all $F$ in $\mathcal{C}$ and all $S,T \in \sigma^i_p(F)$, $S \subseteq T$ implies $S = T$.
\end{definition}

From known properties of wfCAFs (cf.\ Table~\ref{tab:imax_CAFwfCAF}) it follows directly that $\naive_p^i$ is not I-maximal for PCAFs. It remains to investigate the I-maximality of $\pref_p^i$, $\stb_p^i$, $\semi_p^i$, and $\stage_p^i$ for PCAFs. 
For convenience, given a CAF $F = (\Args,\Att,\cl)$ and $E \subseteq A$, we sometimes write $E \in \sigma(F)$ for $E \in \sigma((A,R))$. 

\begin{restatable}{lemma}{cfDoesNotChangeForSomeReductions} \label{lemma:cfDoesNotChangeForReductions234}
	Let $F = (\Args,\Att,\cl, \succ)$ be a PCAF and ${E \subseteq \Args}$. $E \in \cf(\red{i}(F))$ if and only if $E \in \cf((\Args,\Att,\cl))$ for $i \in \{2,3,4\}$.
\end{restatable}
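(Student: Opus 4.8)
The plan is to prove both directions of the equivalence by a careful case analysis on how each reduction treats a single ordered pair of arguments, relying on the defining formulas in Definition~\ref{def:reductions}. The key structural fact I want to establish is that for $i \in \{2,3,4\}$, Reduction~$i$ never \emph{completely removes} a conflict between two distinct arguments: whenever $\{a,b\}$ is in conflict in $(\Args,\Att)$ (i.e.\ $(a,b)\in\Att$ or $(b,a)\in\Att$), at least one of $(a,b),(b,a)$ survives in $\red{i}(F)$. This is the property that fails for Reduction~$1$ (which is exactly why $i=1$ is excluded), and it is what will make conflict-freeness invariant.

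First I would prove the forward direction in its contrapositive form: if $E \notin \cf((\Args,\Att,\cl))$, then $E \notin \cf(\red{i}(F))$. So suppose there are $a,b \in E$ with $(a,b)\in\Att$ (the self-attack case $a=b$ is immediate since no reduction adds or removes self-attacks, but I should note it). I then split into the subcases distinguished by the reductions, namely whether $(b,a)\in\Att$ and whether $b\succ a$ or $a\succ b$. In each subcase I read off from Definition~\ref{def:reductions} that at least one of $(a,b),(b,a)$ lies in $\Def$. The cleanest way to see this uniformly: if $b\not\succ a$ then $(a,b)\in\Def$ by the first disjunct common to all three reductions; if instead $b\succ a$ then by asymmetry $a\not\succ b$, and since $\succ$ is asymmetric we cannot also have $a\succ b$, so the attack is either reverted or retained depending on whether $(b,a)\in\Att$ — in the reversion case ($i\in\{2,4\}$, with $(b,a)\notin\Att$) we get $(b,a)\in\Def$, and when $(b,a)\in\Att$ the pair remains conflicting in every reduction. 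In all cases $E$ contains two arguments in conflict in $\red{i}(F)$, so $E \notin \cf(\red{i}(F))$.

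For the converse direction I would argue that no reduction $i\in\{2,3,4\}$ introduces a conflict that was not already present at the level of the \emph{unordered} pair: if $(a,b)\in\Def$, then either $(a,b)\in\Att$ or $(b,a)\in\Att$. This is transparent from the formulas, since every disjunct in the definitions of Reductions $2,3,4$ has either $(a,b)\in\Att$ or $(b,a)\in\Att$ as a conjunct. Hence if $E\in\cf((\Args,\Att,\cl))$, so that no two elements of $E$ are in conflict in $(\Args,\Att)$, then no two elements of $E$ can be in conflict in $\red{i}(F)$ either, giving $E\in\cf(\red{i}(F))$. Combining the two directions yields the stated equivalence.

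The main obstacle, and the step deserving the most care, is the asymmetry bookkeeping in the forward direction: the reversion disjunct $((b,a)\in\Att,\ (a,b)\notin\Att,\ a\succ b)$ changes the \emph{direction} of the surviving conflict rather than preserving it, so one must confirm that a \emph{conflict} (in either direction) always survives even though a specific ordered attack may be both deleted and not reverted. I expect to handle this by organizing the argument around the unordered pair $\{a,b\}$ and checking exhaustively the small number of combinations of $(a,b)\in\Att$, $(b,a)\in\Att$, and the three mutually exclusive preference situations $a\succ b$, $b\succ a$, neither — verifying in each that some attack between $a$ and $b$ persists in $\red{i}(F)$. This is precisely where Reduction~$1$ would break down (the deletion disjunct alone can erase the only attack), which confirms why the lemma is restricted to $i\in\{2,3,4\}$.
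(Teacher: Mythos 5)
Your proposal is correct and follows essentially the same route as the paper's proof: both rest on the two observations that for $i \in \{2,3,4\}$ an existing conflict between a pair of arguments always survives (in at least one direction) and that no attack in $\red{i}(F)$ arises without a pre-existing conflict in $\Att$. The paper states these two facts without the explicit case analysis, but your more detailed bookkeeping over the preference and attack configurations verifies exactly the same claims.
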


In other words, Reductions~$2$, $3$~and~$4$ preserve conflict-freeness. It is easy to see that this is not the case for Reduction~$1$. In fact, Reduction~$1$ has been deemed problematic for exactly this reason when applied to regular AFs \cite{AmgoudV14}, although it is still discussed and considered in the literature alongside the other reductions \cite{KaciEtAl2021hofa2}. We first consider Reduction~$3$, and show that it preserves I-maximality for some, but not all, semantics.

\begin{restatable}{proposition}{imaxThirdImage} \label{prop:prop:i-max-image3-pref}
	$\pref_p^3$, $\stb_p^3$, and $\semi_p^3$ are I-maximal for PCAFs.
\end{restatable}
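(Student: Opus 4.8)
The plan is to isolate one structural lemma about $\red{3}(F)$ and then read off all three I-maximality claims from it. Throughout, write $\red{3}(F) = (\Args, \Def, \cl)$, recall that its underlying CAF $(\Args,\Att,\cl)$ is well-formed, and note the defining feature of Reduction~3: every deleted attack $(a,b) \in \Att \setminus \Def$ comes with a \emph{surviving} reverse attack $(b,a) \in \Def$. Indeed, deletion of $(a,b)$ forces $b \succ a$ and $(b,a)\in\Att$, and asymmetry of $\succ$ (so $a\not\succ b$) then places $(b,a)$ in $\Def$. Hence a claim $\alpha\in\sigma^3_p(F)$ is exactly $\cl(E)$ for some $E\in\sigma(\red{3}(F))$, and it suffices to show that comparable such claim-sets coincide.

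The central step is a \emph{Key Lemma}: if $E, E' \in \adm(\red{3}(F))$ and $\cl(E) \subseteq \cl(E')$, then $E \cup E' \in \adm(\red{3}(F))$. Admissibility of the union is routine once conflict-freeness is known, because defence is monotone in the defending set, so $E\cup E'$ defends everything defended by $E$ or by $E'$. The work is the conflict-freeness of $E\cup E'$. Suppose $(x,y)\in\Def$ with $x,y\in E\cup E'$; since $E$ and $E'$ are each conflict-free, the endpoints lie in different sets. For an attack \emph{from $E$ into $E'$} (say $x\in E$, $y\in E'$), use $\cl(x)\in\cl(E)\subseteq\cl(E')$ to find $x''\in E'$ with $\cl(x'')=\cl(x)$; well-formedness of $(\Args,\Att,\cl)$ transfers the attack, giving $(x'',y)\in\Att$, and by the observation above either $(x'',y)\in\Def$ or $(y,x'')\in\Def$, so $E'$ carries a conflict --- a contradiction (the degenerate case $x''=y$ would make $y$ self-attacking, which is likewise excluded). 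An attack \emph{from $E'$ into $E$} (say $x\in E'$, $y\in E$) reduces to the previous case: admissibility of $E$ supplies a defender $c\in E$ with $(c,x)\in\Def$, which is again an $E$-into-$E'$ attack. This combined case analysis --- the only place where well-formedness of the underlying CAF and the exact form of Reduction~3 interact --- is the main obstacle.

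Given the Key Lemma, $\pref^3_p$ is immediate: if $E, E' \in \pref(\red{3}(F))$ with $\cl(E)\subseteq\cl(E')$, then $E\cup E'$ is admissible and contains both $E$ and $E'$, so $\subseteq$-maximality of preferred extensions forces $E = E\cup E' = E'$, whence $\cl(E)=\cl(E')$. For $\semi^3_p$, take $E, E' \in \semi(\red{3}(F))$ with $\cl(E)\subseteq\cl(E')$; the Key Lemma makes $E\cup E'$ admissible, and since its range contains both $E^\oplus_{\red{3}(F)}$ and ${E'}^\oplus_{\red{3}(F)}$, range-maximality of $E$ and of $E'$ yields $E^\oplus_{\red{3}(F)} = {E'}^\oplus_{\red{3}(F)} = (E\cup E')^\oplus_{\red{3}(F)}$. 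To conclude $\cl(E')\subseteq\cl(E)$, take any $b\in E'$: then $b\in {E'}^\oplus_{\red{3}(F)} = E^\oplus_{\red{3}(F)}$, so either $b\in E$ or $b$ is attacked by some $a\in E$; the latter is impossible since $a,b\in E\cup E'$ and $E\cup E'$ is conflict-free, hence $b\in E$ and $\cl(b)\in\cl(E)$. Thus $\cl(E)=\cl(E')$.

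Finally, $\stb^3_p$ needs no separate argument. Since $\stb^i_p(F)\subseteq\semi^i_p(F)$ holds for every PCAF $F$ (as already recorded for the claim-based semantics), and since any subfamily of an antichain is again an antichain, the I-maximality of $\semi^3_p$ established above immediately transfers to $\stb^3_p$.
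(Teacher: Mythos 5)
Your proof is correct, but it is organized differently from the paper's. The paper argues directly at the level of preferred extensions: it takes two preferred realizations $S',T'$ of comparable claim-sets, picks $x\in S'\setminus T'$, and splits on whether $x$ is excluded from $T'$ by a conflict or by lack of defence, deriving a contradiction in each case; semi-stable and stable are then dispatched purely via the inclusion chain $\stb_p^3(F)\subseteq\semi_p^3(F)\subseteq\pref_p^3(F)$ (a subfamily of an antichain is an antichain). You instead isolate a union lemma --- admissible sets of $\red{3}(F)$ with comparable claim-sets have admissible union --- which concentrates all the work in conflict-freeness (defence of the union being monotone) and then reads off preferred immediately and semi-stable via a separate range argument. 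The underlying technical ingredients are the same in both proofs: transferring an attack to a same-claim argument via well-formedness of the underlying CAF, the fact that Reduction~3 never erases a conflict entirely, and the observation that a deleted attack leaves a surviving reverse attack. What your route buys is a cleaner decomposition (the defence case essentially disappears) and a reusable intermediate statement that is slightly stronger than what is strictly needed; what the paper's route buys is that semi-stable and stable come entirely for free from the preferred case. Your handling of the degenerate self-attack case and your reduction of the $E'$-into-$E$ conflict to the $E$-into-$E'$ case via a defender in $E$ are both sound.
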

\begin{proof}
	By $\stb_p^3(F) \subseteq \semi_p^3(F) \subseteq \pref_p^3(F)$ it suffices to consider $\pref_p^3$. Towards a contradiction, assume there is a PCAF $F = (\Args,\Att,\cl, \succ)$ such that $S \subset T$ for some $S,T \in \pref^3_p(F)$. Then there are $S',T' \subseteq \Args$ such that $S' \in \pref(\red{3}(F))$, $\cl(S') = S$, $T' \in \pref(\red{3}(F))$, and $\cl(T') = T$. $S' \not\subseteq T'$ since $S' \in \pref(\red{3}(F))$. Thus, there is $x \in S'$ such that $x \not\in T'$. But $\cl(x) \in T$, i.e., there is $x' \in T'$ with $\cl(x') = \cl(x)$. There are two possibilities for why $x \not\in T'$.
	
	Case 1: $T' \cup \{x\} \not\in \cf(\red{3}(F))$, i.e., there exists $y \in T'$ such that $y \not\in S'$ and either $(x,y) \in F$ or $(y,x) \in F$. In fact, $(x,y) \not\in F$: otherwise, by the well-formedness of $(\Args,\Att,\cl)$, we have $(x',y) \in F$ and, by Lemma~\ref{lemma:cfDoesNotChangeForReductions234}, $T' \not\in \cf(\red{3}(F))$. Thus, $(y,x) \in F$.
	By the definition of Reduction~$3$, $(y,x) \in \red{3}(F)$. $S'$ must defend $x$ in $\red{3}(F)$, i.e., there exists $z \in S'$ such that $(z,y) \in \red{3}(F)$. Then $(z,y) \in F$. Since $S \subset T$ there exists $z' \in T'$ such that $\cl(z') = \cl(z)$. $(z',y) \in F$ by the well-formedness of $(\Args,\Att,\cl)$. But then $T' \not\in \cf(\red{3}(F))$. Contradiction.
	
	Case 2: $x$ is not defended by $T'$, i.e., there exists $y \in \Args$ that is not attacked by $T'$ and such that $(y,x) \in \red{3}(F)$. 
	By the same argument as above, there is $z' \in T'$ with $(z',y) \in F$.
	It cannot be that $(z',y) \in \red{3}(F)$, i.e., $y \succ z'$. By the definition of Reduction~$3$, $(y,z') \in F$ and thus $(y,z') \in \red{3}(F)$. 
	But then $T' \not\in \adm(\red{3}(F))$. Contradiction.
\end{proof}

Of course, positive results regarding the I-maximality of PCAFs with arbitrary preferences, such as in the above proposition, still hold for PCAFs with transitive preference orderings. Conversely, for negative results, it suffices to show that I-maximality is not preserved on transitive orderings to obtain results for the more general case.

\begin{restatable}{proposition}{imaxThirdImageStage} \label{prop:i-max-image3-stg}
	$\stage_p^3$ is not I-maximal for PCAFs, even when considering only transitive preferences.
\end{restatable}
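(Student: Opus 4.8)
The plan is to exhibit a concrete counterexample PCAF with a transitive preference relation for which $\stage_p^3$ violates I-maximality, i.e.\ for which $\stage_c(\red{3}(F))$ contains two claim-sets $S \subset T$. Since $\stage$ maximizes the range rather than the extension itself, the phenomenon that breaks I-maximality is different from the admissibility-based case handled in Proposition~\ref{prop:prop:i-max-image3-pref}: here we need two conflict-free sets whose ranges are $\subseteq$-incomparable at the argument level but whose \emph{claim}-sets become comparable. The key leverage is that multiple arguments can share a claim, so one stage extension may ``cover'' (in range) an argument that a second extension realizes directly, producing nested claim-sets even though the underlying argument-extensions are range-maximal and thus incomparable.

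First I would recall Lemma~\ref{lemma:cfDoesNotChangeForReductions234}, which tells us that $\cf(\red{3}(F)) = \cf((\Args,\Att,\cl))$; this means that for Reduction~3 the conflict-free sets (and hence the candidates for stage extensions) are exactly those of the underlying well-formed CAF, and only the \emph{range} can shift because Reduction~3 may delete attacks. So the construction must arrange that deleting an attack under Reduction~3 changes which conflict-free sets are range-maximal, in a way that creates two stage extensions $E, E'$ with $\cl(E) \subsetneq \cl(E')$. I would build $F$ so that the underlying CAF is well-formed (as required by Definition~\ref{def:PCAF}), choose a $\succ$ that is transitive and asymmetric, and verify that under Reduction~3 exactly the intended attack is removed (recall Reduction~3 removes $(a,b)$ only when $(b,a)$ is also present, so a mutual-attack gadget is the natural vehicle).

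The concrete step is to compute $\red{3}(F)$ explicitly, enumerate $\cf(\red{3}(F))$ via the lemma, compute the range $E^\oplus$ of each conflict-free set in the reduced framework, identify the range-maximal ones to obtain $\stage(\red{3}(F))$, and finally apply $\cl(\cdot)$ and exhibit the two nested claim-sets. I would pick a small number of arguments (a handful suffices) with at least two sharing a claim, so that the claim-projection collapses distinct argument-extensions into nested sets. The verification that $\succ$ is transitive is immediate for any construction using a single preference pair or a preference relation whose transitive closure is checked directly, and I would state this explicitly since the proposition emphasizes the transitive case.

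The main obstacle will be simultaneously satisfying three competing constraints: (1)~the underlying CAF must be genuinely well-formed, which forces arguments sharing a claim to have identical outgoing attacks and thereby rigidly constrains the gadget; (2)~the attack actually deleted by Reduction~3 must be present as a \emph{mutual} attack in $F$ so that Reduction~3 engages at all (unlike Reductions~1,2,4 which can act on one-sided attacks); and (3)~the resulting two stage extensions must be range-incomparable at the argument level (so both survive as stage extensions) yet claim-comparable after projection. Balancing well-formedness against the need for a mutual-attack deletion is the delicate part, since well-formedness ties the behavior of the two same-claim arguments together and can inadvertently equalize the two ranges or destroy conflict-freeness. Once a candidate $F$ is fixed, the remaining verification is a routine finite check of conflict-free sets and their ranges.
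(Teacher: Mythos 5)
Your strategy is exactly the one the paper uses: build a small PCAF with a single preference pair (hence trivially transitive and asymmetric), use a mutual-attack gadget so that Reduction~3 actually fires, exploit two arguments sharing a claim so that range-incomparable stage extensions project to nested claim-sets, and then verify by a finite enumeration of conflict-free sets and their ranges. You have also correctly isolated the mechanism that makes this work for $\stage$ despite Proposition~\ref{prop:prop:i-max-image3-pref} holding for the admissible-based semantics, namely that Lemma~\ref{lemma:cfDoesNotChangeForReductions234} fixes the conflict-free sets but the deleted attack shrinks one extension's range without creating a new conflict.

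The gap is that you never exhibit the counterexample, and for a non-I-maximality claim the counterexample \emph{is} the proof. You explicitly flag the ``delicate part'' --- reconciling well-formedness of the underlying CAF with the need for a mutual attack that Reduction~3 deletes, while keeping the two resulting stage extensions range-incomparable --- and then leave it unresolved, so the proposal ends exactly where the actual mathematical content begins. For the record, the paper resolves it with four arguments $a,a',b,c$ carrying claims $\alpha,\alpha,\beta,\gamma$, underlying attacks $\{(a,b),(b,c),(c,a),(a',b),(b,a')\}$ (well-formed since $a$ and $a'$ both attack exactly $b$), and $b \succ a'$; Reduction~3 deletes only $(a',b)$, after which the naive sets $\{a,a'\}$, $\{a',c\}$, $\{b\}$ have pairwise incomparable ranges $\{a,a',b\}$, $\{a,a',c\}$, $\{a',b,c\}$, so all three are stage, yet their claim-sets $\{\alpha\}$, $\{\alpha,\gamma\}$, $\{\beta\}$ contain the nested pair $\{\alpha\} \subset \{\alpha,\gamma\}$. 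Without producing such a witness (or at least a construction whose existence is argued rather than hoped for), the proposal is a correct plan but not a proof.
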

\begin{proofsketch}
	Let $F$ be the CAF shown on the left in Figure~\ref{fig:ImaxCafsShort}.
	Observe that $F \in \ImageTrans{3}$ since $\red{3}(F') = F$ for the PCAF $F'$ with the 
	same arguments as $F$,
	attacks $\{(a,b),(b,c),(c,a),(a',b),(b,a')\}$ and ${b \succ a'}$. 
	Moreover, it can be verified that $\stage_p^3(F') = \{\{\alpha\},\{\alpha,\gamma\},\{\beta\}\}$. 
\end{proofsketch}
\begin{figure}[t]
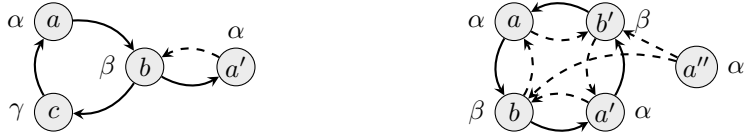

	\centering
	\begin{subfigure}{0.48\columnwidth}
		\centering
		\tikz{
			\node[arg, label={left}:$\alpha$] (a) at (0,1.2) {$a$};
			\node[arg, label={above}:$\alpha$] (aPrime) at (2.4,0.6) {$a'$};
			\node[arg, label={left}:$\beta$] (b) at (1.2,0.6) {$b$};
			\node[arg, label={left}:$\gamma$] (c) at (0,0) {$c$};
			\draw[attack] 
			(a) edge [bend left] (b)
			(b) edge [bend left] (c)
			(c) edge [bend left] (a)
			(aPrime) edge [bend right, dashed] (b)
			(b) edge [bend right] (aPrime);
		}
	\end{subfigure}
	\hfill
	\begin{subfigure}{0.48\columnwidth}
		\centering
		\tikz{
			\node[arg, label={left}:$\alpha$] (a) at (0,1.2) {$a$};
			\node[arg, label={right}:$\alpha$] (b) at (1.2,0) {$a'$};
			\node[arg, label={left}:$\beta$] (c) at (0,0) {$b$};
			\node[arg, label={right}:$\beta$] (d) at (1.2,1.2) {$b'$};
			\node[arg, label={right}:$\alpha$] (e) at (2.4,0.6) {$a''$};
			\draw[attack] 
			(a) edge [bend right] (c)
			(c) edge [bend right, dashed] (a)
			(b) edge [bend right] (d)
			(d) edge [bend right, dashed] (b)
			(c) edge [bend right] (b)
			(b) edge [bend right, dashed] (c)
			(d) edge [bend right] (a)
			(a) edge [bend right, dashed] (d)
			(e) edge [bend right, dashed] (c)
			(e) edge [dashed] (d);
		}
	\end{subfigure}
	\caption{CAFs used as counter examples for I-maximality (cf.\ Proposition~\ref{prop:i-max-image3-stg} and \ref{prop:i-max-images134}). Dashed arrows are edges in 
		$\problematicPart(F)$.
	}
	\label{fig:ImaxCafsShort}
\end{figure}

In contrast to Reduction~3, under Reductions~1, 2, and~4 we lose I-maximality for \emph{all} semantics.

\begin{restatable}{proposition}{imaxOtherImages} \label{prop:i-max-images134}
	$\sigma_p^i$, with $\sigma \in \{\pref,\stb,\semi,\stage\}$ and $i \in \{1,2,4\}$, is not I-maximal for PCAFs, even when considering only transitive preferences.
\end{restatable}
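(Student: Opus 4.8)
The plan is to collapse the whole statement---four semantics $\sigma \in \{\pref,\stb,\semi,\stage\}$ and three reductions $i \in \{1,2,4\}$---to a single, much smaller task: for each $i$ I would exhibit \emph{one} PCAF $F_i$ carrying a \emph{transitive} preference relation, together with two claim-sets $S \subsetneq T$ that both lie in $\stb_p^i(F_i)$. This suffices for two reasons. First, a transitive preference relation is in particular an (asymmetric) preference relation, so a failure of I-maximality on such an $F_i$ already refutes I-maximality both for all PCAFs and for the subclass of PCAFs with transitive preferences---exactly the strengthened claim. Second, nested \emph{stable} claim-sets propagate to the other semantics for free: since $\stb_p^i(F) \subseteq \semi_p^i(F) \subseteq \pref_p^i(F)$ and $\stb_p^i(F) \subseteq \stage_p^i(F)$ hold for every PCAF (as noted earlier in the excerpt), the same pair $S \subsetneq T$ also witnesses non-I-maximality of $\semi_p^i$, $\pref_p^i$, and $\stage_p^i$. (Concretely, every stable extension is also semi-stable, stage, and preferred.) So it is enough to produce three small PCAFs with nested stable claim-sets.

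For Reduction~$1$ I would reuse the PCAF of Example~\ref{ex:PCAF}, whose preference $b \succ a'$ is trivially transitive and for which $\stb_p^1(F) = \{\{\alpha\},\{\alpha,\beta\}\}$ is already recorded: deleting $(a',b)$ isolates the $\alpha$-argument $a'$, which then belongs to both stable extensions. For Reductions~$2$ and~$4$ I would use the \emph{same} underlying well-formed CAF $F^{*} = (\{a,a',a'',b,b'\}, R^{*}, \cl)$ with $R^{*} = \{(b,a),(b,a'),(b',a),(b',a')\}$ and $\cl(a)=\cl(a')=\cl(a'')=\alpha$, $\cl(b)=\cl(b')=\beta$ (so every $\alpha$-argument attacks nothing, both $\beta$-arguments attack $\{a,a'\}$, and $a''$ is isolated). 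Equipping $F^{*}$ with the transitive preferences $\{a \succ b, a' \succ b'\}$ and applying Reduction~$2$ reverts two attacks and yields the CAF on the right of Figure~\ref{fig:ImaxCafsShort} (the $4$-cycle $a \to b \to a' \to b' \to a$ with $a''$ isolated), whose stable extensions are $\{a,a',a''\}$ and $\{b,b',a''\}$; hence $\stb_p^2(F_2) = \{\{\alpha\},\{\alpha,\beta\}\}$. Equipping the \emph{same} $F^{*}$ instead with $\{a \succ b, a \succ b', a' \succ b, a' \succ b'\}$ and applying Reduction~$4$ turns each $\beta{\to}\alpha$ attack into a mutual attack, producing full mutual attacks between $\{a,a'\}$ and $\{b,b'\}$ with $a''$ still isolated; again the stable extensions are $\{a,a',a''\}$ and $\{b,b',a''\}$, so $\stb_p^4(F_4)=\{\{\alpha\},\{\alpha,\beta\}\}$. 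In each case a routine check confirms well-formedness of the underlying CAF, asymmetry and transitivity of $\succ$, and that the displayed sets are stable.

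The delicate point, and the step I expect to require the most care, is Reduction~$4$. For Reductions~$1$ and~$2$ the nesting rests on the familiar device of making one $\alpha$-argument isolated---by deletion, or by reverting the $\beta{\to}\alpha$ attacks while leaving $a''$ untouched---so that it sits in both a pure-$\alpha$ and a mixed stable extension. This device cannot be transplanted verbatim: Reductions~$2,3,4$ all preserve conflict-freeness (Lemma~\ref{lemma:cfDoesNotChangeForReductions234}), but Reduction~$4$ additionally \emph{keeps} every one-directional attack rather than merely reverting it, so one can never convert a $\beta{\to}\alpha$ attack into a pure $\alpha{\to}\beta$ attack; indeed the $4$-cycle CAF of Figure~\ref{fig:ImaxCafsShort} is not even a member of $\Image{4}$, which one can read off Proposition~\ref{prop:characterizations}. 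The resolution is to exploit the \emph{doubling} behaviour of Reduction~$4$: a one-directional attack whose target is the stronger argument is kept \emph{and} has its reverse added, becoming mutual. Choosing the preferences so that all four $\beta{\to}\alpha$ attacks double gives $a,a'$ attacks back onto $b,b'$ (making the pure-$\alpha$ set stable) while the isolated $a''$ remains available to both extensions, which is precisely what restores the nesting. Checking that this is consistent with asymmetry and transitivity of $\succ$ is immediate here, since no $\beta$-argument is made stronger than anything, and this completes the argument.
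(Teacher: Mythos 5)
Your proposal is correct and follows essentially the same route as the paper: reduce everything to $\stb_p^i$ via the inclusions $\stb_p^i(F)\subseteq\semi_p^i(F)\subseteq\pref_p^i(F)$ and $\stb_p^i(F)\subseteq\stage_p^i(F)$, then exhibit small transitive-preference witnesses with nested stable claim-sets; your witnesses for $i=1$ and $i=2$ coincide with the paper's. The only divergence is $i=4$, where the paper simply reuses the three-argument CAF on the right of Figure~\ref{fig:intro}, obtaining it as $\red{4}(F')$ for $F'=(\{a,a',b\},\{(b,a)\},\cl,\succ)$ with $a\succ b$ (the kept-plus-reversed attack yields the mutual conflict between $a$ and $b$ with $a'$ isolated), whereas you build a five-argument fully-mutual variant --- both are valid, and your observation that the Reduction-2 witness cannot be reused verbatim for Reduction~4 is accurate.
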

\begin{proofsketch}
	We only need to show this for $\stb_p^i$ since $\stb_p^i(F) \subseteq \semi_p^i(F) \subseteq \pref_p^i(F)$ and $\stb_p^i(F) \subseteq \stage_p^i(F)$. 
	
	For $i \in \{1,4\}$, let $F$ be the CAF shown on the right in Figure~\ref{fig:intro}. $F \in \ImageTrans{1}$ by Proposition~\ref{prop:characterizationImage1-transitive}. Furthermore, $F \in \ImageTrans{4}$ since $\red{4}(F') = F$ for $F' = (\{a,a',b\},\{(b,a)\},\cl,\succ)$ with $a \succ b$. It can be verified that $\stb_c(F) = \{\{\alpha\},\{\alpha,\beta\}\}$. 
	
	For $i = 2$, let $G$ be the CAF shown on the right in Figure~\ref{fig:ImaxCafsShort}. $G \in \ImageTrans{2}$ since $\red{2}(G') = G$ for the PCAF $G'$ with attacks $\{(b,a),(b,a'),(b',a),(b',a')\}$ and preferences $a \succ b$ and $a' \succ b'$. $\stb_c(G) = \{\{\alpha\},\{\alpha,\beta\}\}$. 
\end{proofsketch}

\begin{table}
	\centering
	\begin{tabular}{l|*{5}{c}}
		& $\naive_p^i$ & $\stable_p^i$ & $\pref_p^i$ & $\semi_p^i$ & $\stage_p^i$\\
		\hline 
		$i\!\in\!\{1,2,4\}$ & \textsf{x} & \textsf{x} & \textsf{x} & \textsf{x} & \textsf{x}\\
		$i\!=\!3$ & \textsf{x} & $\checkmark$ & $\checkmark$ & $\checkmark$ & \textsf{x}
	\end{tabular}
	\caption{I-maximality of PCAFs. Results also hold when considering only PCAFs with transitive preferences.}
	\label{tab:imax_PCAF}
\end{table}  

Table~\ref{tab:imax_PCAF} 
summarizes our I-maximality results. Reduction~$3$ manages to preserve I-maximality in most cases. It is also the most conservative of the reductions, preserving conflict-freeness and not adding new attacks. Interestingly, the other three reductions lose I-maximality for \emph{all} semantics.

\section{Computational Complexity} \label{sec:complexity}

Given a preference Reduction~$i \in \{1,2,3,4\}$, we define $\Cred_{\sigma,i}^\PCAF$, $\Skept_{\sigma,i}^\PCAF$, and $\Ver_{\sigma,i}^\PCAF$ analogously to $\Cred_{\sigma}^\CAF$, $\Skept_{\sigma}^\CAF$, and $\Ver_{\sigma}^\CAF$, except that we take a PCAF instead of a CAF as input and appeal to the $\sigma_p^i$ semantics instead of the $\sigma_c$ semantics. Membership results for PCAFs can be inferred from results for general CAFs (recall that the preference reductions from PCAFs to the respective CAF class can be done in polynomial time), and hardness results from results for wfCAFs.  Thus, the complexity of credulous and skeptical acceptance %
follows immediately, i.e., for $i \in \{1,2,3,4\}$ and $\sigma \in \{\cf,\adm,\comp,\naive,\stable,\pref,\semi,\stage\}$,
$\Cred_{\sigma,i}^\PCAF$ and $\Skept_{\sigma,i}^\PCAF$
have the same complexity as  $\Cred_{\sigma}^\wfCAF$ and $\Skept_{\sigma}^\wfCAF$ respectively (cf.\ Table~\ref{tab:complexity_CAFwfCAF}).

However, the complexity of $\Ver_{\sigma,i}^\PCAF$ does not follow from known results. We consider Reduction~$1$ first.

\begin{restatable}{proposition}{verificationConflictfreeNaiveFirstImage} \label{prop:verificationConflictfreeNaive1}
	$\Ver_{\sigma,1}^\PCAF$ is $\NP$-complete for $\sigma \in \{\cf,$ $\naive\}$, even when considering only transitive preferences.
\end{restatable}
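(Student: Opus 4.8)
The plan is to prove $\NP$-completeness of $\Ver_{\sigma,1}^\PCAF$ for $\sigma \in \{\cf,\naive\}$ by establishing membership and hardness separately. For membership, I would give a nondeterministic polynomial-time procedure: guess a preference relation $\succ$ witnessing that the input CAF arises from a wfCAF via Reduction~$1$, together with (for naive) a superset of arguments; since all reductions are polynomial-time computable and conflict-freeness and naivety of the resulting CAF can be checked in polynomial time, this places both problems in $\NP$. Alternatively, membership follows directly from the general CAF results in Table~\ref{tab:complexity_CAFwfCAF}, where $\Ver_{\cf}^\CAF$ and $\Ver_{\naive}^\CAF$ are $\NP$-complete, combined with the fact that the reduction $\red{1}$ is polynomial-time computable from the PCAF input.

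The substance lies in the hardness direction. The natural strategy is to reduce from $\Ver_{\cf}^\CAF$ (resp.\ $\Ver_{\naive}^\CAF$), which are $\NP$-hard for general CAFs, by embedding an arbitrary CAF into a PCAF under Reduction~$1$. The key difficulty is that Reduction~$1$ only \emph{deletes} attacks and starts from a \emph{well-formed} CAF, so I cannot directly present an arbitrary (non-well-formed) target CAF as the reduction's output. The idea I would pursue is to take the input CAF $G=(A,R,\cl)$ and build a PCAF $F=(A',R',\cl',\succ)$ whose well-formed attack relation $R'$ is a suitable superset of (a copy of) $R$ together with the extra attacks needed to restore well-formedness, and then choose $\succ$ so that Reduction~$1$ deletes precisely those extra attacks, yielding $\red{1}(F)$ with the same claim-level conflict-free (resp.\ naive) sets as $G$. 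Concretely, the wf-problematic pairs of $G$ become the attacks that $\succ$ must remove, and I would orient $\succ$ along these pairs; the acyclicity/path conditions of Proposition~\ref{prop:characterizationImage1-transitive} guide how to define $\succ$ so that it stays asymmetric (and transitive, to get the stronger claim).

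The main obstacle I anticipate is guaranteeing that the auxiliary construction does not alter the collection of claim-sets being verified: when I add attacks to make the framework well-formed and then delete them via preferences, I must ensure the \emph{claim-level} conflict-free (resp.\ naive) sets of $\red{1}(F)$ coincide with those of $G$, accounting for the fact that deleted attacks can reintroduce conflict-freeness that was absent in $G$ (recall Reduction~$1$ does \emph{not} preserve conflict-freeness, per Lemma~\ref{lemma:cfDoesNotChangeForReductions234} and the surrounding discussion). I would handle this by introducing gadget arguments whose role is to absorb the problematic attacks without contributing new claims to the verified set, so that $\cf_c(\red{1}(F))$ restricted to the relevant claims equals $\cf_c(G)$. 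A cleaner alternative, which I would attempt first, is to reduce directly from a standard $\NP$-complete problem such as verification of conflict-free claim-sets in general CAFs by observing that \emph{every} CAF whose wf-problematic part satisfies the acyclicity condition is already in $\ImageTrans{1}$ (Proposition~\ref{prop:characterizationImage1-transitive}); then the hardness construction from the general CAF setting can be adapted so that its wf-problematic part is acyclic, letting the same instances witness hardness here.

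Finally, to obtain the transitivity strengthening, I would verify that the preference relation produced in the reduction can be taken transitive: since the $\problematicPart$ graph in the construction is acyclic, extending $\succ$ to a transitive (indeed a strict partial) order is always possible, and by Proposition~\ref{prop:characterizationImage1-transitive} this keeps the constructed framework inside $\ImageTrans{1}$, so the hardness result holds even under the restriction to transitive preferences.
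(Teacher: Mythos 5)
Your membership argument is fine (the second variant is the one the paper uses: the input is already a PCAF, so one simply computes $\red{1}(F)$ in polynomial time and appeals to the $\NP$ upper bound for general CAFs; no guessing of $\succ$ is needed). The gap is in the hardness direction: you correctly identify the obstacle---Reduction~$1$ only deletes attacks from a well-formed CAF, so you cannot target an arbitrary CAF---but you never overcome it. Your first plan (embed an arbitrary CAF $G$ by adding its wf-problematic pairs as attacks and deleting them via $\succ$) cannot work for all instances, since $\Image{1}$ is a \emph{proper} subclass of $\setOfAllCAFs$; and your fallback, ``adapt the known general-CAF hardness construction so that its wf-problematic part is acyclic,'' is precisely the hard part and is asserted rather than done. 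In fact the standard construction (literal-occurrence arguments with claim equal to their clause index, mutually attacking contradictory occurrences) contains symmetric wf-problematic pairs: if $(x_1,\overline{y}_2)$ is wf-problematic then so is $(\overline{y}_2,x_1)$, which by the characterization of $\Image{1}$ excludes it from $\Image{1}$ altogether, not merely from $\ImageTrans{1}$. So no reorientation of preferences rescues that construction; a genuinely new gadget is required.

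The paper supplies exactly that gadget: a fresh reduction from 3-\SAT{} in which each variable $x$ gets two helper arguments $x_T,x_F$ sharing the claim $x$ and attacking \emph{all} occurrence arguments $x_i,\overline{x}_i$ (so the underlying CAF is well-formed), with preferences $x_i \succ x_T$ and $\overline{x}_i \succ x_F$ so that Reduction~$1$ deletes precisely the attacks from $x_T$ onto positive occurrences and from $x_F$ onto negative ones. The claim set $S=\{1,\dots,m\}\cup X$ then forces a choice of $x_T$ or $x_F$ per variable (simulating a truth assignment) and forces every clause index to be covered by a compatible occurrence. This mediation of the positive/negative conflict through equal-claim helper arguments, rather than through direct attacks between occurrences, is the missing idea in your proposal; without it (or an equivalent construction) the hardness claim is not established. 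Your closing observation about transitivity is correct but moot until the construction exists.
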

\begin{proofsketch}
	$\NP$-membership follows from known results for general CAFs. \NP-hardness: let $\varphi$ be an arbitrary instance of 3-\SAT\ given as a set $\{C_1,\ldots,C_m\}$ of clauses over variables $X$. We construct a PCAF $F = (\Args,\Att,\cl,\succ)$ and a set of claims $S = \{1,\ldots,m\} \cup X$ as follows: 
	\begin{itemize}
		\item $\Args = V \cup \overline{V} \cup H$ where $V = \{x_i \mid x \in C_i, 1 \leq i \leq m\}$, $\overline{V} = \{\overline{x}_i \mid \neg x \in C_i, 1 \leq i \leq m\}$, and \\
		$H = \{x_T, x_F \mid x \in X\}$;
		\item $\Att = \{(x_T,x_i),(x_F,x_i)  \mid x_i \in V \}\ \cup$\\ 
		\phantom{$\Att =\ $}%
		$\{(x_T,\overline{x}_i),(x_F,\overline{x}_i) \mid \overline{x}_i \in \overline{V} \}$;
		
		\item $\cl(x_i) = \cl(\overline{x}_i) = i$ for all $x_i,\overline{x}_i \in V \cup \overline{V}$, 
		$\cl(x_T) = \cl(x_F) = x$ for all $x \in X$;
		\item $x_i \succ x_T$ for all $x_i \in V$ and 
		$\overline{x}_i \succ x_F$ for all $\overline{x}_i \in \overline{V}$.
	\end{itemize}
	Figure~\ref{fig:reductionVerficationImage1Cf} illustrates the above construction. It can be verified that $\varphi$ is satisfiable if and only if  $S \in \cf_p^1(F)$. The same can be shown for $\naive_p^1$.
	Informally, the set $S$ forces us to have, for each $x\in X$, $x_T$ or $x_F$ in $S$ thus simulating a guess for an interpretation. Due to the removed attacks all corresponding occurrences $x_i$ (resp.\ $\overline{x}_i$) can be added to $S$ without conflict. Now it amounts to check whether these occurrences cover all $i$, i.e., make all clauses true under the actual guess.
\end{proofsketch}
\begin{figure}[t]
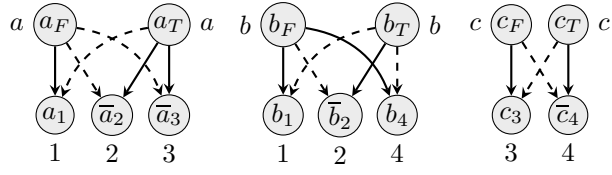

	\centering
	\tikz{
		\node[arg, label={left}:$a$] (aF) at (0,1.2) {$a_F$};
		\node[arg, label={right}:$a$] (aT) at (1.5,1.2) {$a_T$};
		\node[arg, label={left}:$b$] (bF) at (3,1.2) {$b_F$};
		\node[arg, label={right}:$b$] (bT) at (4.5,1.2) {$b_T$};
		\node[arg, label={left}:$c$] (cF) at (6,1.2) {$c_F$};
		\node[arg, label={right}:$c$] (cT) at (6.75,1.2) {$c_T$};
		\node[arg, label={below}:$1$] (a1) at (0,0) {$a_1$};
		\node[arg, label={below}:$2$] (a2) at (0.75,0) {$\overline{a}_2$};
		\node[arg, label={below}:$3$] (a3) at (1.5,0) {$\overline{a}_3$};
		\node[arg, label={below}:$1$] (b1) at (3,0) {$b_1$};
		\node[arg, label={below}:$2$] (b2) at (3.75,0) {$\overline{b}_2$};
		\node[arg, label={below}:$4$] (b4) at (4.5,0) {$b_4$};
		\node[arg, label={below}:$3$] (c3) at (6,0) {$c_3$};
		\node[arg, label={below}:$4$] (c4) at (6.75,0) {$\overline{c}_4$};
		\draw[attack] 
		(aF) edge (a1)
		(aF) edge [dashed] (a2)
		(aF) edge [bend left, dashed] (a3)
		(aT) edge [bend right, dashed] (a1)
		(aT) edge (a2) 
		(aT) edge (a3)
		(bF) edge (b1) 
		(bF) edge [dashed] (b2) 
		(bF) edge [bend left] (b4) 
		(bT) edge [bend right, dashed] (b1)
		(bT) edge (b2)
		(bT) edge [dashed] (b4)
		(cF) edge (c3)
		(cF) edge [dashed] (c4)
		(cT) edge (c4)
		(cT) edge [dashed] (c3);
	}
	\caption{Reduction of 3-\SAT-instance $C_1 = \{a,b,c\}$, $C_2 = \{\neg a, \neg b\}$, $C_3 = \{\neg a, c\}$, $C_4 = \{b, \neg c\}$, to an instance $(F,S)$ of $\Ver_{\cf,1}^\PCAF$ (cf.\ Proof of Proposition~\ref{prop:verificationConflictfreeNaive1}). Dashed arrows are attacks deleted in $\red{1}(F)$, i.e., they are edges in $\problematicPart(\red{1}(F))$.
	}
	\label{fig:reductionVerficationImage1Cf}
\end{figure}

Note that the ``trick'' in above construction to guess an interpretation does not work 
for admissible-based semantics, since the occurrences of $x_i$ resp.\ $\overline{x}_i$ in $S$ would remain undefended. Indeed, we need a more involved construction next.

\begin{restatable}{proposition}{verificationStableAdmissibleCompleteFirstImage} \label{prop:verificationStableAdmissibleComplete1}
	$\Ver_{\sigma,1}^\PCAF$ is $\NP$-complete for $\sigma \in \{\stb,$ $\adm, \comp\}$, even when considering only transitive preferences.
\end{restatable}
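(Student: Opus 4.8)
The plan is to establish $\NP$-membership by inheritance and $\NP$-hardness by a reduction from $3$-\SAT\ that refines the construction of Proposition~\ref{prop:verificationConflictfreeNaive1}. For membership, note that $\red{1}(F)$ is computable in polynomial time from the PCAF $F$ and that verifying $\stb_c$, $\adm_c$, and $\comp_c$ is in $\NP$ for general CAFs (Table~\ref{tab:complexity_CAFwfCAF}); hence $\Ver_{\sigma,1}^\PCAF$ is in $\NP$ for $\sigma\in\{\stb,\adm,\comp\}$, and since this bound ignores transitivity it covers the transitive case as well.

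For hardness I would reuse the assignment-guessing skeleton of Proposition~\ref{prop:verificationConflictfreeNaive1}: selectors $x_T,x_F$ with claim $x$ and occurrence arguments $x_i,\overline{x}_i$ with claim $i$, where Reduction~$1$ deletes $x_T\to x_i$ and $x_F\to\overline{x}_i$, so that committing to a selector fixes a truth value and blocks the opposing occurrences. The difficulty flagged after Proposition~\ref{prop:verificationConflictfreeNaive1} is that under admissible-based semantics each retained occurrence $x_i$ is still attacked by $x_F$ and must be \emph{defended}. The key step is therefore to attach to every variable gadget a defence component that (i)~counter-attacks the blocking selector, so the selected occurrences become defended, (ii)~is itself self-defending and co-acceptable with the chosen selector, and (iii)~does not perturb the target claim set $S$. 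Concretely I would add, per variable, defender arguments whose claims force exactly one of them into any $E$ with $\cl(E)=S$, mirroring the true/false choice; the defender matching the chosen value attacks the opposing selector (hence defends the occurrences), while the symmetric attacks imposed by well-formedness on the underlying wfCAF are stripped away by a suitable asymmetric preference.

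For $\stb$ one additionally needs full range, which I would obtain by orienting the defender--selector attacks so that the selected side attacks the opposing selector, the spare defender, and the blocked occurrences; for $\comp$ one must further guarantee that $E$ defends nothing outside itself, which the same orientation yields since each unselected argument keeps a live, uncountered attacker. As in Propositions~\ref{prop:i-max-image3-stg} and~\ref{prop:i-max-images134}, it suffices to exhibit the witness on a strict partial order, so the ``even for transitive preferences'' clause is automatic once the preference carving out the gadget is shown to be asymmetric and transitive; the equivalence $S\in\sigma_p^1(F)$ iff $\varphi$ is satisfiable then follows by the clause-covering argument of Proposition~\ref{prop:verificationConflictfreeNaive1}, now with defence, range, and completeness supplied by the gadget.

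The main obstacle is \emph{realizability}: the result $G=\red{1}(F)$ must lie in $\ImageTrans{1}$, i.e.\ by Proposition~\ref{prop:characterizationImage1-transitive} its wf-problematic part $\problematicPart(G)$ must be acyclic and satisfy the path condition. A naive defender gadget fails this, because making a selector and its matching defender jointly acceptable \emph{and} using them to block the opposite side forces, via well-formedness, the deletion of \emph{both} directions of an attack between a selector and the ``wrong'' defender; this creates a $2$-cycle in $\problematicPart(G)$ and ejects $G$ even from $\Image{1}$. The real work is thus to orient every conflict one-directionally---presumably along a fixed ordering of the variables and of the two polarities---so that all attacks removed by Reduction~$1$ point the same way and $\problematicPart(G)$ becomes a directed acyclic graph with no edge of $G$ joined by a directed path in $\problematicPart(G)$. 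Proving that a single asymmetric, transitive preference simultaneously deletes exactly the unwanted well-formedness attacks, retains every defence attack, and keeps $\problematicPart(G)$ acyclic is the crux; everything else reduces to the by-now routine clause-covering equivalence.
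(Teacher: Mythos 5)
Your membership argument is fine, and you correctly identify the two real difficulties (defence of the retained occurrence arguments under admissible-based semantics, and realizability of the result in $\ImageTrans{1}$). But the proposal does not actually resolve them: it ends by declaring that ``proving that a single asymmetric, transitive preference simultaneously deletes exactly the unwanted well-formedness attacks, retains every defence attack, and keeps $\problematicPart(G)$ acyclic is the crux,'' without exhibiting such a gadget. That crux is the entire content of the hardness proof, so as it stands there is a genuine gap. Worse, the selector-plus-defender design you sketch runs into a structural obstruction beyond the $2$-cycle you already noticed: if two defenders $d_T,d_F$ share a claim (so that the target claim set forces exactly one of them into $E$) and are to attack $x_F$ and $x_T$ respectively, well-formedness of the underlying PCAF forces each to attack both selectors and, to make them mutually exclusive, to attack each other and hence themselves --- and self-attacks can never be removed by any reduction. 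It is not clear this can be repaired without abandoning the selector idea altogether.

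The paper's proof takes a different route that avoids selectors and defenders entirely. For every pair consisting of a positive occurrence $x_i$ and a negative occurrence $\overline{x}_j$ of the same variable, it introduces four fresh helper arguments and the directed $6$-cycle $x_i \to x^1_{i,j} \to x^2_{i,j} \to \overline{x}_j \to x^3_{i,j} \to x^4_{i,j} \to x_i$; the two alternating ``halves'' of this even cycle are the only ways to accept anything from it admissibly, so accepting $x_i$ forces out $\overline{x}_j$ and vice versa, without any direct attack between occurrence arguments (which well-formedness would forbid). Defence and stability come for free from the cycle structure, and the helper claims are made harmless by adding isolated duplicates $\hat{x}^k_{i,j}$ with the same claims, so that the target claim set $S = \{1,\dots,m\} \cup \{\cl(a) \mid a \in H\}$ never constrains which half of a cycle is chosen. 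Crucially, in the resulting CAF the wf-problematic part has only edges leaving unattacking arguments ($\hat{x}^k_{i,j}$) or occurrence arguments, and entering helper arguments, so it is acyclic and no attack is shadowed by a path in it; membership in $\ImageTrans{1}$ then follows directly from Proposition~\ref{prop:characterizationImage1-transitive}. If you want to salvage your plan, you would essentially have to rediscover a cycle gadget of this kind.
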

\begin{proofsketch}
	We show \NP-hardness. Let $\varphi$ be a 3-\SAT-instance given as a set $\{C_1,\ldots,C_m\}$ of clauses over variables $X$. For convenience, we directly construct a CAF $F = (\Args,\Att,\cl) $ with $F \in \ImageTrans{1}$ instead of providing a PCAF $F'$ such that $\red{1}(F') = F$. This is legitimate, as, by our characterization of $\ImageTrans{1}$ (see Proposition~\ref{prop:characterizationImage1-transitive}), we can obtain $F'$ by simply adding all edges in $\problematicPart(F)$ to $R$ and defining $\succ$ accordingly. 
	\begin{itemize}
		\item $\Args = V \cup \overline{V} \cup H$ where $V = \{x_i \mid x \in C_i, 1 \leq i \leq m\}$, $\overline{V} = \{\overline{x}_i \mid \neg x \in C_i, 1 \leq i \leq m\}$, and\\ 
		$H = \{x^k_{i,j}, \hat{x}^k_{i,j} \mid 1 \leq k \leq 4, x_i \in V, \overline{x}_j \in \overline{V}\}$;
		
		\item $\Att = \{(x_i, x^1_{i,j}), \allowbreak (x^1_{i,j}, x^2_{i,j}), \allowbreak (x^2_{i,j}, \overline{x}_j), \allowbreak (\overline{x}_j, x^3_{i,j}),$ \\ 
		\phantom{$\Att = \{$}%
		$(x^3_{i,j}, x^4_{i,j}), \allowbreak (x^4_{i,j}, x_i) \mid x_i \in V, \overline{x}_j \in \overline{V}\}$;
		
		\item $\cl(x_i) = \cl(\overline{x}_i) = i$ for all $x_i,\overline{x}_i$, \\
		$\cl(x^k_{i,j}) = \cl(\hat{x}^k_{i,j}) = x^k_{i,j}$ for all $x^k_{i,j}, \hat{x}^k_{i,j}$.
		
	\end{itemize}
	For verification consider the set $S = \{1,\ldots,m\} \cup \{\cl(a) \mid a \in H \}$.
	Figure~\ref{fig:reductionVerficationImage1Stb} illustrates the above construction. It can be verified that 
	(1) 
	$F \in \ImageTrans{1}$;  
	(2) 
	$\varphi$ is satisfiable iff  $S \in \stb_c(F)$. Likewise for $\adm_c$ and $\comp_c$. 
	Intuitively, for each $x_i, \overline{x}_j$, the helper arguments $x^k_{i,j}$ and the corresponding cycle ensures that only one of $x_i, \overline{x}_j$ can be chosen. Note that $x_i$ and $\overline{x}_j$ must not attack each other directly because of well-formedness in the original CAF.
\end{proofsketch}
\begin{figure}[t]
	\centering
	\footnotesize
	\tikz[yscale=0.9, label distance=-0.5mm]{
		\node[arg, label={above}:$1$] (a1) at (1,4.25) {$a_1$};
		\node[arg, label={right}:$2$] (a2) at (5,4.25) {$\overline{a}_2$};
		\node[arg, label={above}:$a^1_{1,2}$] (h1) at (2,5) {$a^1_{1,2}$};
		\node[arg, label={above}:$a^2_{1,2}$] (h2) at (4,5) {$a^2_{1,2}$};
		\node[arg, label={above}:$a^3_{1,2}$] (h3) at (4,3.5) {$a^3_{1,2}$};
		\node[arg, label={above}:$a^4_{1,2}$] (h4) at (2,3.5) {$a^4_{1,2}$};
		\node[arg, label={right}:$2$] (b2) at (5,1.75) {$b_2$};
		\node[arg, label={below}:$3$] (b3) at (1,1.75) {$\overline{b}_3$};
		\node[arg, label={below}:$b^1_{2,3}$] (g1) at (4,2.5) {$b^1_{2,3}$};
		\node[arg, label={below}:$b^2_{2,3}$] (g2) at (2,2.5) {$b^2_{2,3}$};
		\node[arg, label={below}:$b^3_{2,3}$] (g3) at (2,1) {$b^3_{2,3}$};
		\node[arg, label={below}:$b^4_{2,3}$] (g4) at (4,1) {$b^4_{2,3}$};
		\node[arg, label={right}:$3$] (c3) at (3,0) {$c_3$};
		\node[arg, label={right}:$a^1_{1,2}$] (d1) at (6,5.5) {$\hat{a}^1_{1,2}$};
		\node[arg, label={right}:$a^2_{1,2}$] (d2) at (6,4.5) {$\hat{a}^2_{1,2}$};
		\node[arg, label={left}:$a^3_{1,2}$] (d3) at (0,3.5) {$\hat{a}^3_{1,2}$};
		\node[arg, label={left}:$a^4_{1,2}$] (d4) at (0,4.5) {$\hat{a}^4_{1,2}$};
		\node[arg, label={left}:$b^1_{2,3}$] (e1) at (0,2.5) {$\hat{b}^1_{2,3}$};
		\node[arg, label={left}:$b^2_{2,3}$] (e2) at (0,1.5) {$\hat{b}^2_{2,3}$};
		\node[arg, label={right}:$b^3_{2,3}$] (e3) at (6,0.5) {$\hat{b}^3_{2,3}$};
		\node[arg, label={right}:$b^4_{2,3}$] (e4) at (6,1.5) {$\hat{b}^4_{2,3}$};
		\draw[attack] 
		(a1) edge (h1)
		(h1) edge (h2)
		(h2) edge (a2)
		(a2) edge (h3)
		(h3) edge (h4)
		(h4) edge (a1)
		(b2) edge (g1)
		(g1) edge (g2)
		(g2) edge (b3)
		(b3) edge (g3)
		(g3) edge (g4)
		(g4) edge (b2)
		(d1) edge [dashed] (h2)
		(d2) edge [dashed, bend right] (a2)
		(d3) edge [dashed] (h4)
		(d4) edge [dashed] (a1)
		(e1) edge [dashed] (g2)
		(e2) edge [dashed] (b3)
		(e3) edge [dashed] (g4)
		(e4) edge [dashed, bend left] (b2)
		(a2) edge [dashed, bend left] (g1)
		(b2) edge [dashed, bend right] (h3)
		(c3) edge [dashed, bend right] (g3);
	}
	\caption{Reduction of 3-\SAT-instance $C_1 = \{a\}$, $C_2 = \{\neg a, b\}$, $C_3 = \{\neg b, c\}$, to an instance $(F',S)$ of $\Ver_{\stable,1}^\PCAF$ (cf.\ Proof of Proposition~\ref{prop:verificationStableAdmissibleComplete1}). Dashed arrows are attacks deleted from $F'$, i.e., they are edges in $\problematicPart(\red{1}(F'))$.}
	\label{fig:reductionVerficationImage1Stb}
\end{figure}

In fact, when applying Reduction~$1$, we lose the advantages of wfCAFs for \emph{all} investigated semantics, since also for the remaining semantics verification remains harder than in the case of wfCAFs.

\begin{restatable}{proposition}{verificationPrefSemiStageFirstImage} \label{prop:verificationSemiStablePref1}
	$\Ver_{\sigma,1}^\PCAF$ is $\SigmaP{2}$-complete for $\sigma \in \{\pref,$ $\semi, \stage\}$, even when considering only transitive preferences.
\end{restatable}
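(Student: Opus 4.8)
The plan is to establish membership and hardness separately, as suggested by the paper's general remark that membership for PCAFs follows from general CAFs and hardness from wfCAFs. For membership in $\SigmaP{2}$, I would observe that given a PCAF $F$ and a claim-set $S$, we can apply Reduction~$1$ in polynomial time to obtain a CAF $\red{1}(F)$, and then $S \in \sigma_p^1(F)$ iff $S \in \sigma_c(\red{1}(F))$. Since $\Ver_{\sigma}^\CAF$ is in $\SigmaP{2}$ for $\sigma \in \{\pref,\semi,\stage\}$ by Table~\ref{tab:complexity_CAFwfCAF}, membership follows immediately. The real work is the $\SigmaP{2}$-hardness, and the central difficulty is that the lower bounds for wfCAFs only give $\coNP$-hardness (again by Table~\ref{tab:complexity_CAFwfCAF}), so I cannot simply inherit hardness from wfCAFs here; I must exhibit a genuine $\SigmaP{2}$-hardness reduction that exploits the extra freedom Reduction~$1$ provides in removing attacks.

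For hardness, I would reduce from a canonical $\SigmaP{2}$-complete problem, namely deciding the truth of a quantified Boolean formula $\exists Y \forall Z\, \psi$ with $\psi$ in (say) 3-DNF (or $\exists Y \forall Z\, \varphi$ with $\varphi$ in 3-CNF, choosing the matrix form that matches the standard $\SigmaP{2}$-hardness templates for $\pref$/$\semi$/$\stage$ verification on ordinary CAFs). The strategy is to combine the known $\SigmaP{2}$-hardness construction for $\Ver_{\sigma}^\CAF$ on general CAFs with the gadget idea already developed in the proof of Proposition~\ref{prop:verificationStableAdmissibleComplete1}: there, the six-cycle of helper arguments $x^k_{i,j}$ forced a choice between $x_i$ and $\overline{x}_j$ while keeping the underlying CAF well-formed, since arguments sharing a claim may not attack each other directly. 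I would reuse exactly this device to encode, in a well-formed framework, the conflicts that a general $\SigmaP{2}$-hardness CAF realizes via direct attacks between same-claim arguments, and let Reduction~$1$ delete the wf-problematic edges so that the resulting $\red{1}(F)$ behaves like the intended general CAF on the claim level.

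The key steps, in order, are: (1)~start from the established $\SigmaP{2}$-hardness instance for verification of $\sigma \in \{\pref,\semi,\stage\}$ on general CAFs; (2)~replace each problematic direct attack between two arguments with the same claim by an indirect six-cycle helper gadget as in the proof of Proposition~\ref{prop:verificationStableAdmissibleComplete1}, augmented with the hat-arguments $\hat{x}^k_{i,j}$ that preserve well-formedness and whose claims are added to the verified set $S$; (3)~verify that the constructed CAF $F$ lies in $\ImageTrans{1}$ by appealing to the characterization in Proposition~\ref{prop:characterizationImage1-transitive}, i.e., that $\problematicPart(F)$ is acyclic and no attack $(a,b)$ in $F$ is shadowed by a path from $a$ to $b$ in $\problematicPart(F)$; and (4)~prove the correctness equivalence that the QBF is true iff $S \in \sigma_c(F) = \sigma_p^1(F')$ for the associated PCAF $F'$, exploiting that Reduction~$1$ deletes precisely the edges in $\problematicPart(F)$ so that the semantic evaluation on $\red{1}(F')$ matches the target general CAF.

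I expect the main obstacle to be step~(4) combined with the interaction between the range-based maximization of $\semi$/$\stage$ and the extra helper arguments: the gadgets must be designed so that the $\hat{x}^k_{i,j}$ contribute fixed claims to $S$ regardless of the guess, so that the range (for $\semi_p^1$ and $\stage_p^1$) and the admissibility/defense structure (for $\pref_p^1$) correctly certify maximality exactly when the universally quantified part of the QBF is satisfied. Getting the transitivity of $\succ$ to hold simultaneously with acyclicity of $\problematicPart(F)$ is the other delicate point, but by Proposition~\ref{prop:characterizationImage1-transitive} it suffices to route all wf-problematic edges along acyclic, path-free configurations, which the layered structure of the helper gadgets naturally provides; the argument then parallels the transitive constructions already used in Propositions~\ref{prop:verificationConflictfreeNaive1} and~\ref{prop:verificationStableAdmissibleComplete1}.
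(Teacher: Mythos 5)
Your membership argument and your observation that $\SigmaP{2}$-hardness cannot be inherited from wfCAFs are both correct, and steps (1), (3) and the appeal to Proposition~\ref{prop:characterizationImage1-transitive} are the right scaffolding. The gap is in the core of the hardness construction, step (2): you propose to take the (unspecified) general-CAF $\SigmaP{2}$-hardness instance and restore well-formedness by replacing offending attacks with the six-cycle gadget of Proposition~\ref{prop:verificationStableAdmissibleComplete1}. That gadget enforces mutual exclusion of its two endpoints only through \emph{admissibility} (defending $x_i$ forces attacking $\overline{x}_j$); it creates no conflict between them, so $\{x_i,\overline{x}_j\}$ remains conflict-free. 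For $\stage$, whose extensions are conflict-free sets of maximal range, the gadget therefore does not reproduce the exclusion that a direct attack provides and the simulation breaks; for $\semi$, the helper and hat-arguments perturb every range comparison and you give no argument that range-maximality is preserved under the substitution. You explicitly name this as ``the main obstacle'' but do not resolve it, and it is precisely where the proof lives.

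The paper takes a different and much lighter route: it adapts the $\text{QBF}^2_\forall$ reduction for skeptical preferred acceptance \cite[Reduction~3.7]{DvorakD17}, reducing to the \emph{complement} of verification, and obtains the needed multiplicity of realizations not by gadget-simulating forbidden attacks but by adding \emph{unattacked} duplicate arguments $y^*,\overline{y}^*$ with $\cl(y^*)=y$. The claim-set $S = Y \cup \overline{Y}$ is then realizable by one argument-set per assignment to $Y$ (the starred arguments supply the claims of the literals not chosen), the wf-problematic part consists only of the edges $(v^*,v)$ --- all paths of length one, so membership in $\ImageTrans{1}$ with a transitive ordering is immediate --- and $S$ fails to be preferred exactly when every $Y$-assignment extends to a model of $\varphi$. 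Even there, $\semi$ and $\stage$ each require a separately engineered variant (extra self-attacking $d_v$ arguments to control ranges, and for $\stage$ a conflict-based rather than defense-based encoding of the clause check). To salvage your route you would need an analogous per-semantics redesign plus an explicit range-accounting argument; as written, the proposal does not yet contain a proof.
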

The proposition can be proven by adapting the standard translation for skeptical acceptance of preferred-semantics %
\cite[Reduction~3.7]{DvorakD17}. 

We now turn our attention to Reductions~$2$, $3$, and $4$. Since these reductions do not remove conflicts between arguments, it is easy to see that verification for conflict-free and naive semantics remains tractable. 

\begin{restatable}{proposition}{verificationConflictfreeNaiveImagesTwoThreeFour} \label{prop:verificationConflictfreeNaive2to4}
	$\Ver_{\sigma,i \in \{2,3,4\}}^\PCAF$ is in $\P$ for $\sigma \in \{\cf, \naive\}$. 
\end{restatable}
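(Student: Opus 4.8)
The plan is to show membership in $\P$ by leveraging Lemma~\ref{lemma:cfDoesNotChangeForReductions234}, which already tells us that Reductions~$2$, $3$, and $4$ preserve conflict-freeness: for every PCAF $F = (\Args,\Att,\cl,\succ)$, a set $E \subseteq \Args$ is conflict-free in $\red{i}(F)$ if and only if $E$ is conflict-free in the underlying CAF $(\Args,\Att,\cl)$, for $i \in \{2,3,4\}$. Since the underlying CAF is well-formed by definition of a PCAF, the idea is to reduce the verification problem for these reductions to the corresponding (tractable) verification problem on wfCAFs, whose complexity is recorded in Table~\ref{tab:complexity_CAFwfCAF} ($\Ver_{\cf}^\wfCAF$ and $\Ver_{\naive}^\wfCAF$ are both in $\P$).

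First I would handle conflict-free semantics. Given a PCAF $F$ and a claim-set $S$, we must decide whether $S \in \cf_p^i(F) = \cf_c(\red{i}(F))$, i.e., whether there is a conflict-free set $E$ in $\red{i}(F)$ with $\cl(E) = S$. By Lemma~\ref{lemma:cfDoesNotChangeForReductions234}, this is equivalent to asking whether there is a conflict-free set $E$ in the underlying wfCAF $(\Args,\Att,\cl)$ with $\cl(E) = S$, which is exactly $\Ver_{\cf}^\wfCAF$ on input $((\Args,\Att,\cl),S)$. Computing $(\Args,\Att,\cl)$ from $F$ just means dropping the preference relation, so the reduction is trivially polynomial, and tractability follows from $\Ver_{\cf}^\wfCAF \in \P$.

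Second I would handle naive semantics. Here the subtlety is that naiveness is defined via $\subseteq$-maximality among conflict-free \emph{argument} sets, so I must argue that the maximal conflict-free sets of $\red{i}(F)$ coincide with those of $(\Args,\Att,\cl)$. This follows directly from Lemma~\ref{lemma:cfDoesNotChangeForReductions234}: since the two frameworks have exactly the same collection of conflict-free sets (as subsets of the same argument set $\Args$), they have exactly the same $\subseteq$-maximal ones, hence $\naive(\red{i}(F)) = \naive((\Args,\Att,\cl))$ as sets of extensions, and therefore $\naive_p^i(F) = \naive_c((\Args,\Att,\cl))$ on the claim-level. Thus $\Ver_{\naive,i}^\PCAF$ reduces in polynomial time to $\Ver_{\naive}^\wfCAF$, which is in $\P$.

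The main obstacle, though a mild one, is being careful that the equivalence at the claim-level is genuinely inherited from the argument-level equivalence: one must note that since $\red{i}(F)$ and $(\Args,\Att,\cl)$ share the same argument set, the same claim-function, and (by the lemma) the same conflict-free sets, their claim-based semantics $\cf_c$ and $\naive_c$ coincide verbatim, so no further reasoning about the image classes $\Image{i}$ is needed. I expect the proof to be short, essentially observing that Lemma~\ref{lemma:cfDoesNotChangeForReductions234} reduces both problems to their well-formed counterparts, which are already known to be tractable.
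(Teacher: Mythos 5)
Your proposal is correct and follows essentially the same route as the paper: both invoke Lemma~\ref{lemma:cfDoesNotChangeForReductions234} to reduce verification under Reductions~$2$--$4$ to $\Ver_{\cf}^\wfCAF$ resp.\ $\Ver_{\naive}^\wfCAF$ on the underlying well-formed CAF, which are in $\P$ by Table~\ref{tab:complexity_CAFwfCAF}. Your extra remark that identical conflict-free sets yield identical $\subseteq$-maximal (naive) sets is a detail the paper leaves implicit, but it is the same argument.
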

\begin{proofsketch}
	By Lemma~\ref{lemma:cfDoesNotChangeForReductions234}, given a PCAF $F$, it suffices to test whether $C$ is conflict-free (resp.\ naive) in the underlying CAF of $F$. This can be done in polynomial time for wfCAFs (cf.\ Table~\ref{tab:complexity_CAFwfCAF}). 
\end{proofsketch}

As it turns out, with Reductions~$2$, $3$, and $4$ we retain the benefits of wfCAFs over general CAFs for almost all investigated semantics with respect to computational complexity. In short, verification for wfCAFs is easier than on general CAFs because, given a wfCAF $F$ and a set of claims $C$, a set of arguments $S$ can be constructed in polynomial time such that $S$ is the unique maximal admissible set in $F$ with claim $\cl(S) = C$ \cite{DvorakW20}. Making use of the fact that Reductions~$2$, $3$, and~$4$ do not alter conflicts between arguments, we can construct such a maximal set of arguments also for PCAFs: given a PCAF $F$ and set $C$ of claims, 
we define the set $E_0(C)$ containing all arguments of $F$ with a claim in $C$;
the set $E_1^i(C)$ is obtained from $E_0(C)$ by removing all arguments attacked by $E_0(C)$ in the underlying CAF of $F$; 
finally, the set $E_*^i(C)$ is obtained by repeatedly removing all arguments not defended by $E_1^i(C)$ in $\red{i}(F)$ until a fixed point is reached.
Recall that $S^+_{(\Args,\Att)} = \{ a \mid (b,a)\in \Att, b\in S\}$ denotes the arguments attacked by $S$ in $(\Args,\Att)$. 

\begin{definition}%
	Given a PCAF $F=(\Args,\Att,\cl, \succ)$, a set of claims $C$, and $i \in \{2,3,4\}$, we define
	\begin{align*}
	E_0(C) =& \{ a\in \Args \mid \cl(a)\in C\};\\
	E_1^i(C) =& E_0(C) \setminus E_0(C)^+_{(\Args,\Att)};\\
	E_{k}^i(C) =& \{ x \in E_{k-1}^i(C) \mid x \text{ is defended by } E_{k-1}^i(C) \\ & \text{ in } \red{i}(F) \} \text{ for } k \geq 2; \\
	E_*^i(C) =& E_k^i \text{ for $k \geq 2$ such that } E_k^i(C) = E_{k-1}^i(C).
	\end{align*} 
\end{definition}

The above definition is based on \cite[Definition~5]{DvorakW20}, but with the crucial differences that undefended arguments are (a) computed w.r.t.\ $\red{i}(F)$ and (b) are iteratively removed until a fixed point is reached. The sets $E_1^i(C)$ and $E_*^i(C)$ have useful properties.

For conflict-free based semantics we observe that the conflicts are not affected by the reductions 
and thus one can use existing results for well-formed CAFs~\cite[Lemma~$1$]{DvorakW20} to obtain that $E_1^i(C)$ is the unique candidate for the maximal conflict-free set of arguments that realizes the claim set $C$.

\begin{restatable}{lemma}{conflictFreeLemma} \label{lemma:Im234ConflictFree}
	Let $F$ be a PCAF, $C$ be a set of claims and $i\! \in\! \{2,3,4\}$.
	We have that $C\! \in\! \cf_p^i(F)$ iff $\cl(E_1^i(C))\! =\! C$. 
	Moreover, if $C\! \in\! \cf_p^i(F)$ then $E_1^i(C)$ is the unique maximal conflict-free set $S$ in $\red{i}(F)$ such that $\cl(S)\! =\! C$.
\end{restatable}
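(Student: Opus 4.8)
The plan is to prove both directions of the iff and then the uniqueness/maximality claim, leaning crucially on Lemma~\ref{lemma:cfDoesNotChangeForReductions234}, which tells us that conflict-freeness in $\red{i}(F)$ coincides with conflict-freeness in the underlying well-formed CAF $(\Args,\Att,\cl)$ for $i \in \{2,3,4\}$. This lets me transfer the known well-formed-CAF result \cite[Lemma~1]{DvorakW20} essentially verbatim. First I would observe that $E_0(C) = \{a \mid \cl(a) \in C\}$ is the largest set of arguments whose claims all lie in $C$, and that $E_1^i(C) = E_0(C) \setminus E_0(C)^+_{(\Args,\Att)}$ removes exactly those arguments in $E_0(C)$ that are attacked (in the underlying AF) by some argument of $E_0(C)$. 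The key structural fact I would establish is that $E_1^i(C)$ is conflict-free in $(\Args,\Att)$: if $(x,y) \in \Att$ for $x,y \in E_1^i(C) \subseteq E_0(C)$, then $y \in E_0(C)^+_{(\Args,\Att)}$, contradicting $y \in E_1^i(C)$.

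For the forward direction, suppose $C \in \cf_p^i(F)$. By definition there is $S \in \cf(\red{i}(F))$ with $\cl(S) = C$, and by Lemma~\ref{lemma:cfDoesNotChangeForReductions234} this $S$ is conflict-free in $(\Args,\Att)$ as well. Since $\cl(S) = C$ we have $S \subseteq E_0(C)$, and since $S$ is conflict-free in $(\Args,\Att)$ no argument of $S$ is attacked by another argument of $S$; I would then argue, exactly as in the well-formed case, that well-formedness of $(\Args,\Att,\cl)$ forces $S \subseteq E_1^i(C)$. The reason is that if some $a \in S$ were attacked by $E_0(C)$, say $(b,a) \in \Att$ with $\cl(b) \in C$, then there is $b'' \in S$ with $\cl(b'') = \cl(b)$, and well-formedness gives $(b'',a) \in \Att$, contradicting conflict-freeness of $S$. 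Hence $S \subseteq E_1^i(C)$, and since $\cl(S) = C$ we get $C = \cl(S) \subseteq \cl(E_1^i(C)) \subseteq C$, so $\cl(E_1^i(C)) = C$. The converse direction is immediate: if $\cl(E_1^i(C)) = C$, then since $E_1^i(C)$ is conflict-free in $(\Args,\Att)$, Lemma~\ref{lemma:cfDoesNotChangeForReductions234} makes it conflict-free in $\red{i}(F)$, witnessing $C \in \cf_p^i(F)$.

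For the maximality and uniqueness claim, assuming $C \in \cf_p^i(F)$, I would take any conflict-free set $S$ in $\red{i}(F)$ with $\cl(S) = C$. The argument above already shows $S \subseteq E_1^i(C)$, which simultaneously yields that $E_1^i(C)$ is the unique $\subseteq$-maximal such set: it contains every competitor $S$ and is itself conflict-free with claim set exactly $C$. The one place to be careful is that $E_1^i(C)$ really does have claim set $C$ and not a proper subset — but this is exactly the content of the forward direction we just proved, so the pieces fit together.

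The main obstacle I expect is making the well-formedness step fully rigorous, i.e.\ justifying that any conflict-free witness $S$ with $\cl(S)=C$ must be contained in $E_1^i(C)$ rather than merely in $E_0(C)$. This is where the interplay between the reduction (via Lemma~\ref{lemma:cfDoesNotChangeForReductions234}) and the well-formedness of the \emph{underlying} CAF is essential: the reduction $\red{i}(F)$ itself need not be well-formed, so I cannot apply well-formedness directly to the reduced framework, and must instead route the argument through $(\Args,\Att,\cl)$, using conflict-freeness-preservation to move back and forth. Everything else is a routine adaptation of the corresponding wfCAF lemma.
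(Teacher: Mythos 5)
Your proof is correct and takes essentially the same route as the paper: the paper simply cites \cite[Lemma~1]{DvorakW20} for the well-formed-CAF statement and combines it with Lemma~\ref{lemma:cfDoesNotChangeForReductions234}, whereas you additionally unfold the proof of that cited lemma (the $S \subseteq E_1^i(C)$ containment via well-formedness of the underlying CAF), which is a faithful and correct elaboration of the same argument.
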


For admissible semantics we are looking for a conflict-free set that defends all its arguments.
Thus we start from the conflict-free set $E_1^i(C)$. Notice that arguments that are not in $E_1^i(C)$ cannot be contained in any admissible set $S$ with $\cl(S)\! =\! C$. 
We can then obtain the maximal admissible set realizing $C$ in  $\red{i}(F)$ by iteratively removing arguments that are not defended by the current set of arguments. Once we reach a fixed-point we have an admissible set, but need to check whether we still cover all the claims of $C$.

\begin{restatable}{lemma}{admissibleLemma} \label{lemma:Im234Admissible}
	Let $F$ be a PCAF, $C$ be a set of claims and $i\! \in\! \{2,3,4\}$.
	We have that $C\! \in\! \adm_p^i(F)$ iff $\cl(E_*^i(C))\! =\! C$.
	Moreover, if $C\! \in\! \adm_p^i(F)$ then $E_*^i(C)$ is the unique maximal admissible set $S$ in $\red{i}(F)$ such that $\cl(S)\! =\! C$.
\end{restatable}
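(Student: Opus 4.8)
The plan is to reduce all three assertions to a single containment fact: any set $T$ that is admissible in $\red{i}(F)$ with $\cl(T)=C$ satisfies $T\subseteq E_*^i(C)$, together with the observation that $E_*^i(C)$ is itself admissible whenever its claims cover $C$. Since $E_*^i(C)\subseteq E_1^i(C)\subseteq E_0(C)$ by construction, the inclusion $\cl(E_*^i(C))\subseteq C$ holds unconditionally, so throughout only the reverse inclusion $C\subseteq\cl(E_*^i(C))$ is ever at stake.

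For the ``if'' direction I would first argue that $E_*^i(C)$ is \emph{always} admissible in $\red{i}(F)$. Conflict-freeness is inherited from $E_1^i(C)$: by definition no two arguments of $E_1^i(C)$ attack each other in the underlying CAF, and by Lemma~\ref{lemma:cfDoesNotChangeForReductions234} this transfers to conflict-freeness in $\red{i}(F)$; as $E_*^i(C)\subseteq E_1^i(C)$, it too is conflict-free. Admissibility then follows from the fixed-point condition, since at the fixed point $E_*^i(C)=E_{k-1}^i(C)$ every argument of $E_*^i(C)$ is by construction defended by $E_{k-1}^i(C)=E_*^i(C)$ in $\red{i}(F)$. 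Consequently, if $\cl(E_*^i(C))=C$, then $E_*^i(C)$ witnesses $C\in\adm_c(\red{i}(F))=\adm_p^i(F)$.

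For the ``only if'' direction and the maximality claim, I take an admissible set $T$ in $\red{i}(F)$ with $\cl(T)=C$ (which exists whenever $C\in\adm_p^i(F)$) and prove $T\subseteq E_k^i(C)$ for all $k$ by induction. The base case $T\subseteq E_1^i(C)$ is the heart of the argument: by Lemma~\ref{lemma:cfDoesNotChangeForReductions234}, $T$ is conflict-free already in the underlying CAF $(\Args,\Att,\cl)$, which is well-formed. If some $t\in T$ were attacked in $(\Args,\Att)$ by an argument $a\in E_0(C)$, then since $\cl(a)\in C=\cl(T)$ there is $a'\in T$ with $\cl(a')=\cl(a)$; well-formedness yields $(a',t)\in\Att$, contradicting conflict-freeness of $T$ in the underlying CAF. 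Hence $t\notin E_0(C)^+_{(\Args,\Att)}$, so $T\subseteq E_1^i(C)$. The inductive step is routine: assuming $T\subseteq E_{k-1}^i(C)$, any attacker of $t\in T$ in $\red{i}(F)$ is counterattacked in $\red{i}(F)$ by some $c\in T\subseteq E_{k-1}^i(C)$ because $T$ is admissible, so $t$ is defended by $E_{k-1}^i(C)$ and thus $t\in E_k^i(C)$.

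With $T\subseteq E_*^i(C)$ established, everything closes: $C=\cl(T)\subseteq\cl(E_*^i(C))\subseteq C$ gives $\cl(E_*^i(C))=C$, proving the ``only if'' direction; and since every admissible $S$ in $\red{i}(F)$ with $\cl(S)=C$ is contained in the admissible set $E_*^i(C)$ (which itself realizes $C$), the set $E_*^i(C)$ is the $\subseteq$-greatest such set and hence the unique maximal one. I expect the main obstacle to be exactly the base case $T\subseteq E_1^i(C)$: this is the only point where the interplay between well-formedness of the underlying CAF and the conflict-preservation of Reductions~$2$, $3$, and~$4$ is indispensable, and it is precisely the step that breaks for Reduction~$1$, which does not preserve conflict-freeness.
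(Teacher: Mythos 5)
Your proof is correct and follows essentially the same route as the paper: both reduce everything to showing that any admissible realization of $C$ in $\red{i}(F)$ is contained in $E_*^i(C)$ via induction on the stages $E_k^i(C)$, with the base case resting on conflict-preservation of Reductions~2--4 plus well-formedness. The only cosmetic differences are that you re-derive the base case $T\subseteq E_1^i(C)$ directly instead of citing Lemma~\ref{lemma:Im234ConflictFree}, and you spell out the admissibility of $E_*^i(C)$, which the paper asserts ``by construction.''
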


By computing the maximal conflict-free (resp.\ admissible) extensions 
$E_1^i(C)$ (resp.\ $E_*^i(C)$) 
for a set of claims $C$, the verification problem becomes easier for most semantics.

\begin{restatable}{proposition}{verificationStableAdmissibleCompleteImagesTwoThreeFour} \label{prop:verificationAdm2to4}
	$\Ver_{\sigma,i \in \{2,3,4\}}^\PCAF$ is in $\P$ for $\sigma \in \{\adm, \stable\}$. It is $\coNP$-complete for $\sigma \in \{\pref, \semi, \stage\}$, even when considering only transitive preferences.
\end{restatable}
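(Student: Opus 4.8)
The plan is to establish membership and hardness separately for the two complexity claims. For the tractable cases $\sigma \in \{\adm, \stable\}$, I would rely directly on Lemma~\ref{lemma:Im234Admissible} and the machinery around $E_*^i(C)$. Given a PCAF $F$ and a candidate claim-set $C$, first compute $E_0(C)$, $E_1^i(C)$, and then $E_*^i(C)$ by the iterative defense-removal process; all of these are polynomial-time computable since the reductions $\red{i}$ are polynomial and each iteration removes at least one argument. For admissible semantics, Lemma~\ref{lemma:Im234Admissible} gives immediately that $C \in \adm_p^i(F)$ iff $\cl(E_*^i(C)) = C$, which is a polynomial-time check. For stable semantics, I would additionally verify that $E_*^i(C)$ attacks every argument outside it in $\red{i}(F)$: since $E_*^i(C)$ is the unique maximal admissible set realizing $C$, it is the only candidate that could be stable with claim-set $C$, so one computes $E_*^i(C)$ and tests the range condition $\big(E_*^i(C)\big)^\oplus_{\red{i}(F)} = \Args$, again in polynomial time.

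For the $\coNP$-completeness claims with $\sigma \in \{\pref, \semi, \stage\}$, I would handle membership and hardness separately. For $\coNP$-membership, the key observation is that the maximal candidate realizing $C$ is already fixed: by Lemma~\ref{lemma:Im234Admissible} (for $\pref, \semi$) or by Lemma~\ref{lemma:Im234ConflictFree} together with conflict-free maximization (for $\stage$), there is a unique polynomial-time computable candidate set of arguments $E$ with $\cl(E) = C$, and one then needs only to refute the existence of a witness that $E$ is not $\subseteq$-maximal (resp.\ not range-maximal). For preferred semantics, $C \in \pref_p^i(F)$ fails iff $E_*^i(C)$ is not admissible-maximal, i.e.\ iff some admissible set strictly extends its range/content, and guessing such a witnessing set and verifying admissibility is in $\NP$, so the complement is in $\coNP$. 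Analogous arguments via range comparison handle $\semi$ and $\stage$, where the guessed witness is an admissible (resp.\ conflict-free) set with strictly larger range.

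For $\coNP$-hardness, I expect this to be the main obstacle, since one must encode a coNP-hard problem (for instance the complement of credulous acceptance, or unsatisfiability-style verification) into a PCAF whose underlying CAF is well-formed, and crucially the gadget must survive the specific reduction $\red{i}$ for each $i \in \{2,3,4\}$ simultaneously. The natural route is to adapt the known $\coNP$-hardness construction for $\Ver_\sigma^\wfCAF$ (recall that this problem is already $\coNP$-complete for wfCAFs by Table~\ref{tab:complexity_CAFwfCAF}) and observe that a wfCAF is itself obtainable via any reduction with an empty preference relation; indeed $\setOfAllwfCAFs \subseteq \Image{i}$ and more specifically every wfCAF lies in $\ImageTrans{i}$ via the empty (vacuously transitive) ordering. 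This lets hardness transfer essentially for free: the existing lower-bound instance is a PCAF with empty $\succ$, so $\red{i}$ leaves it unchanged, and the transitivity qualifier holds trivially. I would therefore phrase the hardness direction as a reduction from the wfCAF verification problem, noting that the input wfCAF plus empty preference relation is a valid PCAF instance for which $\red{i}(F)$ coincides with the underlying CAF.

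The subtle point to verify carefully is the membership direction for $\stage_p^i$: unlike $\pref$ and $\semi$, stage semantics maximizes range over conflict-free (not admissible) sets, so the relevant unique candidate is $E_1^i(C)$ from Lemma~\ref{lemma:Im234ConflictFree} rather than $E_*^i(C)$. I would confirm that $C \in \stage_p^i(F)$ holds iff $\cl(E_1^i(C)) = C$ and no conflict-free set in $\red{i}(F)$ has strictly larger range, the latter refutation being the $\NP$ witness placing the complement in $\coNP$. The reliance on Lemma~\ref{lemma:cfDoesNotChangeForReductions234}, which guarantees that conflict-freeness in $\red{i}(F)$ matches that of the underlying CAF for $i \in \{2,3,4\}$, is what makes $E_1^i(C)$ well-behaved and the whole approach go through.
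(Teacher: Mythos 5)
Your proposal is correct and follows essentially the same route as the paper: polynomial-time computation of the unique maximal realizations $E_*^i(C)$ (resp.\ $E_1^i(C)$ for range-based conflict-free reasoning) via Lemmas~\ref{lemma:Im234ConflictFree} and~\ref{lemma:Im234Admissible}, a single $\coNP$ maximality check on that unique candidate for $\pref$, $\semi$, and $\stage$, and $\coNP$-hardness inherited from $\Ver_{\sigma}^{\wfCAF}$ since every wfCAF with empty (vacuously transitive) preferences is a valid PCAF instance left unchanged by $\red{i}$. No gaps.
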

\begin{proofsketch}
	Let $F = (\Args,\Att,\cl, \succ)$ be a PCAF, let $C$ be a set of claims, and let $i \in \{2,3,4\}$. We can compute $\red{i}(F)$, $E_1^i(C)$, and $E_*^i(C)$ in polynomial time. 
	
	For $\adm$, by Lemma~\ref{lemma:Im234Admissible}, it suffices to test whether $\cl(E_*^i(C)) = C$. 
	For $\stb$, we first check whether $C \in \adm_p^i(F)$. If not, ${C \not\in \stb_p^i(F)}$. If yes, then, by Lemma~\ref{lemma:Im234Admissible}, ${\cl(E_*^i(C)) = C}$. We can check in polynomial time if $E_*^i(C) \in \stb(\red{i}(F))$. If yes, we are done. If no, then there is an argument $x$ that is not in $E_*^i(C)$ but is also not attacked by $E_*^i(C)$ in $\red{i}(F)$. Moreover, there can be no other $S \in \stb(\red{i}(F))$ with $\cl(S) = C$ since for any such $S$ we would have $S \subseteq E_*^i(C)$, which would imply that $S$ does not attack $x$ and that $x \not \in S$. 
	
	The arguments for $\sigma \in \{\pref, \semi, \stage\}$ are similar, but some checks require $\coNP$-time. 
\end{proofsketch}

Regarding complete semantics, only Reduction~$3$ retains the benefits of wfCAFs. Here, 
the fact that Reductions~$2$ and~$4$ can introduce new attacks is enough to see an increase in complexity. 

\begin{figure}[t]
	\centering
	\tikz{
		\node[arg, label={below}:$a$] (a1) at (1.8,0) {$a$};
		\node[arg, label={below}:$a$] (a2) at (2.7,0) {$\overline{a}$};
		\node[arg, label={above}:$d_{a}$] (da1) at (0,1) {$d_a$};
		\node[arg, label={below}:$d'_{a}$] (da12) at (0.9,0.5) {$d'_a$};
		\node[arg, label={above}:$d'_{\overline a}$] (da22) at (0.9,1.15) {$d'_{\overline a}$};
		
		\node[arg, label={below}:$b$] (b1) at (4.5,0) {$b$};
		\node[arg, label={below}:$b$] (b2) at (5.4,0) {$\overline{b}$};
		\node[arg, label={above}:$d_{b}$] (db1) at (7.2,1) {$d_b$};
		\node[arg, label={above}:$d'_{b}$] (db12) at (6.3,1.15) {$d'_b$};
		\node[arg, label={below}:$d'_{\overline b}$] (db22) at (6.3,0.5) {$d'_{\overline b}$};
		
		\node[arg, label={above}:$c_1$] (c1) at (2.25,1.25) {$c_1$};
		\node[arg, label={above}:$c_2$] (c2) at (4.95,1.25) {$c_2$};
		\node[arg, label={left}:$\varphi$] (phi) at (3.6,2.25) {$\varphi$};
		\draw[attack] 
		
		(a1) edge[color=gray, ultra thick] (c1)
		(c1) edge[color=gray, ultra thick] (a1)
		(b1) edge[color=gray, ultra thick]  (c1)
		(c1) edge[color=gray, ultra thick]  (b1)
		(b2) edge[color=gray, ultra thick] (c2)
		(c2) edge[color=gray, ultra thick] (b2)
		(a2) edge[color=gray, ultra thick]  (c2)
		(c2) edge[color=gray, ultra thick]  (a2)
		(c1) edge[loop left] (c1)
		(c2) edge[loop left] (c2)
		(c1) edge (phi)
		(c2) edge (phi)
		
		(a1) edge[color=gray, ultra thick] (da12)
		(da12) edge[color=gray, ultra thick] (a1)
		(a2) edge[color=gray, ultra thick] (da22)
		(da22) edge[color=gray, ultra thick] (a2)
		(da12) edge (da1)
		(da22) edge (da1)
		
		(b1) edge[color=gray, ultra thick]  (db12)
		(db12) edge[color=gray, ultra thick]  (b1)
		(b2) edge[color=gray, ultra thick] (db22)
		(db22) edge[color=gray, ultra thick] (b2)
		(db12) edge (db1)
		(db22) edge (db1)
		;
	}
	\caption{$\red{4}(F)$ from the proof of Proposition~\ref{prop:verificationComp2to4}, $\varphi =  ((a \lor b) \allowbreak \land \allowbreak (\neg a \lor \neg b))$. Symmetric attacks drawn in gray and thick have been introduced by Reduction~$4$.}
	\label{fig:reductionVerficationImage4Complete}
\end{figure}

\begin{restatable}{proposition}{verificationCompleteImagesTwoThreeFour} \label{prop:verificationComp2to4}
	$\Ver_{\comp,3}^\PCAF$ is in $\P$. $\Ver_{\comp,i \in \{2,4\}}^\PCAF$ is $\NP$-complete, even for transitive preferences.
\end{restatable}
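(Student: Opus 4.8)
The plan is to treat the two claims separately: for $\Ver_{\comp,3}^\PCAF$ I would reuse the maximal-admissible-set machinery of Lemma~\ref{lemma:Im234Admissible}, while for $i\in\{2,4\}$ I would prove $\NP$-hardness by a reduction from $\SAT$ that exploits precisely the attacks that Reductions~$2$ and~$4$ newly introduce.

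For membership of $\Ver_{\comp,3}^\PCAF$ in $\P$, the starting point is the set $E_*^3(C)$, which is the unique maximal admissible set realizing $C$ and is polynomial-time computable (Lemma~\ref{lemma:Im234Admissible}). The key claim I would establish is that whenever $C\in\comp_p^3(F)$, the set $E_*^3(C)$ is itself \emph{the} complete extension of $\red{3}(F)$ realizing $C$. Granting this, the algorithm just computes $\red{3}(F)$ and $E_*^3(C)$ and accepts iff $\cl(E_*^3(C))=C$ and $E_*^3(C)$ is complete in $\red{3}(F)$; all three checks run in polynomial time. To prove the claim, let $S$ be any complete extension with $\cl(S)=C$; by maximality $S\subseteq E_*^3(C)$, so it suffices to show $S=E_*^3(C)$. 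Take $x\in E_*^3(C)$ and an attacker $w$ of $x$ in $\red{3}(F)$; since $E_*^3(C)$ is admissible there is $u\in E_*^3(C)$ with $(u,w)\in\red{3}(F)$. If $u\in S$ we are done; otherwise $\cl(u)\in C$ yields some $u'\in S$ with $\cl(u')=\cl(u)$, and here the special structure of Reduction~$3$ enters. Because Reduction~$3$ only deletes attacks, $(u,w)\in R$, so well-formedness of the underlying CAF gives $(u',w)\in R$; either $(u',w)$ survives in $\red{3}(F)$, so $u'\in S$ attacks $w$, or it is deleted, which by Definition~\ref{def:reductions} forces $w\succ u'$ and $(w,u')\in R$, whence $(w,u')\in\red{3}(F)$ and admissibility of $S$ again supplies an attacker of $w$ inside $S$. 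In every case $S$ attacks $w$, so $S$ defends $x$, and completeness of $S$ forces $x\in S$. Hence $E_*^3(C)=S$ is complete.

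The main obstacle is exactly this defense-transfer argument, and it is where the contrast with Reductions~$2$ and~$4$ is visible: those reductions may \emph{introduce} attacks, so the step $(u,w)\in\red{i}(F)\Rightarrow(u,w)\in R$ fails and no canonical complete extension realizing $C$ can be singled out.

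For $i\in\{2,4\}$, membership in $\NP$ follows from $\Ver_{\comp}^\CAF\in\NP$ (Table~\ref{tab:complexity_CAFwfCAF}) together with polynomial-time computability of the reductions. For $\NP$-hardness I would reduce from $\SAT$ using the gadget of Figure~\ref{fig:reductionVerficationImage4Complete}: each variable is represented by two literal arguments sharing a claim, and the PCAF is chosen so that Reduction~$2$ (resp.~$4$) turns certain one-directional attacks into the symmetric attacks drawn in gray, linking the literal arguments both to self-attacking clause arguments and to consistency-gadget arguments $d$. The target claim set $C$ contains the variable claims and $\cl(\varphi)$ but excludes the $d$-claims. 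A complete extension $S$ realizing $C$ must then select exactly one literal per variable — excluding the $d$-claims forbids selecting both (both would defeat the only attackers of some $d$, forcing $d\in S$), while covering $C$ forbids selecting neither — and, since $S$ must contain $\varphi$, which is defended only when each self-attacking clause argument is attacked by a chosen literal, such an $S$ exists iff $\varphi$ is satisfiable. The required preference relation is acyclic and can be taken transitive, so hardness already holds for transitive preferences. The delicate point here is checking that \emph{completeness}, not merely admissibility, is faithfully captured — i.e.\ that the defended-closure forced by complete semantics never secretly adds an argument whose claim lies outside $C$ — which is exactly what the clause self-loops and the $d$-gadgets are designed to control.
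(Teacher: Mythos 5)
Your argument for $\Ver_{\comp,3}^\PCAF \in \P$ is correct and is essentially the paper's: the same defense-transfer step (if $(u',w)$ is deleted by Reduction~$3$ then $w \succ u'$ and $(w,u')\in R$ survives, so admissibility of $S$ forces an attacker of $w$ inside $S$) appears in the paper's Lemma~\ref{prop:verificationCompleteThirdImage}, phrased contrapositively (if $E_*^3(C)$ is not complete, no complete realization of $C$ exists) rather than as your uniqueness claim. The $\NP$-membership observation and the Reduction-$4$ hardness gadget also match the paper.

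There is, however, a genuine gap in your hardness argument for Reduction~$2$. You assert that the PCAF of Figure~\ref{fig:reductionVerficationImage4Complete} can be ``chosen so that Reduction~$2$ (resp.~$4$) turns certain one-directional attacks into the symmetric attacks drawn in gray,'' but Reduction~$2$ never symmetrizes an asymmetric attack: by Definition~\ref{def:reductions} it \emph{reverses} $(a,b)$ into $(b,a)$, deleting the original direction, and one checks directly that $\red{2}$ outputs a symmetric pair only if that pair was already symmetric in the input. Indeed the CAF of Figure~\ref{fig:reductionVerficationImage4Complete} violates the characterization of $\Image{2}$ in Proposition~\ref{prop:characterizations} (e.g.\ via $(\overline a, c_1)\in\problematicPart$, $(c_1,\overline a)\notin R$, $(a,c_1)\in R$, $(c_1,a)\in R$), so no PCAF reduces to it under Reduction~$2$. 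Worse, if you apply Reduction~$2$ to the Reduction-$4$ PCAF itself, all attacks onto the literal arguments $x,\overline x$ get reversed, leaving them unattacked; every complete extension then contains both $x$ and $\overline x$, hence defends and must contain $d_x$, so the verification instance is always negative and the reduction fails. This is precisely why the paper uses a separate, more elaborate gadget for Reduction~$2$ (Lemma~\ref{prop:verificationCompleteSecondImage}), with auxiliary arguments $d_x^1,\dots,d_x^4$ and with $d_x^2$ sharing the claim $x$, so that after the reversals each literal remains attacked (by $d_x^3$) and is defended exactly when its same-claim partner $d_x^2$ is selected. Your proposal needs such a second construction to cover $i=2$.
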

\begin{proofsketch}
	$\P$-membership for $\Ver_{\comp,3}^\PCAF$ is similar to the proof of Proposition~\ref{prop:verificationAdm2to4}. We demonstrate $\NP$-hardness of $\Ver_{\comp,4}^\PCAF$. 
	Let $\varphi$ be an arbitrary instance of 3-\SAT\ given as a set $C$ of clauses over variables $X$
	and let $\overline{X} = \{\overline x \mid x \in X\}$. 
	We construct a PCAF $F = (\Args,\Att,\cl, \succ)$ as well as a set of claims $S = X \cup \{\varphi\}$: 
	\begin{itemize}
		\item $\Args = \{\varphi\} \cup C \cup X \cup \overline{X} \cup \{d_x \mid x\in X\} \cup \{d'_x \mid x\in X \cup \overline X\}$;
		\item $\Att =  \{(c,\varphi)\mid c\in C\} \cup \{(c,c)\mid c\in C\}\ \cup$\\ 
			\phantom{$\Att =\ $}%
			$\{(c,x) \mid x \in c, c \in C\}\ \cup \{(c,\overline{x}) \mid \neg x \in c, c \in C\}\ \cup$\\
			\phantom{$\Att =\ $}%
			$\{(d'_x,x) \mid x\! \in\! X \cup \overline{X}\} \cup \{(d'_x,d_x),(d'_{\overline{x}},d_x)\mid x\! \in\! X \}$;
		\item $\cl(x)=\cl(\overline{x})= x$ for $x \in X$,\\ $\cl(v) = v$ otherwise;
		\item $x \succ c$, $x \succ d'_x$ for all $x \in X \cup \overline{X}$ and all $c \in C$. 
	\end{itemize}
	Figure~\ref{fig:reductionVerficationImage4Complete} illustrates the above construction. 
	It can be verified that $\varphi$ is satisfiable iff $S \in \comp_c(\red{4}(F))$.
\end{proofsketch}

Table~\ref{table:complexityResults} summarizes our complexity results. Reduction~3 preserves the lower complexity of wfCAFs for all investigated semantics, while Reductions~2 and~4 preserve the lower complexity for all but complete semantics. Reduction~1 does not preserve the advantages of wfCAFs, and rather exhibits the full complexity as general CAFs.
Notice that the lower complexity of the verification problem is crucial for enumerating extensions. In particular, the improved enumeration algorithm for wfCAFs~\cite{DvorakW20} is based on the polynomial time verification 
of claim-sets 
and thus extends to PCAFs under~Reductions 2--4.

\begin{table}
	\centering
	\begin{tabular}{c|ccc}
		$\sigma$ & $i = 1$ & $i \in \{2,4\}$ & $i = 3$\\ 
		\hline
		$\cf/\adm/\naive/\stb$ & $\NP$-c & in $\P$ & in $\P$  \\
		$ \comp$ & $\NP$-c & $\NP$-c & in $\P$ \\
		$\pref/\semi/\stage$ & $\SigmaP{2}$-c & $\coNP$-c & $\coNP$-c 
	\end{tabular}
	\caption{Complexity of $\Ver_{\sigma,i}^\PCAF$. Results also hold when considering only PCAFs with transitive preferences.
	}
	\label{table:complexityResults}
\end{table}

\section{Conclusion} \label{sec:conclusion}

Many approaches to structured argumentation 
(i) assume that arguments with the same claims attack the same arguments
and (ii) take preferences into account.
Investigations on claim-augmented argumentation frameworks (CAFs) 
so far only consider (i), showing that the resulting subclass of well-formed
CAFs (wfCAFs) has several desired properties. The research question of this paper is to analyze
whether these properties carry over when preferences are taken into account, since 
the incorporation of preferences can violate the syntactical restriction of wfCAFs.

To this end, we introduced preference-based claim-augmented argumentation frameworks (PCAFs)
and investigated the impact of the four preference reductions commonly used in abstract argumentation when applied to PCAFs.
In particular, we examined and characterized CAF-classes that result from applying 
these %
reductions to PCAFs, and furthermore investigated the fundamental properties of I-maximality and computational complexity for PCAFs. 
Preserving I-maximality is desirable since it implies intuitive behavior of maximization-based semantics, %
while the complexity of the verification problem %
is crucial for the enumeration of claim-extensions.
Insights in terms of both semantical and computational properties provide necessary foundations towards a practical realization of this particular argumentation paradigm (we refer to, e.g., \cite{BaumeisterJNNR21,FazzingaFF20}, for a similar research endeavor in terms of incomplete AFs).

Our results show that 
(1) Reduction~$3$, the most conservative of the four reductions, exhibits the same properties as wfCAFs regarding computational complexity while also preserving I-maximality for most of the semantics; 
(2) Reductions~$2$ and~$4$ retain the advantages of wfCAFs regarding complexity for all but complete semantics, but do not preserve I-maximality for any investigated semantics; %
(3) under Reduction~$1$, neither complexity properties nor I-maximality are preserved. The above results hold even if we restrict ourselves to transitive preferences.
It is worth noting that Reduction~$3$ behaves favorably on regular AFs as well, fulfilling many principles for preference-based semantics~\cite[Table~1]{KaciTV18}. 

A possible direction for future work is to lift the preference ordering over arguments to sets of arguments and select extensions in this way. This has been investigated for regular AFs in combination with Reduction~$2$ \cite{AmgoudV14}. 
Another direction is to extend our studies to alternative semantics for CAFs 
\cite{DvorakRW20c, DvorakGRW21}, where
subset-maximization is handled on the claim-level instead of on the argument-level.

\bibliographystyle{apalike}
\bibliography{argumentation.bib}

\clearpage 

\appendix

\section{Proofs}

Here we gather full proofs for our results. 

\subsection{Proofs for Section~\ref{sec:characterization}}

\fourImagesLieBetweenCAFandwfCAF* 
\begin{proof}
	Let $i \in \{1,2,3,4\}$. $\setOfAllwfCAFs \subseteq \Image{i}$ follows from the fact that any $(\Args, \Att, \cl) \in \setOfAllwfCAFs$ can be obtained via Reduction~$i$ from the PCAF $(\Args, \Att, \cl, \emptyset)$. 
	
	$\setOfAllwfCAFs\! \subset\! \Image{i}$: consider the PCAF $F = (\{a,b\},\allowbreak\{(a,a),\allowbreak(a,b),\allowbreak(b,a),\allowbreak(b,b)\}, \cl,\allowbreak \succ)$ with $\cl(a) = \cl(b)$, and $b \succ a$. For all $i \in \{1,2,3,4\}$ we have $\red{i}(F) = (\{a,b\}, \{(a,a),(b,a),(b,b)\}, \cl)$, i.e., the resulting CAF $\red{i}(F)$ is not well-formed. 
	
	$\Image{i} \subset \setOfAllCAFs$: Towards a contradiction, assume there is a PCAF $F = (\Args, \Att, \cl, \succ)$ such that $\red{i}(F) = (\Args, \Def, \cl)$ with $(a,b), (b,a) \in \Def$ but $(a,a), (b,b) \not\in \Def$ for some $a,b \in \Args$ with $\cl(a) = \cl(b)$. This means that either $(a,b) \in \Att$ or $(b,a) \in \Att$, since none of four reductions can introduce the attacks $(a,b)$ and $(b,a)$ at the same time. By symmetry, we only look at the case that $(a,b) \in \Att$. Then, since $(\Args, \Att, \cl)$ is well-formed and since $\cl(a) = \cl(b)$, $(b,b) \in \Att$. But $\succ$ is non-reflexive, i.e., $(b,b)$ is not removed by Reduction~$i$ and therefore $(b,b) \in \Def$. Contradiction.
\end{proof}

\imagesAreIncomparable*
\begin{proof}
	We show the various statements separately.
	\begin{itemize}
		\item $\Image{1} \not\subseteq \Image{j}$ with $j \in \{2,3,4\}$: let $F$ be the CAF shown in Figure~\ref{fig:cafOnlyInImage1Appendix}. $F$ is in $\Image{1}$ as it can be obtained by applying Reduction~$1$ to the PCAF $(\Args, \Att, \cl, \succ)$ with $\Att = \{(a,b),(b,b)\}$ and $b \succ a$. Towards a contradiction, assume there is a PCAF $F'$ such that $\red{j}(F') = F$. Since self-attacks can not be removed by any of the four reductions, $(b,b) \in F'$. Since the underlying CAF of $F'$ must be well-formed, also $(a,b) \in F'$. But then, by the definition of Reduction~$j$, either $(a,b) \in \red{j}(F')$ or $(b,a) \in \red{j}(F')$. Contradiction.
		
		\item $\Image{2} \not\subseteq \Image{j}$ with $j \in \{1,3,4\}$: let $F$ be the CAF shown in Figure~\ref{fig:cafOnlyInImage2Appendix}. $F$ is in $\Image{2}$ as it can be obtained by applying Reduction~$2$ to the PCAF $(\Args, \Att, \cl, \succ)$ with $\Att = \{(a,b),(b,b)\}$ and $b \succ a$. Towards a contradiction, assume there is a PCAF $F'$ such that $\red{j}(F') = F$. Then  $(b,b) \in F'$ and therefore also $(a,b) \in F'$. But $(b,a) \not\in F'$, since $(a,a) \not\in F$ and therefore also $(a,a) \not\in F'$. But Reductions~$1$ and~$3$ can not introduce $(b,a)$ in this case, while Reduction~$4$ can not introduce $(b,a)$ without retaining $(a,b)$.
		
		\item $\Image{3} \subset \Image{j}$ with $j \in \{1,2,4\}$: let $F$ be any CAF in $\Image{3}$. Then there is a PCAF $F' = (A,R',\cl,\succ)$ such that $\red{3}(F') = F$. If $(a,b) \in F'$ and $(a,b) \in F$ we can assume that $b \not\succ a$ without loss of generality. If $(a,b) \in F'$ but $(a,b) \not\in F$, then, by definition of Reduction~$3$, $(b,a) \in F'$ and $b \succ a$. In this case, Reduction~$j$ functions in the same way as Reduction~3 (cf.\ Definition~\ref{def:reductions} and Figure~\ref{fig:reductionVisualization}), i.e., $\red{j}(F') = F$. This proves $\Image{3} \subseteq \Image{j}$. $\Image{3} \subset \Image{j}$ follows from $\Image{j} \not\subseteq \Image{3}$.
		
		\item $\Image{4} \not\subseteq \Image{j}$ with $j \in \{1,2,3\}$: let $F$ be the CAF shown in Figure~\ref{fig:cafOnlyInImage4Appendix}. $F$ is in $\Image{4}$ as it can be obtained by applying Reduction~$4$ to the PCAF $(\Args, \Att, \cl, \succ)$ with $\Att = \{(a,b),(b,b)\}$ and $b \succ a$. Towards a contradiction, assume there is a PCAF $F'$ such that $\red{j}(F') = F$. Then  $(b,b) \in F'$ and therefore also $(a,b) \in F'$. But $(b,a) \not\in F'$, since $(a,a) \not\in F'$. But Reduction~$1$, $2$~and~$3$ can not introduce $(b,a)$, at least not without deleting $(a,b)$. \qedhere
	\end{itemize}
\end{proof}
\begin{figure}[ht]
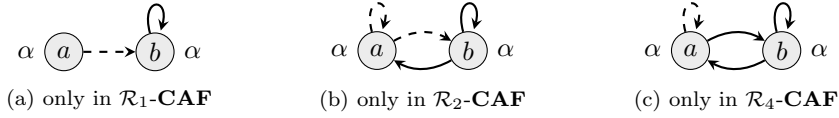

	\centering
	\begin{subfigure}{0.32\columnwidth}
		\centering
		\tikz{
			\node[arg, label={left}:$\alpha$] (a) at (0,0) {$a$};
			\node[arg, label={right}:$\alpha$] (b) at (1.2,0) {$b$};
			\draw[attack] 
			(a) edge [dashed] (b)
			(b) edge [loop above] (b);
		}
		\caption{only in $\Image{1}$}
		\label{fig:cafOnlyInImage1Appendix}
	\end{subfigure}
	\hfill
	\begin{subfigure}{0.32\columnwidth}
		\centering
		\tikz{
			\node[arg, label={left}:$\alpha$] (a) at (0,0) {$a$};
			\node[arg, label={right}:$\alpha$] (b) at (1.2,0) {$b$};
			\draw[attack] 
			(b) edge [bend left] (a)
			(a) edge [bend left, dashed] (b)
			(a) edge [dashed, loop above] (a)
			(b) edge [loop above] (b);
		}
		\caption{only in $\Image{2}$}
		\label{fig:cafOnlyInImage2Appendix}
	\end{subfigure}
	\hfill
	\begin{subfigure}{0.32\columnwidth}
		\centering
		\tikz{
			\node[arg, label={left}:$\alpha$] (a) at (0,0) {$a$};
			\node[arg, label={right}:$\alpha$] (b) at (1.2,0) {$b$};
			\draw[attack] 
			(a) edge [bend left] (b)
			(b) edge [bend left] (a)
			(b) edge [loop above] (b)
			(a) edge [dashed, loop above] (a);
		}
		\caption{only in $\Image{4}$}
		\label{fig:cafOnlyInImage4Appendix}
	\end{subfigure}
	\caption{CAFs that are contained only in $\Image{1}$, $\Image{2}$, and $\Image{4}$ respectively. Dashed arrows are edges in the wf-problematic part of the respective CAF.}
	\label{fig:imagesAreIncomparableAppendix}
\end{figure}

\characterizations*
\begin{proof}
	See below (Lemmas~\ref{prop:characterizationFirstImage}, \ref{prop:characterizationSecondImage}, \ref{prop:characterizationThirdImage}, and \ref{prop:characterizationFourthImage}).
\end{proof}

\begin{restatable}{lemma}{characterizationFirstImage} \label{prop:characterizationFirstImage}
	Let $F = (A,R,\cl)$ be a CAF. $F \in \Image{1}$ iff
	$(a,b) \in \problematicPart(F)$ implies $(b,a) \not\in \problematicPart(F)$.
\end{restatable}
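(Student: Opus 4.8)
The plan is to prove the two directions of the biconditional separately, leaning throughout on the single structural fact that Reduction~$1$ only ever \emph{deletes} attacks and never introduces new ones; formally, if $\red{1}(F') = (A,R,\cl)$ for a PCAF $F' = (A,R'',\cl,\succ)$, then $R \subseteq R''$.

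For the direction $F \in \Image{1} \Rightarrow$ the stated condition, I would argue by contraposition. Assume $(a,b) \in \problematicPart(F)$ and $(b,a) \in \problematicPart(F)$, and suppose toward a contradiction that $F = \red{1}(F')$ for some PCAF $F' = (A,R'',\cl,\succ)$. Unfolding wf-problematicity yields an argument $a'$ with $\cl(a') = \cl(a)$ and $(a',b)\in R$, and an argument $b'$ with $\cl(b') = \cl(b)$ and $(b',a) \in R$. Since $R \subseteq R''$, both of these attacks lie in $R''$; well-formedness of the underlying CAF $(A,R'',\cl)$ then forces $(a,b) \in R''$ and $(b,a) \in R''$. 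As $(a,b) \notin R$ and $(b,a) \notin R$ (both pairs are wf-problematic, hence absent from $R$), Reduction~$1$ must have deleted them, which by definition means $b \succ a$ and $a \succ b$ --- contradicting asymmetry of $\succ$.

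For the converse, assume $(a,b) \in \problematicPart(F)$ implies $(b,a) \notin \problematicPart(F)$, and build a witness PCAF $F' = (A,R',\cl,\succ)$ by setting $R' = R \cup \problematicPart(F)$ and declaring $b \succ a$ exactly when $(a,b) \in R' \setminus R = \problematicPart(F)$. Three things then need checking: that $\succ$ is asymmetric, that $(A,R',\cl)$ is well-formed, and that $\red{1}(F') = F$. Asymmetry is immediate from the hypothesis (if $b\succ a$ and $a\succ b$ then $(a,b),(b,a) \in \problematicPart(F)$, which is excluded), while the $a=b$ instance of the hypothesis rules out $(a,a)\in\problematicPart(F)$ and so secures irreflexivity. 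The equality $\red{1}(F') = F$ then follows by a quick case split: every $(c,e) \in R$ satisfies $(c,e)\notin\problematicPart(F)$, so $e \not\succ c$ and the attack survives the reduction, whereas every added pair $(c,e)\in\problematicPart(F)$ has $e \succ c$ by construction and is therefore deleted, leaving exactly $R$.

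I expect the main obstacle to be the well-formedness of $(A,R',\cl)$: one must verify that re-inserting all the missing attacks does not spawn \emph{new} wf-problematic pairs. I would establish this directly by showing $\problematicPart((A,R',\cl)) = \emptyset$. Suppose some $(c,e) \notin R'$ while $c'$ with $\cl(c') = \cl(c)$ has $(c',e) \in R'$; tracing whether $(c',e)$ lies in $R$ or in $\problematicPart(F)$ produces, in either case, an argument of claim $\cl(c)$ that already attacks $e$ in $R$, so $(c,e)$ is itself wf-problematic in $F$ and hence belongs to $R'$ --- contradicting $(c,e)\notin R'$. This closure argument, together with the bookkeeping around self-loops needed for irreflexivity, is the only place where genuine care is required; the remaining steps are routine unfoldings of the definitions.
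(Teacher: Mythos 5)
Your proposal is correct and follows essentially the same route as the paper's proof: the forward direction is the identical contrapositive argument (well-formedness of the source PCAF forces both deleted attacks, hence both preferences, contradicting asymmetry), and the backward direction uses the same witness $R' = R \cup \problematicPart(F)$ with the same preference assignment. The only difference is that you spell out the verification that $\problematicPart((A,R',\cl)) = \emptyset$, which the paper asserts without detail; your closure argument for that step is sound.
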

\begin{proof}
	``$\implies$'': By contrapositive. Suppose there is $(a,b) \in \problematicPart(F)$ such that $(b,a) \in \problematicPart(F)$. Towards a contradiction, assume $F \in \Image{1}$. Then there is a PCAF $F' = (A,R',\cl,\succ)$ such that $\red{1}(F') = F$. Since Reduction~$1$ can only delete but not introduce attacks, and since the underlying CAF of $F'$ must be well-formed, $(a,b) \in R'$ and $(b,a) \in R'$. However, then also $(b \succ a)$ and $(a \succ b)$ which means that $F'$ is not asymmetric. Contradiction.
	
	``$\impliedby$'': Suppose that $(a,b) \in \problematicPart(F)$ implies $(b,a) \not\in \problematicPart(F)$. Then $\red{1}(F') = F$ for the PCAF $F' = (A,R',\cl,\succ)$ with $R' = R \cup \{(a,b) \mid (a,b) \in \problematicPart(F)\}$ as well as $a \succ b \iff (b,a) \in R' \setminus R$. The underlying CAF of $F'$ is well-formed since $\problematicPart((A,R',\cl)) = \emptyset$. Furthermore, $\succ$ is asymmetric since $(a,b) \in \problematicPart(F)$ implies $(b,a) \not\in \problematicPart(F)$ and by construction of $F'$. 
\end{proof}

\begin{restatable}{lemma}{characterizationSecondImage} \label{prop:characterizationSecondImage}
	Let $F = (A,R,\cl)$ be a CAF. $F \in \Image{2}$ iff
	there are no arguments $a,a',b,b'$ in $F$ with $\cl(a) = \cl(a')$ and $\cl(b) = \cl(b')$ such that $(a,b) \in \problematicPart(F)$, $(b,a) \not\in R$, $(a',b) \in R$, and either $(b,a') \in R$ or $((a',b') \not\in R$ and $(b',a') \not\in R)$.
\end{restatable}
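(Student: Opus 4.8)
The plan is to prove the biconditional by establishing each direction separately, following the pattern already used for $\Image{1}$ (Lemma~\ref{prop:characterizationFirstImage}) and sketched for $\Image{2}$ in the proof of Proposition~\ref{prop:characterizations}. The two structural facts I would rely on throughout are: (a)~Reduction~$2$ preserves conflicts (Lemma~\ref{lemma:cfDoesNotChangeForReductions234}), so that for any PCAF $F'=(A,R',\cl,\succ)$ with $\red{2}(F')=F$, there is an attack between two arguments in some direction in $R'$ exactly when there is one in $R$; and (b)~Reduction~$2$ cannot turn a one-sided conflict into a symmetric one, i.e.\ if both $(x,y)$ and $(y,x)$ lie in $\red{2}(F')$ then both already lie in $R'$. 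I would record (b) as a short preliminary observation, checking it directly against the two disjuncts in the definition of Reduction~$2$.

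For the direction ``$F\in\Image{2}$ implies no bad configuration'' I would argue by contraposition. Assume arguments $a,a',b,b'$ as in the statement and suppose for contradiction there is $F'=(A,R',\cl,\succ)$ with $\red{2}(F')=F$. Since $(a,b)\in\problematicPart(F)$ gives $(a,b)\notin R$ and $(b,a)\notin R$ by hypothesis, fact~(a) yields $(a,b),(b,a)\notin R'$. I then split on the disjunction. If $(b,a')\in R$, then $(a',b),(b,a')\in R=\red{2}(F')$, so by~(b) both lie in $R'$; applying well-formedness of the underlying CAF of $F'$ to the equal-claim pair $a,a'$ forces $(a,b)\in R'$, a contradiction. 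If instead $(b,a')\notin R$ and the witness $b'$ exists with $(a',b'),(b',a')\notin R$, then~(a) gives $(a',b'),(b',a')\notin R'$; now $(a',b)\in R=\red{2}(F')$ means either $(a',b)\in R'$ (applying well-formedness to $a,a'$ gives $(a,b)\in R'$, contradiction) or the attack was reversed, i.e.\ $(b,a')\in R'$ (applying well-formedness to the equal-claim pair $b,b'$ gives $(b',a')\in R'$, contradiction). Hence no such $F'$ exists.

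For the converse I would give the construction already outlined for Proposition~\ref{prop:characterizations}: assuming no bad configuration, define $R''$ from $R$ by first \emph{reverting} every attack $(a',b)\in R$ for which some equal-claim argument $a$ satisfies $(a,b)\notin R$ and $(b,a)\notin R$ (replacing $(a',b)$ by $(b,a')$), and then \emph{adding} every remaining wf-problematic pair as an attack; finally set $b\succ a$ iff $(a,b)\in R''\setminus R$. I would then verify the three required properties: that $(A,R'',\cl)$ is well-formed, that $\succ$ is asymmetric, and that $\red{2}((A,R'',\cl,\succ))=F$. Here the absence of a bad configuration is exactly what makes the reverting step legitimate---it guarantees that whenever $(a',b)$ is reverted we have $(b,a')\notin R$ (so the new pair genuinely lies in $R''\setminus R$ and the reversal is undone correctly by Reduction~$2$), and it guarantees that $a'$ is in conflict with every argument sharing the claim of $b$, so that reverting can be carried out uniformly across that claim class without breaking well-formedness.

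The main obstacle I anticipate is the converse construction, not the forward direction. The delicate point is that reverting and adding attacks must be performed uniformly over equal-claim arguments while simultaneously keeping $(A,R'',\cl)$ well-formed on \emph{both} the attacker side (the class of $a,a'$) and the target side (the class of $b,b'$), and while ensuring the induced $\succ$ stays asymmetric; it is precisely the two disjuncts excluded by the ``no bad configuration'' hypothesis that rule out the two ways this could fail (a forced symmetric attack, respectively a forced well-formedness violation on the class of $b$). Accordingly, the verification of $\red{2}(F'')=F$ should be organized case-by-case according to these two hazards.
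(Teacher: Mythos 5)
Your proposal matches the paper's own proof in both structure and substance: the forward direction is the same contrapositive argument via conflict preservation and well-formedness of the underlying PCAF (you merely organize it as an explicit case split on the two disjuncts, with your observation (b) made explicit, where the paper derives both contradictions in one chain), and the converse uses exactly the paper's two-stage construction of first reverting the ``forced'' attacks and then adding the remaining wf-problematic pairs, with the same role assigned to the no-bad-configuration hypothesis. The plan is correct; carrying out the deferred verifications (well-formedness of $(A,R'',\cl)$, asymmetry of $\succ$, and $\red{2}(F'')=F$) goes through as you anticipate.
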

\begin{proof}
	``$\implies$'': By contrapositive. Suppose that there are $a,a',b,b' \in A$ with $\cl(a') = \cl(a)$ and $\cl(b') = \cl(b)$ such that $(a,b) \in \problematicPart(F)$,  $(b,a) \not\in R$, $(a',b) \in R$, and either $(b,a') \in R$ or $((a',b') \not\in R$ and $(b',a') \not\in R)$. Towards a contradiction, assume that $F \in \Image{2}$. Then there must be a PCAF $F' = (A,R',\cl,\succ)$ such that $\red{2}(F') = F$. Reduction~$2$ can not completely remove conflicts between arguments. Since there is no conflict between $a$ and $b$ in $F$ there can be no conflict in $F'$ either, i.e., $(a,b) \not\in R'$ and $(b,a) \not\in R'$. Therefore, since the underlying CAF $(A,R',\cl)$ of $F'$ must be well-formed, $(a',b) \not \in R'$. Since $(a',b) \in R$ it must be that $(b,a') \in R'$ and $a' \succ b$. Then $(b,a') \not\in \red{2}(F')$. Furthermore, by the well-formedness of $(A,R',\cl)$, we have that $(b',a') \in R'$ and therefore either $(a',b') \in \red{2}(F')$ or $(b',a') \in \red{2}(F')$. Contradiction to $\red{2}(F') = F$.
	
	``$\impliedby$'': Our underlying assumption is that there are no arguments $a,a',b,b'$ in $F$ with $\cl(a) = \cl(a')$ and $\cl(b) = \cl(b')$ such that $(a,b) \in \problematicPart(F)$, $(b,a) \not\in R$, $(a',b) \in R$, and either $(b,a') \in R$ or $((a',b') \not\in R$ and $(b',a') \not\in R)$. We will construct a PCAF $F'' = (A,R'',\cl,\succ)$ such that $\red{2}(F'') = F$. 
	
	But first, as an intermediate step, we construct the CAF $F' = (A,R',\cl)$. We say that $(b,a)$ is forced in $F$ if $(a,b) \in R$ and if there is an argument $a'$ with $\cl(a') = \cl(a)$ such that $(a',b) \not\in R$ and $(b,a')\not\in R$. Observe that if $(b,a)$ is forced in $F$, then $(a,b)$ cannot be forced in $F$ by our underlying assumption. Furthermore, if $(b,a)$ is forced in $F$, then $(b,a) \not\in R$, again by our underlying assumption. We construct $R' = (R \cup \{(b,a) \mid (b,a) \text{ is forced in } F\}) \setminus \{(a,b) \mid (b,a) \text{ is forced in } F\}$. Note that $(a,b) \in \problematicPart(F')$ implies $(b,a) \in R'$ for all arguments $a,b$: towards a contradiction, assume otherwise. Then there is some $(a,b) \in \problematicPart(F')$ such that $(b,a) \not\in R'$. Then $(a,b) \not\in R$ and $(b,a) \not\in R$ by construction of $R'$. Furthermore, since $(a,b) \in \problematicPart(F')$, there must be some $a'$ with $\cl(a') = \cl(a)$ and $(a',b) \in R'$. It cannot be that $(a',b) \in R$, otherwise $(b,a')$ would be forced in $F$ and $(a',b) \not\in R'$. Thus, $(b,a') \in R$ and $(a',b)$ was added to $R'$ because it is forced in $F$. But this is only possible if there is some $b'$ with $\cl(b') = \cl(b)$ and $(a',b') \not\in R$ and $(b',a') \not\in R$. This contradicts our underlying assumption: $(b',a') \in \problematicPart(F)$, $(a',b') \not\in R$, $(b,a') \in R$, $(a,b) \not\in R$, and $(b,a) \not\in R$.
	
	Now we construct $R'' = R' \cup \{(a,b) \mid (a,b) \in \problematicPart(F')\}$. Furthermore, $b \succ a \iff (a,b) \in R'' \setminus R$. Finally, $F'' = (A,R'',\cl,\succ)$. The underlying CAF of $F''$ is well-formed since $\problematicPart((A,R'',\cl)) = \emptyset$ by construction. Moreover, $\succ$ is asymmetric since if $(a,b) \in R''$ and $(b,a) \in R''$ then, by construction of $R'$ and $R''$, either $(a,b) \in R$ or $(b,a) \in R$. Lastly, we show that $\red{2}(F'') = F$: 
	if $(a,b) \in R'' \setminus R$, then we defined $b \succ a$ and thus $(a,b) \not\in \red{2}(F'')$. 
	If $(a,b) \in R \setminus R''$, then $(b,a)$ was forced in $F$, i.e., $(b,a) \not\in R$ but $(b,a) \in R'$ and therefore also $(b,a) \in R''$. Thus, we define $a \succ b$ which means that $(a,b) \in \red{2}(F'')$.
\end{proof}

\begin{restatable}{lemma}{characterizationThirdImage} \label{prop:characterizationThirdImage}
	Let $F = (A,R,\cl)$ be a CAF. $F \in \Image{3}$ iff
	$(a,b) \in \problematicPart(F)$ implies $(b,a) \in R$.
\end{restatable}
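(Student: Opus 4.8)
The plan is to mirror the structure of the proofs for $\Image{1}$ and $\Image{2}$ (Lemmas~\ref{prop:characterizationFirstImage} and~\ref{prop:characterizationSecondImage}), exploiting the key feature of Reduction~$3$: it never introduces attacks, and it deletes an attack $(a,b)$ only when the reverse attack $(b,a)$ is also present and $b \succ a$ (cf.\ Definition~\ref{def:reductions}). In particular, if $F = \red{3}(F')$ with $F = (A,R,\cl)$ and $F' = (A,R',\cl,\succ)$, then $R \subseteq R'$, and this inclusion will drive both directions.

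For the direction ``$\implies$'' I would argue by contrapositive. Assume there is $(a,b) \in \problematicPart(F)$ with $(b,a) \not\in R$, and suppose for contradiction that $F = \red{3}(F')$ for some PCAF $F'=(A,R',\cl,\succ)$. Since $(a,b)\in\problematicPart(F)$ there is $a'$ with $\cl(a')=\cl(a)$ and $(a',b)\in R\subseteq R'$; well-formedness of the underlying CAF $(A,R',\cl)$ then forces $(a,b)\in R'$. As $(a,b)\notin R$, Reduction~$3$ must have deleted $(a,b)$, which by its definition requires $b\succ a$ and $(b,a)\in R'$. But then $a\not\succ b$ by asymmetry, so Reduction~$3$ keeps $(b,a)$, i.e.\ $(b,a)\in R$ --- contradicting $(b,a)\notin R$.

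For ``$\impliedby$'' I would construct an explicit witness. Set $R' = R \cup \problematicPart(F)$ and declare $b\succ a \iff (a,b)\in R'\setminus R$ (equivalently, $(a,b)\in\problematicPart(F)$), with $F'=(A,R',\cl,\succ)$. Three things must be checked. First, $(A,R',\cl)$ is well-formed: for each claim $\beta$ and each argument $a_i$ with $\cl(a_i)=\beta$, adding exactly the problematic pairs turns the outgoing-attack set of $a_i$ into $T=\{c\mid \exists a'\,(\cl(a')=\beta,\ (a',c)\in R)\}$, which depends only on $\beta$; hence $\problematicPart((A,R',\cl))=\emptyset$. Second, $\succ$ is asymmetric: $b\succ a$ and $a\succ b$ would give $(a,b),(b,a)\in\problematicPart(F)$, but the hypothesis $(a,b)\in\problematicPart(F)\Rightarrow(b,a)\in R$ (so $(b,a)\notin R'\setminus R$) rules this out. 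Third, $\red{3}(F')=F$: every $(a,b)\in R$ has $b\not\succ a$ (as $(a,b)\notin R'\setminus R$) and is thus kept; each added pair $(a,b)\in\problematicPart(F)$ satisfies $b\succ a$ and $(b,a)\in R\subseteq R'$ and is therefore deleted by Reduction~$3$; and pairs outside $R'$ stay absent.

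The main obstacle is the backward direction, specifically verifying that the single-shot union $R\cup\problematicPart(F)$ already yields a well-formed CAF (so that no further ``problematic'' pairs are created) and that the asymmetry of $\succ$ hinges exactly on the characterizing condition --- this is where the hypothesis $(a,b)\in\problematicPart(F)\Rightarrow(b,a)\in R$ is used, ensuring the added reverse-direction attacks never clash with an existing attack of the same orientation.
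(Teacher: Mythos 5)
Your proof is correct and follows essentially the same route as the paper: the forward direction is the same contrapositive argument (well-formedness forces $(a,b)\in R'$, and Reduction~$3$ cannot eliminate both directions of a conflict), and the backward direction uses the identical witness $R' = R \cup \problematicPart(F)$ with the preference defined by the added attacks. You merely spell out the well-formedness, asymmetry, and $\red{3}(F')=F$ checks in more detail than the paper does.
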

\begin{proof}
	``$\implies$'': By contrapositive. Suppose there is $(a,b) \in \problematicPart(F)$ such that $(b,a) \not\in R$. Towards a contradiction, assume $F \in \Image{3}$. Then there is a PCAF $F' = (A,R',\cl,\succ)$ such that $\red{3}(F') = F$. 
	Since Reduction~$3$ can only delete but not introduce attacks, and since $(A,R',\cl)$ must be well-formed, $(a,b) \in R'$. However, Reduction~$3$ can not completely remove conflicts between arguments, i.e., either $(a,b) \in \red{3}(F')$ or $(b,a) \in \red{3}(F')$. Contradiction.
	
	``$\impliedby$'': Suppose $(a,b) \in \problematicPart(F)$ implies $(b,a) \in R$. Then $\red{3}(F') = F$ for the PCAF $F' = (A,R',\cl,\succ)$ with $R' = R \cup \{(a,b) \mid (a,b) \in \problematicPart(F)\}$ as well as ${a \succ b} \iff (b,a) \in R' \setminus R$. $(A,R',\cl)$ is well-formed since $\problematicPart((A,R',\cl)) = \emptyset$. Furthermore, $\succ$ is asymmetric by construction. 
\end{proof}

\begin{restatable}{lemma}{characterizationFourthImage} \label{prop:characterizationFourthImage}
	Let $F = (A,R,\cl)$ be a CAF. $F \in \Image{4}$ iff
	there are no arguments $a,a',b,b'$ in $F$ with $\cl(a) = \cl(a')$ and $\cl(b) = \cl(b')$ such that $(a,b) \in \problematicPart(F)$, $(b,a) \not\in R$, $(a',b) \in R$, and either $(b,a') \not\in R$ or $((a',b') \not\in R$ and $(b',a') \not\in R)$.
\end{restatable}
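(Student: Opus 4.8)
The plan is to prove both directions by closely following the proof of Lemma~\ref{prop:characterizationSecondImage} for $\Image{2}$, adjusting only for the single way in which Reduction~$4$ differs from Reduction~$2$: where Reduction~$2$ \emph{reverses} a one-way attack that points at the stronger argument, Reduction~$4$ instead \emph{keeps} it and turns the conflict into a symmetric one (its third disjunct retains every one-way attack). This is exactly what replaces the condition $(b,a')\in R$ of the $\Image{2}$ characterization by $(b,a')\not\in R$ here.

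For the forward direction I would argue by contrapositive. Suppose the forbidden configuration $a,a',b,b'$ exists and, towards a contradiction, that there is a PCAF $F'=(A,R',\cl,\succ)$ with $\red{4}(F')=F$. Since $(a,b),(b,a)\not\in R$ there is no conflict between $a$ and $b$ in $F$, and because Reduction~$4$ preserves conflict-freeness (Lemma~\ref{lemma:cfDoesNotChangeForReductions234}) we get $(a,b),(b,a)\not\in R'$. Well-formedness of $(A,R',\cl)$ together with $\cl(a)=\cl(a')$ then forces $(a',b)\not\in R'$, so the attack $(a',b)\in R$ must have been \emph{introduced} by the reduction; the only disjunct of Definition~\ref{def:reductions} that can do this requires $(b,a')\in R'$, $(a',b)\not\in R'$, and $a'\succ b$. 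But then the third disjunct of Reduction~$4$ applied to the ordered pair $(b,a')$ keeps this attack, so $(b,a')\in R$. In the first case of the disjunction ($(b,a')\not\in R$) this already contradicts $\red{4}(F')=F$. In the second case (no conflict between $a'$ and $b'$), conflict-preservation gives $(b',a')\not\in R'$, while well-formedness with $\cl(b)=\cl(b')$ and $(b,a')\in R'$ yields $(b',a')\in R'$---again a contradiction.

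For the converse I would assume the forbidden configuration is absent and build a witnessing PCAF $F''=(A,R'',\cl,\succ)$, mirroring the two-stage construction in the $\Image{2}$ proof. First I pass to an intermediate CAF $F'=(A,R',\cl)$ by undoing the symmetric conflicts that Reduction~$4$ must have \emph{introduced}: I call $(a',b)\in R$ introduced when $(b,a')\in R$ and some $a$ with $\cl(a)=\cl(a')$ witnesses wf-problematicity with $(a,b)\not\in R$ and $(b,a)\not\in R$, and I set $R'=R\setminus\{(a',b)\mid (a',b)\text{ is introduced}\}$, keeping only the surviving direction $(b,a')$. The no-pattern hypothesis is exactly what guarantees, in this situation, that $(b,a')\in R$ and that every claim-twin $b'$ of $b$ is genuinely in conflict with $a'$, so that dropping $(a',b)$ is consistent with well-formedness. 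Then I add every still wf-problematic pair of $F'$ as an attack to obtain $R''$, and set $b\succ a$ iff $(a,b)\in R''\setminus R$. It remains to verify that $(A,R'',\cl)$ is well-formed (by construction its wf-problematic part is empty), that $\succ$ is asymmetric (any pair placed in $R''\setminus R$ has its opposite direction already present in $R$, using the no-pattern hypothesis as in the $\Image{2}$ argument), and finally that $\red{4}(F'')=F$ by checking each ordered pair against the disjuncts of Definition~\ref{def:reductions}.

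I expect the converse construction to be the main obstacle, and within it the bookkeeping of which attacks of $F$ are \emph{introduced} symmetric conflicts versus \emph{surviving} genuine conflicts---since, unlike Reduction~$2$, Reduction~$4$ yields symmetric conflicts both when it keeps an incomparable symmetric attack and when it transforms a one-way attack pointing at the stronger argument. Getting the intermediate relation $R'$ right so that it is simultaneously repairable to a well-formed CAF, admits an asymmetric $\succ$, and reproduces $F$ under Reduction~$4$ is the delicate point; the two disjuncts in the statement are precisely what rule out the two obstructing cases (a non-symmetric introduced conflict, and a claim-twin $b'$ not in conflict with $a'$), which is why the characterization takes the form it does.
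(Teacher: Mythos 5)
Your forward direction is correct and essentially the paper's argument (the paper derives the contradiction in the second case by observing that $(b',a')\in R'$ forces a surviving conflict in $F$, whereas you derive $(b',a')\notin R'$ from conflict-preservation and clash with well-formedness one step earlier; both are fine). The overall plan for the converse also matches the paper: pass to an intermediate relation $R'$ by deleting the ``introduced'' direction of each symmetric conflict that has an unconflicted claim-twin witness, then add the remaining wf-problematic pairs to get $R''$, and read off $\succ$.

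However, there is a genuine gap in your converse: the preference relation ``$b\succ a$ iff $(a,b)\in R''\setminus R$'' is not sufficient for Reduction~$4$. Consider an introduced attack $(a',b)$ that you delete when forming $R'$: you have $(a',b)\in R\setminus R''$ while $(b,a')\in R\cap R''$. Neither ordered pair lies in $R''\setminus R$, so your rule assigns \emph{no} preference between $a'$ and $b$. But $(a',b)\in F$ must be regenerated by $\red{4}(F'')$, and since $(a',b)\notin R''$ the only disjunct of Definition~\ref{def:reductions} that can produce it is the second one, which requires $a'\succ b$. With your $\succ$, that disjunct fails, the first and third fail because $(a',b)\notin R''$, and hence $(a',b)\notin\red{4}(F'')$, i.e., $\red{4}(F'')\neq F$. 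The paper repairs exactly this by defining $b\succ a$ iff $(a,b)\in R''\setminus R$ \emph{or} $(b,a)\in R\setminus R''$, so that the source of every deleted attack is dominated by its target and the symmetric conflict is re-created by the second disjunct (while the surviving direction is kept by the third). Asymmetry then needs a two-case check covering both clauses, not just the observation that pairs in $R''\setminus R$ have their reverse in $R$. This is precisely the ``delicate bookkeeping'' you flagged, but as written your construction does not close it.
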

\begin{proof}
	Similar to the proof of Lemma~\ref{prop:characterizationSecondImage}:
	
	``$\implies$'': By contrapositive. Suppose that there are $a,a',b,b' \in A$ with $\cl(a') = \cl(a)$ and $\cl(b') = \cl(b)$ such that $(a,b) \in \problematicPart(F)$, $(b,a) \not\in R$, $(a',b) \in R$, and either $(b,a') \not\in R$ or $((a',b') \not\in R$ and $(b',a') \not\in R)$. Towards a contradiction, assume that $F \in \Image{4}$. Then there must be a PCAF $F' = (A,R',\cl,\succ)$ such that $\red{4}(F') = F$. Reduction~$4$ can not completely remove conflicts between arguments. Since there is no conflict between $a$ and $b$ in $F$ there can be no conflict in $F'$ either, i.e., $(a,b) \not\in R'$ and $(b,a) \not\in R'$. Therefore, since the underlying CAF of $F'$ must be well-formed, $(a',b) \not \in R'$. The only way to obtain $(a',b) \in R$ from $(a',b) \not \in R'$ via Reduction~$4$ is to have $(b,a') \in R'$ and $a' \succ b$. Then $(b,a') \in \red{4}(F')$. Furthermore, by the well-formedness of $(A,R',\cl)$, we have that $(b',a') \in R'$ and therefore either $(a',b') \in \red{4}(F')$ or $(b',a') \in \red{4}(F')$. Contradiction to $\red{4}(F') = F$.
	
	``$\impliedby$'': Our underlying assumption is that there are no arguments $a,a',b,b'$ in $F$ with $\cl(a) = \cl(a')$ and $\cl(b) = \cl(b')$ such that $(a,b) \in \problematicPart(F)$, $(b,a) \not\in R$, $(a',b) \in R$, and either $(b,a') \not\in R$ or $((a',b') \not\in R$ and $(b',a') \not\in R)$. We will construct a PCAF $F'' = (A,R'',\cl,\succ)$ such that $\red{4}(F'') = F$. 
	
	But first, as an intermediate step, we construct the CAF $F' = (A,R',\cl)$. We say that $(b,a)$ is forced in $F$ if $(a,b) \in R$, $(b,a) \in R$, and if there is an argument $a'$ with $\cl(a') = \cl(a)$ such that $(a',b) \not\in R$ and $(b,a')\not\in R$. Observe that if $(b,a)$ is forced in $F$, then $(a,b)$ cannot be forced in $F$ by our underlying assumption. We construct $R' = R \setminus \{(a,b) \mid (b,a) \text{ is forced in } F\}$. Note that $(a,b) \in \problematicPart(F')$ implies $(b,a) \in R'$ for all arguments $a,b$: 
	towards a contradiction, assume otherwise. Then there is some $(a,b) \in \problematicPart(F')$ such that $(b,a) \not\in R'$. Then $(a,b) \not\in R$ and $(b,a) \not\in R$ by construction of $R'$. Furthermore, there must be some $a'$ with $\cl(a') = \cl(a)$ and $(a',b) \in R'$. It cannot be that $(a',b) \in R$ and $(b,a') \in R$, otherwise $(b,a')$ would be forced in $F$ and $(a',b) \not\in R'$. Thus, $(a',b) \in R$ and $(b,a') \not\in R$ by construction of $F'$. But this contradicts our underlying assumption: $(a,b) \in \problematicPart(F)$, $(b,a) \not\in R$, $(a',b) \in R$, and $(b,a') \not\in R$. 
	
	Now we construct $R'' = R' \cup \{(a,b) \mid (a,b) \in \problematicPart(F')\}$. 
	Furthermore, $b \succ a \iff (a,b) \in R'' \setminus R$ or $(b,a) \in R \setminus R''$. Finally, $F'' = (A,R'',\cl,\succ)$. The underlying CAF of $F''$ is well-formed since $\problematicPart((A,R'',\cl)) = \emptyset$ by construction. 
	Moreover, $\succ$ is asymmetric: if $b \succ a$, there are two cases. 
	\begin{enumerate}
		\item $(a,b) \in R'' \setminus R$. Clearly, $(a,b) \not\in R \setminus R''$. Moreover, $(a,b) \in R'' \setminus R$ implies $(b,a) \in R$ since we did not add attacks to $R''$ if there was no conflict between these attacks in $R$. Thus, $(b,a) \not\in R'' \setminus R$. We can conclude $a \not\succ b$.
		\item $(b,a) \in R \setminus R''$. Clearly, $(b,a) \not\in R'' \setminus R$. Moreover, $(b,a) \in R \setminus R''$ implies $(a,b) \in R''$, since we never completely removed conflicts when constructing $R''$ from $R$. Thus, $(a,b) \not\in R \setminus R''$. We can conclude $a \not\succ b$.
	\end{enumerate}
	Lastly, we show that $\red{4}(F'') = F$: 
	if $(a,b) \in R'' \setminus R$, then we defined $b \succ a$. As above, $(a,b) \in R'' \setminus R$ implies $(b,a) \in R$. The only possible reason for why we added $(a,b)$ to $R''$ is because $(a,b) \in \problematicPart(F')$. As previously discussed, this means that $(b,a) \in R'$ and therefore also $(b,a) \in R''$. Thus, $(a,b) \not\in \red{4}(F'')$.
	If $(a,b) \in R \setminus R''$, then $a \succ b$. As above, this implies $(b,a) \in R''$, and therefore $(a,b) \in \red{4}(F'')$.
\end{proof}

\imagesAreIncomparableTransitive*
\begin{proof}
	For $i \in \{1,2,4\}$, showing $\ImageTrans{i} \not\subseteq \ImageTrans{j}$ can be done in the same way as showing $\Image{i} \not\subseteq \Image{j}$ in the proof of Proposition~\ref{prop:imagesAreIncomparable} (note that the preference relations of the PCAFs associated with the CAFs of Figure~\ref{fig:imagesAreIncomparableAppendix} are transitive). It remains to show $\ImageTrans{3} \not\subseteq \ImageTrans{j}$ with $j \in \{1,2,4\}$. 
	
	Let $F$ be the CAF shown in Figure~\ref{fig:cafOnlyInImage3Trans}. $F$ is in $\ImageTrans{3}$: a PCAF that reduces to $F$ must have $c \succ b$ and $b \succ a$, which forces $c \succ a$ by transitivity. However, the attack $(a,c)$ is not deleted by Reduction~3 if there is no attack $(c,a)$.
	
	$F$ is not in $\ImageTrans{1}$ since a PCAF that reduces to $F$ would need to have $c \succ b$, $b \succ a$, and therefore also $c \succ a$. But Reduction~1 would delete the attack $(a,c)$.
		
	Towards a contradiction, assume there is a PCAF $F'$ such that $\red{2}(F') = F$. First, we show that $(a,b) \in F'$, $(b,a) \in F'$, $(b,c) \in F'$, and $(c,b) \in F'$.
	\begin{itemize}
		\item Assume $(a,b) \not\in F'$. Then two things most hold. Firstly, it must be that $(b,a) \in F'$, otherwise $(b,a) \not\in F$. Secondly, $(a',b) \not\in F'$, otherwise the underlying CAF of $F'$ would not be well-formed. This means that $(a',b)$ must have been introduced into $F$ by applying Reduction~$2$, i.e., by reversing $(b,a')$. Therefore, $(b,a') \in F'$. But then also $(b',a') \in F'$, otherwise the underlying CAF of $F'$ is not well-formed. But then, by the definition of Reduction~$2$, either $(b',a') \in F$ or $(a',b') \in F$, which is not the case. Contradiction. 
		\item Assume $(b,a) \not\in F'$. Then, since the underlying CAF of $F'$ must be well-formed, $(b',a) \not\in F'$. This means $(a,b') \in F'$, otherwise we cannot obtain $F$ from $F'$ via Reduction~$2$. This means that $(a',b') \in F'$, which is not possible since neither $(a',b') \in F$ nor $(b',a') \in F'$.
		\item Assume $(b,c) \not\in F'$. Then two things most hold. Firstly, it must be that $(c,b) \in F'$, otherwise $(c,b) \not\in F$. Secondly, $(b',c) \not\in F'$, otherwise the underlying CAF of $F'$ would not be well-formed. This means that $(b',c)$ must have been introduced into $F$ by applying Reduction~$2$, i.e., by reversing $(c,b')$. Therefore, $(c,b') \in F'$. But then also $(c',b') \in F'$, otherwise the underlying CAF of $F'$ is not well-formed. But then, by the definition of Reduction~$2$, either $(c',b') \in F$ or $(b',c') \in F$, which is not the case. Contradiction. 
		\item Assume $(c,b) \not\in F'$. Then, since the underlying CAF of $F'$ must be well-formed, $(c',b) \not\in F'$. This means $(b,c') \in F'$, otherwise we cannot obtain $F$ from $F'$ via Reduction~$2$. This means that $(b',c') \in F'$, which is not possible since neither $(b',c') \in F$ nor $(c',b') \in F'$.
	\end{itemize}
	Since $(a,b) \in F'$, $(b,a) \in F'$, $(b,c) \in F'$, and $(c,b) \in F'$, the only way to obtain $F$ form $F'$ via Reduction~$2$ is to set $c \succ b$ and $b \succ a$. But then $c \succ a$ which means that $(a,c) \not\in F$. Contradiction, i.e., $F \not\in \ImageTrans{2}$.
	
	Now assume there is a PCAF $G'$ such that $\red{4}(G') = F$. It must be that $(a,b) \in G'$ since we can not obtain $(a',b) \in \red{4}(G')$ and $(b,a') \not\in \red{4}(G')$ without $(a',b) \in G'$. Analogously, it must be that $(b,c) \in G'$. Then in order to have $\red{4}(G') = F$ we need to set $c \succ b$ and $b \succ a$. But then $c \succ a$ which means that it can not be that $(a,c) \in \red{4}(G')$ and $(c,a) \not\in \red{4}(G')$. Contradiction, i.e., $F \not\in \ImageTrans{4}$. 
\end{proof}
\begin{figure}[ht]
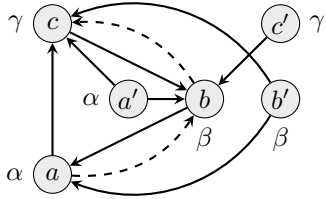

	\centering
	\tikz[rotate=90]{
		\node[arg, label={left}:$\alpha$] (a) at (0,3) {$a$};
		\node[arg, label={left}:$\alpha$] (b) at (1,2) {$a'$};
		\node[arg, label={below}:$\beta$] (c) at (1,1) {$b$};
		\node[arg, label={below}:$\beta$] (d) at (1,0) {$b'$};
		\node[arg, label={left}:$\gamma$] (e) at (2,3) {$c$};
		\node[arg, label={right}:$\gamma$] (f) at (2,0) {$c'$};
		\draw[attack] 
		(a) edge (e)
		(b) edge (c)
		(b) edge (e)
		(c) edge (a)
		(d) edge [bend angle=40, bend left] (a)
		(d) edge [bend angle=40, bend right] (e)
		(e) edge (c)
		(a) edge [bend right, dashed] (c)
		(c) edge [bend right, dashed] (e)
		(f) edge (c);
	}
	\caption{A CAF which shows that $\ImageTrans{3} \not\subseteq \ImageTrans{j}$ for $j \in \{1,2,4\}$. Dashed arrows are edges in the wf-problematic part.}
	\label{fig:cafOnlyInImage3Trans}
\end{figure}

\characterizationFirstImageTransitive*
\begin{proof}
	Let $F = (\Args, \Att, \cl)$.
	\begin{itemize}
		\item Suppose $\problematicPart(F)$ is acyclic and there is no $(a,b) \in F$ with a path from $a$ to $b$ in $\problematicPart(F)$. Construct the PCAF $F' = (\Args, \Att', \cl, \succ)$ with $\Att' = \Att \cup \{(a,b) \mid (a,b) \in \problematicPart(F)\}$ and $b \succ a$ iff there is a path from $a$ to $b$ in $\problematicPart(F)$. $(\Args, \Att', \cl)$ is well-formed by construction. $\succ$ is transitive because if there is a path from $a$ to $b$ and from $b$ to $c$, then there is also a path from $a$ to $c$. 
		$\succ$ is asymmetric because otherwise there would be a path from $a$ to $b$ and from $b$ to $a$, which again would mean that there is a cycle. It remains to show that $\red{1}(F') = F$. Let $(a,b)$ be any attack in $F'$. We distinguish two cases:
		\begin{itemize}
			\item $(a,b) \in F$. Then, since there is no path from $a$ to $b$ in $\problematicPart(F)$, $b \not\succ a$. Therefore, $(a,b) \in \red{1}(F')$.
			\item $(a,b) \not\in F$. Then, by construction, $(a,b) \in \problematicPart(F)$ and therefore $b \succ a$. Thus, $(a,b)$ is removed from $F'$ by Reduction~$1$, i.e., $(a,b) \not\in \red{1}(F')$.  
		\end{itemize}
		Note also that, by construction of $F'$, there can be no $(a,b) \in F$ such that $(a,b) \not\in F'$.
		
		\item Suppose $\problematicPart(F)$ is cyclic. Then there are arguments $x_1,\ldots,x_n \in F$ such that $x_1 = x_n$ and $(x_i, x_{i+1}) \in \problematicPart(F)$ for all $1 \leq i < n$. Towards a contradiction, assume there is a PCAF $F' = (\Args, \Att', \cl, \succ)$ such that $\red{1}(F') = F$. Then $(x_i, x_{i+1}) \in F'$ for all $1 \leq i < n$, otherwise $(\Args, \Att', \cl)$ would not be well-formed. In order to have $\red{1}(F') = F$ we must have $x_{i+1} \succ x_i$ for all $1 \leq i < n$. But then, by transitivity and since $x_1 = x_n$ we obtain $x_1 \succ x_1$, which is in contradiction to $\succ$ being asymmetric.
		
		On the other hand, suppose there is an attack $(a,b) \in F$ with a path from $a$ to $b$ in $\problematicPart(F)$. Let us denote this path as $x_1,\ldots,x_n$ with $x_1 = a$ and $x_n = b$. By the same argument as above, if there was a PCAF $F' = (\Args, \Att', \cl, \succ)$ such that $\red{1}(F') = F$, then $x_n \succ x_1$, i.e., $b \succ a$. But then $(a,b) \not\in \red{1}(F')$. Contradiction.  \qedhere
	\end{itemize}
\end{proof}

\subsection{Proofs for Section~\ref{sec:imax}}

\cfDoesNotChangeForSomeReductions*
\begin{proof}
	Let $F = (\Args,\Att,\cl, \succ)$ be a PCAF and let $i \in \{2,3,4\}$. Observe that if $(a,b) \in F$ then either $(a,b) \in \red{i}(F)$ or $(b,a) \in \red{i}(F)$. In other words, if there is an attack between two arguments in a PCAF, then there will still be at least one attack between those arguments after Reduction~$i$ has been applied. Conversely, if there is an attack $(a,b) \in \red{i}(F)$, then it must be that either $(a,b) \in F$ or $(b,a) \in F$.
	Thus the argument set $E$ is conflict free in $\red{i}(F)$ iff it is conflict-free in $(\Args,\Att,\cl)$.
\end{proof}

\imaxThirdImage*
\begin{proof}
	We show this for $\pref_p^3$. The other results follow from $\stb_p^3(F) \allowbreak \subseteq \allowbreak \semi_p^3(F) \allowbreak\subseteq \allowbreak\pref_p^3(F)$. Towards a contradiction, assume there is a PCAF $F = (\Args,\Att,\cl, \allowbreak \succ)$ such that $S \subset T$ for some $S,T \in \pref^3_p(F)$. Then there must be $S' \subseteq \Args$ such that $S' \in \pref(\red{3}(F))$ and $\cl(S') = S$, as well as $T' \subseteq \Args$ with $T' \in \pref(\red{3}(F))$ and $\cl(T') = T$. Observe that $S' \not\subseteq T'$, otherwise $S' \not\in \pref(\red{3}(F))$. Thus, there is $x \in S'$ (with $\cl(x) \in S$) such that $x \not\in T'$. However, $\cl(x) \in T$ since $S \subset T$, i.e., there is some $x' \in T'$ such that $\cl(x') = \cl(x)$. There are two possibilities for why $x$ is not in $T'$:
	\begin{enumerate}
		\item $T' \cup \{x\} \not\in \cf(\red{3}(F))$. By Lemma~\ref{lemma:cfDoesNotChangeForReductions234}, $T' \cup \{x\} \not\in \cf((\Args,\Att,\cl))$. Therefore, there is some $y \in T'$ such that $y \not\in S'$ and either $(x,y) \in F$ or $(y,x) \in F$. Actually, it cannot be that $(x,y) \in F$, otherwise, by the well-formedness of $(\Args,\Att,\cl)$, we would have $(x',y) \in F$ which, also by Lemma~\ref{lemma:cfDoesNotChangeForReductions234}, would mean that $T' \not\in \cf(\red{3}(F))$. Thus, $(y,x) \in F$. Since $(x,y) \not\in F$, and by the definition of Reduction~$3$, $(y,x) \in \red{3}(F)$. $S'$ must defend $x$ from $y$ in $\red{3}(F)$, i.e., there is some $z \in S'$ such that $(z,y) \in \red{3}(F)$. Therefore, also $(z,y) \in F$. Since we have that $S \subset T$ there is some $z' \in T'$ such that $\cl(z') = \cl(z)$. $(z',y) \in F$ by the well-formedness of $(\Args,\Att,\cl)$. But then, by Lemma~\ref{lemma:cfDoesNotChangeForReductions234}, $T' \not\in \cf(\red{3}(F))$. Contradiction.
		\item $x$ is not defended by $T'$. Then there is some $y \in \Args$ such that $(y,x) \in \red{3}(F)$ and such that $y$ is not attacked by any argument in $T'$. But $S'$ must defend $x$ against $y$ in $\red{3}(F)$, i.e., there is $z \in S'$ such that $(z,y) \in \red{3}(F)$. Then also $(z,y) \in F$. Since $S \subset T$ there is some $z' \in T'$ such that $\cl(z') = \cl(z)$. $(z',y) \in F$ by the well-formedness of $(\Args,\Att,\cl)$. It cannot be that $(z',y) \in \red{3}(F)$, i.e., $y \succ z'$. But then, by the definition of Reduction~$3$, we must have $(y,z') \in F$ and also $(y,z') \in \red{3}(F)$, which means that $T'$ is attacked by $y$ but not defended against it, i.e., $T' \not\in \adm(\red{3}(F))$. Contradiction. \qedhere 
	\end{enumerate}  
\end{proof}

\imaxThirdImageStage*
\begin{proof}
	Let $F$ be the CAF shown in Figure~\ref{fig:ImaxImage3}. Clearly, $F \in \ImageTrans{3}$. 
	We can see that $\naive(F) = \{\{a,a'\},\{a',c\},\{b\}\}$. Furthermore, $\{a,a'\}^\oplus_F = \{a,a',b\}$, $\{a',c\}^\oplus_F = \{a,a',c\}$, and $\{b\}^\oplus_F = \{a',b,c\}$. Since the ranges of all three naive extensions are incomparable, we have that $\stage(F) = \naive(F)$ and thus $\stage_c(F) = \{\{\alpha\},\{\alpha,\gamma\},\{\beta\}\}$.
\end{proof}

\imaxOtherImages*
\begin{proof}
	We show this for $\stb_p^i$. The other results follow by $\stb_p^i(F) \subseteq \semi_p^i(F) \subseteq \pref_p^i(F)$ and $\stb_p^i(F) \subseteq \stage_p^i(F)$. 
	
	For $i \in \{1,4\}$, let $F$ be the CAF shown in Figure~\ref{fig:ImaxImage14}. $F \in \ImageTrans{1}$ by Proposition~\ref{prop:characterizationImage1-transitive}. $F \in \ImageTrans{4}$ since $\red{4}(F') = F$ for $F' = (\Args,\Att,\cl,\succ)$ with $\Args = \{a,a',b\}$, $\Att = \{(b,a)\}$, $\cl(a) = \cl(a') = \alpha$, $\cl(b) = \beta$, and $a \succ b$. As required, the underlying CAF of $F'$ is well-formed. It can be verified that $\stb(F) = \{\{a,a'\},\{a',b\}\}$ and thus $\stb_c(F) = \{\{\alpha\},\{\alpha,\beta\}\}$.
	
	For $i = 2$, let $G$ be the CAF of Figure~\ref{fig:ImaxImage2}. $G \in \ImageTrans{2}$ since $\red{2}(G') = G$ for the PCAF $G' = (\Args',\Att',\cl',\succ)$ with $\Att' = \{(b,a),\allowbreak(b,a'),\allowbreak(b',a),\allowbreak(b',a')\}$, $a \succ b$, and $a' \succ b'$. As required, the underlying CAF of $G'$ is well-formed. It can be verified that $\stb(G) = \{\{a,a',a''\},\{a'',b,b'\}\}$ and thus $\stb_c(G) = \{\{\alpha\},\{\alpha,\beta\}\}$. 
\end{proof}

\begin{figure}[ht]
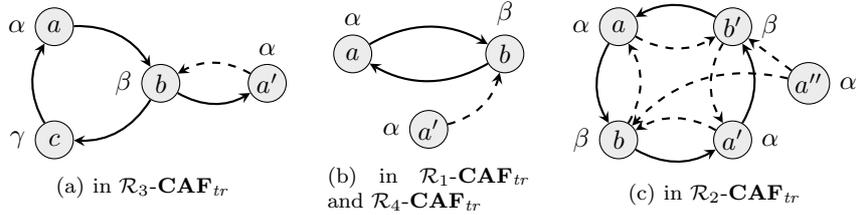

	\centering
	\begin{subfigure}{0.38\textwidth}
		\centering
		\tikz{
			\node[arg, label={left}:$\alpha$] (a) at (0,1.5) {$a$};
			\node[arg, label={above}:$\alpha$] (aPrime) at (2.8,0.75) {$a'$};
			\node[arg, label={left}:$\beta$] (b) at (1.4,0.75) {$b$};
			\node[arg, label={left}:$\gamma$] (c) at (0,0) {$c$};
			\draw[attack] 
			(a) edge [bend left] (b)
			(b) edge [bend left] (c)
			(c) edge [bend left] (a)
			(aPrime) edge [bend right, dashed] (b)
			(b) edge [bend right] (aPrime);
		}
		\caption{in $\ImageTrans{3}$}
		\label{fig:ImaxImage3}
	\end{subfigure}
	\hfill
	\begin{subfigure}{0.22\textwidth}
		\centering
		\tikz{
			\node[arg, label={above}:$\alpha$] (a) at (0,1) {$a$};
			\node[arg, label={above}:$\beta$] (b) at (2,1) {$b$};
			\node[arg, label={left}:$\alpha$] (c) at (1,0) {$a'$};
			\draw[attack] 
			(a) edge [bend left] (b)
			(b) edge [bend left] (a)
			(c) edge [dashed, bend right] (b);
		}
		\caption{in $\ImageTrans{1}$ and $\ImageTrans{4}$}
		\label{fig:ImaxImage14}
	\end{subfigure}
	\hfill
	\begin{subfigure}{0.38\textwidth}
		\centering
		\tikz{
			\node[arg, label={left}:$\alpha$] (a) at (0,1.5) {$a$};
			\node[arg, label={right}:$\alpha$] (b) at (1.5,0) {$a'$};
			\node[arg, label={left}:$\beta$] (c) at (0,0) {$b$};
			\node[arg, label={right}:$\beta$] (d) at (1.5,1.5) {$b'$};
			\node[arg, label={right}:$\alpha$] (e) at (2.5,0.75) {$a''$};
			\draw[attack] 
			(a) edge [bend right] (c)
			(c) edge [bend right, dashed] (a)
			(b) edge [bend right] (d)
			(d) edge [bend right, dashed] (b)
			(c) edge [bend right] (b)
			(b) edge [bend right, dashed] (c)
			(d) edge [bend right] (a)
			(a) edge [bend right, dashed] (d)
			(e) edge [bend right, dashed] (c)
			(e) edge [dashed] (d);
		}
		\caption{in $\ImageTrans{2}$}
		\label{fig:ImaxImage2}
	\end{subfigure}
	\caption{CAFs used as counter examples for I-maximality of some semantics. Dashed arrows are edges in the respective wf-problematic part.}
	\label{fig:ImaxCafs}
\end{figure}

\subsection{Proofs for Section~\ref{sec:complexity}}

\verificationConflictfreeNaiveFirstImage*
\begin{proof}
	$\NP$-membership follows from results for general CAFs (cf.\ Table~\ref{tab:complexity_CAFwfCAF}). We now show \NP-hardness. Let $\varphi$ be an arbitrary instance of 3-\SAT\ given as a set $\{C_1,\ldots,C_m\}$ of clauses over variables $X$. Without loss of generality, we can assume that every variable appears both positively and negatively in $\varphi$. We construct a PCAF $F = (\Args,\Att,\cl, \succ)$ as well as a set of claims $S$: 
	\begin{itemize}
		\item $\Args = V \cup \overline{V} \cup H$ where $V = \{x_i \mid x \in C_i, 1 \leq i \leq m\}$, $\overline{V} = \{\overline{x}_i \mid \neg x \in C_i, 1 \leq i \leq m\}$, and \\
		$H = \{x_T, x_F \mid x \in X\}$;
		\item $\Att = \{(x_T,x_i),(x_F,x_i)  \mid x_i \in V \}\ \cup$\\ 
		\phantom{$\Att =\ $}%
		$\{(x_T,\overline{x}_i),(x_F,\overline{x}_i) \mid \overline{x}_i \in \overline{V} \}$;
		
		\item $\cl(x_i) = \cl(\overline{x}_i) = i$ for all $x_i,\overline{x}_i \in V \cup \overline{V}$, 
		$\cl(x_T) = \cl(x_F) = x$ for all $x \in X$;
		\item $x_i \succ x_T$ for all $x_i \in V$ and 
		$\overline{x}_i \succ x_F$ for all $\overline{x}_i \in \overline{V}$;
		\item $S = \{1,\ldots,m\} \cup X$. 
	\end{itemize}
	Figure~\ref{fig:reductionVerficationImage1CfAppendix} illustrates the above construction (and is in fact identical to Figure~\ref{fig:reductionVerficationImage1Cf} from the main text). Let $F' = \red{1}(F) = (\Args,\Att',\cl)$. 
	
	Assume $\varphi$ is satisfiable. Then there is an interpretation $I$ such that $I \models \varphi$. Let $E = \{x_T \in H \mid x \in I\} \cup \{x_F \in H \mid x \not\in I\} \cup \{x_i \in V \mid x \in I\} \cup \{\overline{x}_i \in \overline{V} \mid x \not\in I\}$. It can be easily verified that $E$ is conflict free in $(\Args,\Att')$ and that $\cl(E) = S$. Towards a contradiction, assume that $E$ is not a naive extension, i.e., that there is a conflict-free extension $D$ for $(\Args,\Att')$ such that $D \supset E$. Consider any $x$ such that $x_T \in E$ (the case that $x_F \in E$ is analogous). Then also $x_T \in D$. Since $D$ must be conflict-free, $\overline{x}_j \not\in D$ for all corresponding $\overline{x}_j$. Furthermore, for all corresponding $x_i$, we have $x_i \in E$ and thus $x_i \in D$. But then $x_F \not\in D$. Since for all $x \in X$ either $x_T \in E$ or $x_F \in E$, this means that $D \subseteq E$. Contradiction, i.e., $E$ is naive in $(\Args,\Att')$.
	
	Assume $S$ is conflict-free in $(\Args,\Att',\cl)$. Then there is some $E \subseteq \Args$ such that $E$ is conflict-free in $(\Args,\Att')$ and $\cl(E) = S$. Let $x$ be any variable in $X$. Since $x \in \cl(E)$ it must be that either $x_T \in E$ or $x_F \in E$. Thus, for all $i,j$, we have $x_i \in E \implies \overline{x}_j \not\in E$ and $\overline{x}_i \in E \implies x_j \not\in E$ (otherwise, we would need both $x_T \not\in E$ and $x_F \not\in E$ for $E$ to be conflict-free). Furthermore, for any $i \in \{1,\ldots,m\}$, there must be some $x$ such that $x_i \in E$ or $\overline{x}_i \in E$.  Let $I = \{x \mid x_i \in E \text{ for some } i \}$. 
	Then for every $i$ there is some $x$ such that either $x \in C_i$ and $x \in I$ or $\neg x \in C_i$ and $x \not\in I$. Thus, $I$ satisfies all clauses in $C$ which means that $\varphi$ is satisfiable. The proof works likewise if we assume $S$ to be naive since $\naive_c(G) \subseteq \cf_c(G)$ for any CAF~$G$.
\end{proof}
\begin{figure}[ht]
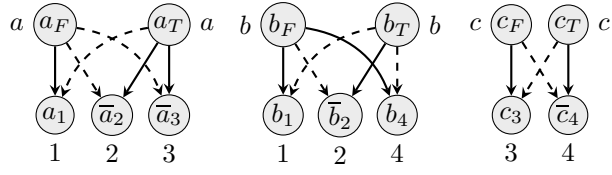

	\centering
	\tikz{
		\node[arg, label={left}:$a$] (aF) at (0,1.2) {$a_F$};
		\node[arg, label={right}:$a$] (aT) at (1.5,1.2) {$a_T$};
		\node[arg, label={left}:$b$] (bF) at (3,1.2) {$b_F$};
		\node[arg, label={right}:$b$] (bT) at (4.5,1.2) {$b_T$};
		\node[arg, label={left}:$c$] (cF) at (6,1.2) {$c_F$};
		\node[arg, label={right}:$c$] (cT) at (6.75,1.2) {$c_T$};
		\node[arg, label={below}:$1$] (a1) at (0,0) {$a_1$};
		\node[arg, label={below}:$2$] (a2) at (0.75,0) {$\overline{a}_2$};
		\node[arg, label={below}:$3$] (a3) at (1.5,0) {$\overline{a}_3$};
		\node[arg, label={below}:$1$] (b1) at (3,0) {$b_1$};
		\node[arg, label={below}:$2$] (b2) at (3.75,0) {$\overline{b}_2$};
		\node[arg, label={below}:$4$] (b4) at (4.5,0) {$b_4$};
		\node[arg, label={below}:$3$] (c3) at (6,0) {$c_3$};
		\node[arg, label={below}:$4$] (c4) at (6.75,0) {$\overline{c}_4$};
		\draw[attack] 
		(aF) edge (a1)
		(aF) edge [dashed] (a2)
		(aF) edge [bend left, dashed] (a3)
		(aT) edge [bend right, dashed] (a1)
		(aT) edge (a2) 
		(aT) edge (a3)
		(bF) edge (b1) 
		(bF) edge [dashed] (b2) 
		(bF) edge [bend left] (b4) 
		(bT) edge [bend right, dashed] (b1)
		(bT) edge (b2)
		(bT) edge [dashed] (b4)
		(cF) edge (c3)
		(cF) edge [dashed] (c4)
		(cT) edge (c4)
		(cT) edge [dashed] (c3);
	}
	\caption{Reduction of 3-\SAT-instance $C_1 = \{a,b,c\}$, $C_2 = \{\neg a, \neg b\}$, $C_3 = \{\neg a, c\}$, $C_4 = \{b, \neg c\}$, to an instance $(F,S)$ of $\Ver_{\cf,1}^\PCAF$. Dashed arrows are attacks deleted in $\red{1}(F)$, i.e., they are edges in $\problematicPart(\red{1}(F))$.
	}
	\label{fig:reductionVerficationImage1CfAppendix}
\end{figure}

\verificationStableAdmissibleCompleteFirstImage*
\begin{proof}
	$\NP$-membership follows from results for general CAFs (cf.\ Table~\ref{tab:complexity_CAFwfCAF}). We show \NP-hardness. Let $\varphi$ be an arbitrary 3-\SAT-instance given as a set $\{C_1,\ldots,C_m\}$ of clauses over variables $X$. For convenience, we directly construct a CAF $F = (\Args,\Att,\cl) $ with $F \in \ImageTrans{1}$ instead of providing a PCAF $F'$ such that $\red{1}(F') = F$. This is legitimate, as, by our characterization of $\ImageTrans{1}$ (see Proposition~\ref{prop:characterizationImage1-transitive}), we can obtain $F'$ by simply adding all edges in $\problematicPart(F)$ to $R$ and defining $\succ$ accordingly. We also construct a set of claims $S$. 
	\begin{itemize}
		\item $\Args = V \cup \overline{V} \cup H$ where $V = \{x_i \mid x \in C_i, 1 \leq i \leq m\}$, $\overline{V} = \{\overline{x}_i \mid \neg x \in C_i, 1 \leq i \leq m\}$, and\\ 
		$H = \{x^k_{i,j}, \hat{x}^k_{i,j} \mid 1 \leq k \leq 4, x_i \in V, \overline{x}_j \in \overline{V}\}$;
		
		\item $\Att = \{(x_i, x^1_{i,j}), \allowbreak (x^1_{i,j}, x^2_{i,j}), \allowbreak (x^2_{i,j}, \overline{x}_j), \allowbreak (\overline{x}_j, x^3_{i,j}),$ \\ 
		\phantom{$\Att = \{$}%
		$(x^3_{i,j}, x^4_{i,j}), \allowbreak (x^4_{i,j}, x_i) \mid x_i \in V, \overline{x}_j \in \overline{V}\}$;
		
		\item $\cl(x_i) = \cl(\overline{x}_i) = i$ for all $x_i,\overline{x}_i$, \\
		$\cl(x^k_{i,j}) = \cl(\hat{x}^k_{i,j}) = x^k_{i,j}$ for all $x^k_{i,j}, \hat{x}^k_{i,j}$;
		
		\item $S = \{1,\ldots,m\} \cup \{\cl(a) \mid a \in H \}$. 
	\end{itemize}
	Figure~\ref{fig:reductionVerficationImage1StbAppendix} illustrates the above construction (and is the same as Figure~\ref{fig:reductionVerficationImage1Stb} in the main text). In general, every $\hat{x}^k_{i,j}$ only has outgoing edges in the wf-problematic part, and no incoming or outgoing attacks in $\Att$. Every $x^k_{i,j}$ only has incoming edges in the wf-problematic part. Finally, there can be no edges in the wf-problematic part between any $x_i$ (or $\overline{x}_i$) and any other $x_j$ (or $\overline{x}_j$). From this, and by the above construction, we can infer that $(\Args,\Att,\cl)$ fulfills all of the conditions to be in $\ImageTrans{1}$ (cf.\ Proposition~\ref{prop:characterizationImage1-transitive}). It remains to show the correctness of the above construction.   
	
	Assume $\varphi$ is satisfiable. Then there is an interpretation $I$ such that $I \models \varphi$. Let $E = \{x_i \in V \mid x \in I\} \cup \{\overline{x}_i \in \overline{V} \mid x \not\in I\} \cup \{x^2_{i,j},x^3_{i,j} \mid x_i,\overline{x}_j \in \Args, x \in I\} \cup \{x^1_{i,j},x^4_{i,j} \mid x_i,\overline{x}_j \in \Args, x \not\in I\} \cup \{\hat{x}^k_{i,j} \mid \hat{x}^k_{i,j} \in \Args\}$. It can be verified that $E$ is stable in $(\Args,\Att)$ and that $\cl(E) = S$. Since $\stable_c(F) \subseteq \comp_c(F) \subseteq \adm_c(F)$ for any CAF $F$, $S$ is also admissible and complete in $(\Args,\Att,\cl)$.
	
	Assume $S$ is admissible in $(\Args,\Att,\cl)$. Then there is some $E \subseteq \Args$ such that $E$ is admissible in $(\Args,\Att)$ and $\cl(E) = S$. Thus, for any $i \in \{1,\ldots,m\}$, there must be some $x$ such that $x_i \in E$ or $\overline{x}_i \in E$. Consider the case that $x_i \in E$. Since $E$ is admissible, $x^1_{i,j} \not\in E$ for any $j$ such that $\overline{x}_j \in \Args$. This further means that $\overline{x}_j \not\in E$ for any $\overline{x}_j \in \Args$, since we would need $x^1_{i,j} \in E$ to defend $\overline{x}_j$ from the attack by $x^2_{i,j}$. Likewise, if $\overline{x}_i \in E$, then $x_j \not\in E$ for all $x_j \in \Args$. Let $I = \{x \mid x_i \in E \text{ for some } i \}$. 
	Then for every $i$ there is some $x$ such that either $x \in C_i$ and $x \in I$ or $\neg x \in C_i$ and $x \not\in I$. Thus, $I$ satisfies all clauses in $C$ which means that $\varphi$ is satisfiable. The proof works likewise if we assume $S$ to be stable or complete since $\stable_c(F) \subseteq \comp_c(F) \subseteq \adm_c(F)$ for any CAF $F$.
\end{proof}
\begin{figure}[ht]
	\centering
	\footnotesize
	\tikz[yscale=0.9, label distance=-0.5mm]{
		\node[arg, label={above}:$1$] (a1) at (1,4.25) {$a_1$};
		\node[arg, label={right}:$2$] (a2) at (5,4.25) {$\overline{a}_2$};
		\node[arg, label={above}:$a^1_{1,2}$] (h1) at (2,5) {$a^1_{1,2}$};
		\node[arg, label={above}:$a^2_{1,2}$] (h2) at (4,5) {$a^2_{1,2}$};
		\node[arg, label={above}:$a^3_{1,2}$] (h3) at (4,3.5) {$a^3_{1,2}$};
		\node[arg, label={above}:$a^4_{1,2}$] (h4) at (2,3.5) {$a^4_{1,2}$};
		\node[arg, label={right}:$2$] (b2) at (5,1.75) {$b_2$};
		\node[arg, label={below}:$3$] (b3) at (1,1.75) {$\overline{b}_3$};
		\node[arg, label={below}:$b^1_{2,3}$] (g1) at (4,2.5) {$b^1_{2,3}$};
		\node[arg, label={below}:$b^2_{2,3}$] (g2) at (2,2.5) {$b^2_{2,3}$};
		\node[arg, label={below}:$b^3_{2,3}$] (g3) at (2,1) {$b^3_{2,3}$};
		\node[arg, label={below}:$b^4_{2,3}$] (g4) at (4,1) {$b^4_{2,3}$};
		\node[arg, label={right}:$3$] (c3) at (3,0) {$c_3$};
		\node[arg, label={right}:$a^1_{1,2}$] (d1) at (6,5.5) {$\hat{a}^1_{1,2}$};
		\node[arg, label={right}:$a^2_{1,2}$] (d2) at (6,4.5) {$\hat{a}^2_{1,2}$};
		\node[arg, label={left}:$a^3_{1,2}$] (d3) at (0,3.5) {$\hat{a}^3_{1,2}$};
		\node[arg, label={left}:$a^4_{1,2}$] (d4) at (0,4.5) {$\hat{a}^4_{1,2}$};
		\node[arg, label={left}:$b^1_{2,3}$] (e1) at (0,2.5) {$\hat{b}^1_{2,3}$};
		\node[arg, label={left}:$b^2_{2,3}$] (e2) at (0,1.5) {$\hat{b}^2_{2,3}$};
		\node[arg, label={right}:$b^3_{2,3}$] (e3) at (6,0.5) {$\hat{b}^3_{2,3}$};
		\node[arg, label={right}:$b^4_{2,3}$] (e4) at (6,1.5) {$\hat{b}^4_{2,3}$};
		\draw[attack] 
		(a1) edge (h1)
		(h1) edge (h2)
		(h2) edge (a2)
		(a2) edge (h3)
		(h3) edge (h4)
		(h4) edge (a1)
		(b2) edge (g1)
		(g1) edge (g2)
		(g2) edge (b3)
		(b3) edge (g3)
		(g3) edge (g4)
		(g4) edge (b2)
		(d1) edge [dashed] (h2)
		(d2) edge [dashed, bend right] (a2)
		(d3) edge [dashed] (h4)
		(d4) edge [dashed] (a1)
		(e1) edge [dashed] (g2)
		(e2) edge [dashed] (b3)
		(e3) edge [dashed] (g4)
		(e4) edge [dashed, bend left] (b2)
		(a2) edge [dashed, bend left] (g1)
		(b2) edge [dashed, bend right] (h3)
		(c3) edge [dashed, bend right] (g3);
	}
	\caption{Reduction of 3-\SAT-instance $C_1 = \{a\}$, $C_2 = \{\neg a, b\}$, $C_3 = \{\neg b, c\}$, to an instance $(F',S)$ of $\Ver_{\stable,1}^\PCAF$. Dashed arrows are attacks deleted in $\red{1}(F')$, i.e., they are edges in $\problematicPart(\red{1}(F'))$.}
	\label{fig:reductionVerficationImage1StbAppendix}
\end{figure}

\verificationPrefSemiStageFirstImage*
\begin{proof}
	$\SigmaP{2}$-membership follows from results for general CAFs (cf.\ Table~\ref{tab:complexity_CAFwfCAF}). We show hardness separately below (Lemmas~\ref{prop:verificationPrefFirstImage}, \ref{prop:verificationSemiFirstImage}, and~\ref{prop:verificationStageFirstImage}).
\end{proof}

\begin{restatable}{lemma}{verificationPrefFirstImage} \label{prop:verificationPrefFirstImage}
	$\Ver_{\pref,1}^\PCAF$ is $\SigmaP{2}$-hard, even for transitive preferences.
\end{restatable}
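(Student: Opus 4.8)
The plan is to reduce from the validity of a quantified Boolean formula $\exists X\,\forall Y\,\phi(X,Y)$, the canonical $\SigmaP{2}$-complete problem, and to adapt the AF-level reduction for preferred semantics from \cite[Reduction~3.7]{DvorakD17} to the claim-augmented setting. Since the resulting framework must lie inside $\ImageTrans{1}$, I would, exactly as in the proofs of Propositions~\ref{prop:verificationConflictfreeNaive1} and~\ref{prop:verificationStableAdmissibleComplete1}, directly construct a CAF $F=(\Args,\Att,\cl)$ together with a set of claims $S$, and then invoke Proposition~\ref{prop:characterizationImage1-transitive} to recover a PCAF $F'$ with a transitive preference relation such that $\red{1}(F')=F$ (by adding every edge of $\problematicPart(F)$ to the attack relation and orienting $\succ$ along these edges). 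The verification instance is then $(F',S)$, and the task is to arrange matters so that $S\in\pref_c(F)$ if and only if $\exists X\,\forall Y\,\phi$ is valid.

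The two quantifier levels must be mapped onto the two sources of complexity in claim-level preferred verification: the outer existential $\exists X$ onto the existential choice of an argument set realising the claim set $S$, and the inner universal $\forall Y$ onto the $\coNP$ maximality/defence check that certifies a set to be preferred. Concretely, for each existential variable $x$ I would introduce two arguments sharing a common claim so that any preferred extension realising $S$ is forced to contain exactly one of them, thereby encoding a guess of an assignment to $X$; crucially, this guess must be \emph{free}, which is where Reduction~1 helps. Because Reduction~1 only \emph{deletes} attacks, it can split the outgoing attacks of two same-claim arguments --- precisely the non-well-formed pattern that distinguishes $\Ver_{\pref}^{\CAF}$ ($\SigmaP{2}$-complete) from $\Ver_{\pref}^{\wfCAF}$ (only $\coNP$-complete). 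The formula gadget and the $Y$-checking subframework would be taken over from the AF construction of \cite{DvorakD17} and adjusted so that maximisation among admissible sets at the claim level realises the universal check: a candidate extension realising $S$ survives as a preferred extension exactly when every assignment to $Y$ is defeated, i.e.\ when $\forall Y\,\phi(X,Y)$ holds under the guessed $X$.

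The decisive step is to make this construction respect the structural characterisation of $\ImageTrans{1}$ in Proposition~\ref{prop:characterizationImage1-transitive}: $\problematicPart(F)$ must be acyclic, and no attack $(a,b)\in\Att$ may be accompanied by a path from $a$ to $b$ in $\problematicPart(F)$. I would achieve this by placing all deleted attacks (the guess-enabling edges) so that they point consistently from the ``weaker'' guess arguments to fixed targets, never forming cycles and never running parallel to a genuine attack; this also guarantees that the induced $\succ$ can be chosen transitive and asymmetric. The main obstacle I anticipate is exactly this reconciliation: the second-level hardness needs the non-well-formed splitting of same-claim arguments, yet the splitting edges are constrained to form an acyclic problematic part compatible with a transitive preference order. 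Verifying that the $Y$-gadget of \cite{DvorakD17} can be wired into the guess gadget without introducing a problematic cycle or a forbidden attack--path coincidence, while still preserving the equivalence $S\in\pref_c(F)\iff\exists X\,\forall Y\,\phi$, is the part requiring the most care; the remaining correctness arguments are routine case analyses analogous to those in Propositions~\ref{prop:verificationConflictfreeNaive1} and~\ref{prop:verificationStableAdmissibleComplete1}.
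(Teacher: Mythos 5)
Your high-level strategy coincides with the paper's: adapt \cite[Reduction~3.7]{DvorakD17}, build the CAF directly and certify membership in $\ImageTrans{1}$ via Proposition~\ref{prop:characterizationImage1-transitive}, map the outer quantifier block onto the choice of a realization of $S$ and the inner block onto the admissible-maximality check. (The paper phrases this as a reduction from $\forall Y\exists Z\,\varphi$ to the \emph{complement}, i.e.\ $S\notin\pref_c(F)$ iff $\Phi$ is valid, which is just the contrapositive of your positive formulation.) However, what you have written is a plan rather than a proof: the construction that carries the entire argument is never given, and the one concrete gadget you do propose does not work as stated. You want, for each outer variable, ``two arguments sharing a common claim so that any preferred extension realising $S$ is forced to contain exactly one of them.'' Realizing $S$ gives you ``at least one,'' but ``at most one'' requires a conflict between the two same-claim arguments, and no CAF in $\Image{1}$ (indeed in any $\Image{i}$) can contain a conflict between two same-claim arguments without self-attacks: if $\cl(a)=\cl(b)$ and $(a,b)\in\Att$ but $(b,b)\notin\Att$, then $(b,b)\in\problematicPart(F)$, a cycle of length one, violating condition~(1) of Proposition~\ref{prop:characterizationImage1-transitive} (this obstruction is exactly the example given after Definition~\ref{def:images}). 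So the guess cannot be encoded the way you describe.

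The paper sidesteps this by separating the two roles you have conflated. The guessed literals $y,\overline y$ for the outer variables carry \emph{distinct} claims and attack each other symmetrically, so ordinary preferred maximality forces exactly one of them into any preferred extension; this needs no non-well-formedness at all. The same-claim companions $y^*,\overline y^*$ (with $\cl(y^*)=y$, $\cl(\overline y^*)=\overline y$) are unattacked and, in the constructed $F$, have no outgoing attacks; their sole purpose is to realize every claim of $S=Y\cup\overline Y$ unconditionally, so that \emph{every} choice of outer assignment yields a candidate realization of $S$, and $S\in\pref_c(F)$ reduces to asking whether \emph{some} such candidate is already maximal admissible (i.e., cannot be extended by $\varphi$ together with a satisfying $Z$-assignment). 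The entire wf-problematic part then consists of the outgoing attacks the starred arguments would need for well-formedness; since only starred arguments have outgoing problematic edges and none has an incoming one, all problematic paths have length one, so acyclicity and the no-parallel-path condition are immediate, and the induced $\succ$ is trivially transitive and asymmetric. Without this (or an equivalent) concrete construction and the accompanying two-directional correctness argument, your proposal leaves the essential content of the lemma unproven.
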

\begin{proof}
	We provide a reduction from $\text{QBF}^2_\forall$ to the complementary problem of $\Ver_{\pref,1}^\PCAF$. This is an adaptation of \cite[Reduction~3.7]{DvorakD17}. Let $\Phi = \forall Y \exists Z \varphi(Y,Z)$ be an arbitrary instance of $\text{QBF}^2_\forall$. Let $X = Y \cup Z$. We assume $\varphi$ to be in CNF and denote the set of clauses in $\varphi$ by $C$. Without loss of generality, we can assume that every clause contains either $z$ or $\neg z$ for at least one $z \in Z$. We construct a CAF $(\Args,\Att,\cl)$ as well as a set of claims $S$: 
	\begin{itemize}
		\item $\Args = \{\varphi, \overline{\varphi}\} \cup C \cup X \cup \overline{X} \cup Y^*\cup \overline Y^*$, where \\
		$\overline{X} = \{\overline x \mid x \in X\}$,
		$Y^*=\{y^*\mid y\in Y\}$, and\\ $\overline Y^*=\{\overline y^*\mid y\in Y\}$;
		\item $\Att = \{(x,\overline x),(\overline x,x)\mid x \in X\}\ \cup$\\
		\phantom{$\Att =\ $}%
		$\{(x,c) \mid x \in c, c \in C\} \cup \{(\overline{x},c) \mid \neg x \in c, c \in C\}\ \cup $\\
		\phantom{$\Att =\ $}%
		$\{(c,c)\mid c\in C\} \cup \{(c,\varphi) \mid c \in C\}\ \cup$\\ 
		\phantom{$\Att =\ $}%
		$\{(\varphi,\overline{\varphi}),(\overline{\varphi},\overline{\varphi})\} \cup \{(\overline{\varphi},z),(\overline{\varphi},\overline{z}) \mid z \in Z \}$;
		\item $\cl(v^*)=v$ for $v\in Y^*\cup \overline Y^*$ and $\cl(v) = v$ else;
		\item $S = Y\cup \overline Y$. 
	\end{itemize}
	Figure~\ref{fig:reductionVerficationImage1Prf} illustrates the above construction. First of all, we can see that $(\Args,\Att,\cl)$ is in $\ImageTrans{1}$ since all paths in $\problematicPart(F)=\{(v^*,v)\mid v\in Y\cup \overline Y\}$ are of length $1$ (only arguments in $Y^*\cup \overline Y^*$ have outgoing edges in $\problematicPart(F)$). It remains to verify the correctness of the reduction, i.e., we will show that $S\notin \pref_c(F)$ iff $\Phi$ is valid.
	
	First assume $\Phi$ is valid, that is, for all $Y'\subseteq Y$, there is $Z'\subseteq Z$ such that $M=Y'\cup Z'$ is a model of $\varphi$. Towards a contradiction, assume $S\in \pref_c(F)$. Then there must be some $E \subseteq \Args$ such that $E \in \pref((\Args,\Att))$ and $\cl(E)=S$. Clearly, $Y^*\cup \overline Y^*\subseteq E$; moreover, $E$ contains either $y$ or $\overline y$ for each $y\in Y$ (otherwise $E\cup \{y\}$ is a proper admissible superset of $E$, contradiction to $E$ being preferred in $(\Args,\Att)$). Let $Y'=E\cap Y$. By assumption, there is $Z'\subseteq Z$ such that $M=Y'\cup Z'$ is a model of $\varphi$. We show that $D=Y^*\cup \overline Y^*\cup M\cup \{\overline x\mid x\notin M\}\cup \{\varphi\}$ is admissible: 
	$D$ is conflict-free by construction. $D$ is admissible since $\varphi$ defends $v\in Z'\cup \{\overline z\mid z\notin Z'\}$ against $\overline \varphi$; moreover, each argument $v\in X$ defends itself against $\overline v$ and vice verca; also, $M\cup \{\overline x\mid x\notin M\}$ defends $\varphi$ against each attack from clause-arguments $c\in C$ since $\varphi$ is satisfiable: For each clause $c\in C$, there is either $v\in M$ with $v\in C$ or $\neg v\in C $ for some $v\notin M$. In the first case, $(v,c)\in \Att$ and $v\in E$, in the latter, $(\overline v,c)\in \Att$ and $\overline v\in E$. 
	Now observe that $E$ and $D$ both contain $Y'\cup \{\overline y\mid y\notin Y'\}\cup Y^*\cup \overline Y^*$ (by construction of $D$), that is, $E \subset D$ and therefore $E \not\in \pref((\Args,\Att))$. Contradiction.
	
	Now assume $S\notin \pref_c(F)$. Consider an arbitrary subset $Y'\subseteq Y$. We show that there is some $Z'\subseteq Z$ such that $Y'\cup Z'$ is a model of $\varphi$. Let $E=Y'\cup \{\overline y\mid y\notin Y'\}\cup Y^*\cup \overline Y^*$ (and observe that $\cl(E)=S$). By assumption $S\notin \pref_c(F)$ we have that $E$ is not preferred in $(\Args,\Att)$, that is, there  is some $D\in \adm((\Args,\Att))$ with $D \supset E$. 
	Thus, there is some $v\in \Args\setminus E$ such that $v\in D$. It follows that $\varphi\in D$: Clearly, $v\in \{\varphi\}\cup Z\cup \overline Z$ since each remaining argument is either self-attacking or attacked by $E$ (and thus also by $D$). In case $v=\varphi$, we are done; in case $v\in Z\cup \overline Z$, we have $\varphi\in D$ by admissibility of $D$ (observe that $\varphi$ is the only attacker of $\overline \varphi$). Consequently, $D$ defends $\varphi$ against each attack from each clause-argument $c\in C$.
	Now, let $Z'=Z\cap D$. We show that $M=Y'\cup Z'$ is a model of $\varphi$. Consider some arbitrary clause $c\in C$. Since $\varphi\in D$, there is some $v\in D$ such that $(v,c)\in \Att$ by admissibility of $D$. In case $v\in X$, we have $v\in M$ and $v\in c$ by construction of $F$; similarly, in case $v\in \overline X$ we have $v\notin M$ and $\neg v\in c$. Thus $c$ is satisfied by $M$. Since $c$ was chosen arbitrary it follows that $\varphi$ is satisfiable.
	We have shown that $\Phi$ is valid. 
\end{proof}
\begin{figure}[ht]
	\centering
	\tikz{
		\node[arg, label={left}:$a$] (a1) at (1.5,0) {$a$};
		\node[arg, label={right}:$\overline{a}$] (a2) at (2.5,0) {$\overline{a}$};
		\node[arg, label={left}:$b$] (b1) at (4,0) {$b$};
		\node[arg, label={right}:$\overline{b}$] (b2) at (5,0) {$\overline{b}$};
		\node[arg, label={left}:$a$] (a1p) at (0,1) {$a^*$};
		\node[arg, label={left}:$\overline{a}$] (a2p) at (0,0) {$\overline a^*$};
		\node[arg, label={above left}:$c_1$] (c1) at (2,1.5) {$c_1$};
		\node[arg, label={above right}:$c_2$] (c2) at (4.5,1.5) {$c_2$};
		\node[arg, label={left}:$\varphi$] (phi) at (3.25,3) {$\varphi$};
		\node[arg, label={right}:$\overline{\varphi}$] (negphi) at (6,3) {$\overline{\varphi}$};
		\draw[attack] 
		(a1) edge (a2)
		(a2) edge (a1)
		(b1) edge (b2)
		(b2) edge (b1)
		(a1) edge (c1)
		(b1) edge (c1)
		(b1) edge (c2)
		(a2) edge (c2)
		(c1) edge[loop left] (c1)
		(c2) edge[loop left] (c2)
		(c1) edge (phi)
		(c2) edge (phi)
		(phi) edge (negphi)
		(negphi) edge [loop above] (negphi)
		(negphi) edge [bend angle=15, bend left] (b1)
		(negphi) edge [bend angle=15, bend left] (b2)
		;
	}
	\caption{Reduction of the $\text{QBF}^2_\forall$ instance $\forall a \exists b ((a \lor b) \allowbreak \land \allowbreak (\neg a \lor b))$ to an instance of $\Ver_{\pref,1}^\PCAF$.}
	\label{fig:reductionVerficationImage1Prf}
\end{figure}

\begin{restatable}{lemma}{verificationSemiFirstImage} \label{prop:verificationSemiFirstImage}
	$\Ver_{\semi,1}^\PCAF$ is $\SigmaP{2}$-hard, even for transitive preferences.
\end{restatable}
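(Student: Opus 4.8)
The plan is to mirror the proof of Lemma~\ref{prop:verificationPrefFirstImage}: reduce $\text{QBF}^2_\forall$ to the \emph{complement} of $\Ver_{\semi,1}^\PCAF$, constructing the reduced CAF directly in $\ImageTrans{1}$ so that, by Proposition~\ref{prop:characterizationImage1-transitive}, a PCAF with transitive $\succ$ realizing it is obtained by adding the edges of $\problematicPart(F)$ and orienting $\succ$ along them. Given $\Phi=\forall Y\exists Z\,\varphi(Y,Z)$ with $\varphi$ in CNF, I would reuse the skeleton of the preferred case: the formula argument $\varphi$ and its self-attacking partner $\overline\varphi$ with $(\varphi,\overline\varphi),(\overline\varphi,\overline\varphi)\in\Att$, clause arguments $c$ with $(c,c),(c,\varphi)\in\Att$ and literal-to-clause attacks, the choice gadget $(x,\overline x),(\overline x,x)$ for $x\in X=Y\cup Z$, the free copies $Y^*,\overline Y^*$ supplying the claims of $S=Y\cup\overline Y$, and the gating attacks $(\overline\varphi,z),(\overline\varphi,\overline z)$ for $z\in Z$. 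The $\SigmaP{2}$-membership is inherited from general CAFs (Table~\ref{tab:complexity_CAFwfCAF}); only hardness needs work.

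The point where $\semi$ genuinely differs from $\pref$ is that it maximises the \emph{range} $E^\oplus$ rather than $E$. With the bare skeleton, any assignment satisfying $\varphi$ yields an admissible set accepting $\varphi$, covering every clause, and (through $(\varphi,\overline\varphi)$ and the choice gadgets) attacking every remaining argument, i.e.\ a \emph{stable} extension of full range. Such an extension would dominate the range of every extension built on a \emph{falsifying} universal assignment, collapsing $\semi$ onto $\stb$ and forcing $S\notin\semi_c(F)$ whenever $\varphi$ is satisfiable at all, regardless of the quantifier structure. To repair this I would add, for each $y\in Y$, two fresh range-marker arguments $m_y,\overline m_y$ attacked respectively by the real literals $y$ and $\overline y$ and by nothing else (their copies in $Y^*\cup\overline Y^*$ contribute these attacks only in the PCAF, hence in $\problematicPart(F)$). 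Selecting $y$ then covers $m_y$ but leaves $\overline m_y$ uncovered, and symmetrically; since no conflict-free set contains both $y$ and $\overline y$, no set covers both markers, so \emph{no stable extension exists} and the ranges of extensions on distinct universal assignments become pairwise \emph{incomparable}. The $\overline\varphi$-gating is kept so that a $Z$-literal, and hence $\varphi,\overline\varphi$, can enter an admissible set's range only once the current assignment satisfies $\varphi$.

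Membership in $\ImageTrans{1}$ is checked exactly as in Lemma~\ref{prop:verificationPrefFirstImage}: the only missing-for-well-formedness edges run from $Y^*\cup\overline Y^*$ to the corresponding literals, markers, and clauses, so $\problematicPart(F)$ has only length-one paths, is acyclic, and spans no genuine attack; the induced $\succ$ is asymmetric and may be taken transitive. The correctness claim is that $S\notin\semi_c(F)$ iff $\Phi$ is valid. If $\Phi$ is valid, every argument set realizing $S$ fixes some universal assignment $Y'$ (note a realization contains neither $Z$-literals nor $\varphi$ nor markers, as their claims lie outside $S$); by validity $Y'$ admits a completion $Z'$ satisfying $\varphi$, and accepting $\varphi$ together with that $Z'$-choice gives an admissible set whose range strictly extends the realization's (it additionally covers $\varphi,\overline\varphi$ and all $Z$-literals), so no realization is range-maximal and $S\notin\semi_c(F)$.

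The main obstacle is the converse direction, a two-sided range argument. Assuming $\Phi$ is not valid, I would pick $Y'$ with $\varphi(Y',\cdot)$ unsatisfiable, let $E$ be the corresponding realization of $S$, and show $E$ is semi-stable. The delicate part is ruling out every potential dominator $D$: if $D$ used a universal choice differing from $Y'$ it would cover some marker $E$ leaves uncovered while missing one $E$ covers, so $D^\oplus$ is incomparable to $E^\oplus$; and if $D$ agrees with $Y'$ on $Y$ (which the markers force), then unsatisfiability leaves some clause unattacked, so $\varphi$ is undefended, the gating keeps all $Z$-literals and $\varphi,\overline\varphi$ out of $D$ and out of $D^\oplus$, and $D^\oplus=E^\oplus$ follows. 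Making both observations airtight while respecting the claim-level constraint that a realization of $S$ may contain none of the $Z$-literals, $\varphi$, or markers is the crux; the remaining bookkeeping (conflict-freeness, $\ImageTrans{1}$-membership, transitivity of $\succ$) is routine and parallels Lemma~\ref{prop:verificationPrefFirstImage}.
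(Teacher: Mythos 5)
Your high-level plan coincides with the paper's: keep the $\Ver_{\pref,1}^\PCAF$ construction and add, for each universal literal, a ``range marker'' whose coverage forces any range-dominating set to agree with the chosen universal assignment. However, your concrete marker gadget has a genuine gap: you stipulate that $m_y$ is attacked by $y$ ``and by nothing else'', i.e.\ the markers are \emph{not} self-attacking. Then a marker can enter a range by \emph{membership} rather than by being attacked. Concretely, take an admissible $D$ built on a universal assignment with $y\in D$: the sole attacker of $\overline m_y$ is $\overline y$, which $y$ attacks, so $D\cup\{\overline m_y\}$ is conflict-free and admissible, and $m_y\in D^+$ already. Hence $D$ covers \emph{both} markers for every $y$, your claims that no stable extension exists and that ranges across distinct universal assignments are incomparable both fail, and the reduction collapses exactly as you feared: if some universal assignment admits a satisfying completion, the corresponding $D$ (containing $\varphi$, all $Z$-literals, and all markers) range-dominates every realization of $S$, so $S\notin\semi_c(F)$ even when $\Phi$ is invalid.

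The paper's fix is to make each marker self-attacking: it adds arguments $d_v$ for $v\in Y\cup\overline Y$ with attacks $(d_v,d_v)$ and $(v,d_v)$ only. A self-attacking argument can never belong to a conflict-free set, so the only way $d_v$ enters $D^\oplus$ is via an attack, and its only non-self attacker is $v$; thus $D^\oplus\supset E^\oplus$ forces $v\in D$ for every universal literal $v\in E$, which is precisely the rigidity your argument needs in the converse direction (and it also yields the incomparability of ranges across distinct universal assignments). With that one change your construction and both directions of the correctness argument go through as in the paper; the $\ImageTrans{1}$-membership and transitivity bookkeeping are unaffected, since the self-attacks involve no claim shared with $Y^*\cup\overline Y^*$ beyond the outgoing wf-problematic edges you already account for.
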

\begin{proof}
	We provide a reduction from $\text{QBF}^2_\forall$ to the complementary problem: 
	For an instance $\Phi = \forall Y \exists Z \varphi(Y,Z)$ of $\text{QBF}^2_\forall$ where $\varphi$ is given by a set of clauses $C$ over atoms in $X = Y \cup Z$, we construct a CAF $F = (\Args,\Att,\cl)$ as well as a set of claims $S$: 
	\begin{itemize}
		\item $\Args = \{\varphi, \overline{\varphi}\} \cup C \cup X \cup \overline{X} \cup Y^*\cup \overline Y^*\cup \{d_v\mid v\in Y\cup \overline Y\}$, where $\overline{X} = \{\overline x \mid x \in X\}$, $Y^*=\{y^*\mid y\in Y\}$, and $\overline Y^*=\{\overline y^*\mid y\in Y\}$;
		
		\item $\Att = \{(x,\overline x),(\overline x,x)\mid x \in X\} \cup \{(c,c), (c,\varphi)\mid c\in C\}\ \cup$\\ 
		\phantom{$\Att =\ $}%
		$\{(x,c) \mid x \in c, c \in C\} \cup \{(\overline{x},c) \mid \neg x \in c, c \in C\}\ \cup$\\ 
		\phantom{$\Att =\ $}%
		$\{(d_v,d_v),(v,d_v)\mid v\in Y\cup \overline Y\}\ \cup$\\
		\phantom{$\Att =\ $}%
		$\{(\varphi,\overline{\varphi}),(\overline{\varphi},\overline{\varphi})\} \cup \{(\overline{\varphi},z),(\overline{\varphi},\overline{z}) \mid z \in Z \}$;
		\item $\cl(v^*)=v$ for $v\in Y^*\cup \overline Y^*$ and $\cl(v) = v$ else;
		\item $S = Y\cup \overline Y$. 
	\end{itemize}
	
	Figure~\ref{fig:reductionVerficationImage1Semi} illustrates the above construction. The constructed CAF is contained in $\ImageTrans{1}$ since all paths in $\problematicPart(F)=\{(v^*,v)\mid v\in Y\cup \overline Y\}$ are of length $1$ (only arguments in $Y^*\cup \overline Y^*$ have outgoing edges in $\problematicPart(F)$).
	It remains to verify the correctness of the reduction, i.e., we will show that $S\notin \semi\!_c(F)$ iff $\Phi$ is valid.
	
	First assume $\Phi$ is valid, that is, for all $Y'\subseteq Y$, there is $Z'\subseteq Z$ such that $M=Y'\cup Z'$ is a model of $\varphi$. Towards a contradiction, assume $S\in \semi\!_c(F)$ and consider a $\semi$-realization $E$ of $S$, i.e., $\cl(E)=S$ and $E\in \semi((\Args,\Att))$. Clearly, $Y^*\cup \overline Y^*\subseteq E$; moreover, $E$ contains either $y$ or $\overline y$ for each $y\in Y$ (otherwise $E\cup \{y\}$ is a proper admissible superset of $E$, contradiction to $E$ being semi-stable in $(\Args,\Att)$). Let $Y'=E\cap Y$. By assumption, there is $Z'\subseteq Z$ such that $M=Y'\cup Z'$ is a model of $\varphi$. We show that $D=Y^*\cup \overline Y^*\cup M\cup \{\overline x\mid x\notin M\}\cup \{\varphi\}$ is admissible: %
	$D$ is conflict-free by construction. $D$ is admissible since $\varphi$ defends $v\in Z'\cup \{\overline z\mid z\notin Z'\}$ against $\overline \varphi$; moreover, each argument $v\in X$ defends itself against $\overline v$ and vice verca; also, $M\cup \{\overline x\mid x\notin M\}$ defends $\varphi$ against each attack from clause-arguments $c\in C$ since $\varphi$ is satisfiable: For each clause $c\in C$, there is either $v\in M$ with $v\in C$ or $\neg v\in C $ for some $v\notin M$. In the first case, $(v,c)\in \Att$ and $v\in E$, in the latter, $(\overline v,c)\in \Att$ and $\overline v\in E$. 
	Now observe that $E$ and $D$ both contain $Y'\cup \{\overline y\mid y\notin Y'\}\cup Y^*\cup \overline Y^*$ (by construction of $D$), that is, $D$ properly extends $E$. Since $\varphi$ is not contained in $E\cup E^+$ ($\varphi\notin E$ since $\cl(E)=S$ and $\varphi\notin S$ and $\varphi\notin E^+$ since the only attacker of $\varphi$ are clause-arguments $c\in C$), it follows that $E\cup E^+\subset D\cup D^+$, that is, we have found an admissible set $D$ having range larger than $E$.
	Consequently, $S\notin \semi\!_c(F)$. 
	
	Now assume $S\notin \semi\!_c(F)$. Consider an arbitrary subset $Y'\subseteq Y$. We show that there is some $Z'\subseteq Z$ such that $Y'\cup Z'$ is a model of $\varphi$. Let $E=Y'\cup \{\overline y\mid y\notin Y'\}\cup Y^*\cup \overline Y^*$ (and observe that $\cl(E)=S$). By assumption $S\notin \semi\!_c(F)$ we have that $E$ is not semi-stable in $(\Args,\Att)$, that is, there  is some $D\in \adm((\Args,\Att))$ with $D\cup D^+\supset E\cup E^+$. 
	In particular, we have $D\supset E$ since each argument $d_v\in E^+$, $v\in Y\cup \overline Y$, has precisely one non-self-attacking attacker (namely the argument $v$). Thus, $Y'\cup \{\overline y\mid y\notin Y'\}\subseteq D$; moreover, $D$ contains each argument $v\in Y^*\cup \overline Y^*$ since each such argument is unattacked. 
	It follows that $\varphi\in D$: Since $E\subset D$, there is some $v\in \Args\setminus E$ such that $v\in D$. Clearly, $v\in \{\varphi\}\cup Z\cup \overline Z$ since each remaining argument is either self-attacking or attacked by $E$ (and thus also by $D$). In case $v=\varphi$, we are done; in case $v\in Z\cup \overline Z$, we have $\varphi\in D$ by admissibility of $D$ (observe that $\varphi$ is the only attacker of $\overline \varphi$). Consequently, $D$ defends $\varphi$ against each attack from each clause-argument $c\in C$.
	Now, let $Z'=Z\cap D$. We show that $M=Y'\cup Z'$ is a model of $\varphi$. Consider some arbitrary clause $c\in C$. Since $\varphi\in D$, there is some $v\in D$ such that $(v,c)\in \Att$ by admissibility of $D$. In case $v\in X$, we have $v\in M$ and $v\in c$ by construction of $F$; similarly, in case $v\in \overline X$ we have $v\notin M$ and $\neg v\in c$. Thus, $c$ is satisfied by $M$. Since $c$ was chosen arbitrarily it follows that $\varphi$ is satisfiable.
	We have shown that $\Phi$ is valid. 
\end{proof}
\begin{figure}[ht]
	\centering
	\tikz{
		\node[arg, label={left}:$a$] (a1) at (1.5,1) {$a$};
		\node[arg, label={right}:$\overline a$] (a2) at (2.5,1) {$\overline{a}$};
		\node[arg, label={left}:$d_{a}$] (d1) at (1.5,0) {$d_a$};
		\node[arg, label={right}:$d_{\overline a}$] (d2) at (2.5,0) {$d_{\overline a}$};
		\node[arg, label={left}:$b$] (b1) at (4,1) {$b$};
		\node[arg, label={right}:$\overline b$] (b2) at (5,1) {$\overline{b}$};
		\node[arg, label={left}:$a$] (a1p) at (0,2) {$a^*$};
		\node[arg, label={left}:$\overline a$] (a2p) at (0,1) {$\overline a^*$};
		\node[arg, label={above left}:$c_1$] (c1) at (2,2.5) {$c_1$};
		\node[arg, label={above right}:$c_2$] (c2) at (4.5,2.5) {$c_2$};
		\node[arg, label={left}:$\varphi$] (phi) at (3.25,4) {$\varphi$};
		\node[arg, label={right}:$\overline{\varphi}$] (negphi) at (6,4) {$\overline{\varphi}$};
		\draw[attack] 
		(a1) edge (d1)
		(d1) edge[loop below] (d1)
		(a2) edge (d2)
		(d2) edge[loop below] (d2)
		(a1) edge (a2)
		(a2) edge (a1)
		(b1) edge (b2)
		(b2) edge (b1)
		(a1) edge (c1)
		(b1) edge (c1)
		(b1) edge (c2)
		(a2) edge (c2)
		(c1) edge[loop left] (c1)
		(c2) edge[loop left] (c2)
		(c1) edge (phi)
		(c2) edge (phi)
		(phi) edge (negphi)
		(negphi) edge [loop above] (negphi)
		(negphi) edge [bend angle=15, bend left] (b1)
		(negphi) edge [bend angle=15, bend left] (b2)
		;
	}
	\caption{Reduction of the $\text{QBF}^2_\forall$ instance $\forall a \exists b ((a \lor b) \allowbreak \land \allowbreak (\neg a \lor b))$ to an instance of $\Ver_{\semi,1}^\PCAF$.}
	\label{fig:reductionVerficationImage1Semi}
\end{figure}

\begin{restatable}{lemma}{verificationStageFirstImage} \label{prop:verificationStageFirstImage}
	$\Ver_{\stage,1}^\PCAF$ is $\SigmaP{2}$-hard, even for transitive preferences.
\end{restatable}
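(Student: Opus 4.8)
The plan is to reduce $\text{QBF}^2_\forall$ to the complementary problem of $\Ver_{\stage,1}^\PCAF$, mirroring the template of Lemmas~\ref{prop:verificationPrefFirstImage} and~\ref{prop:verificationSemiFirstImage}. Given $\Phi = \forall Y \exists Z\, \varphi(Y,Z)$ with clause set $C$ over $X = Y \cup Z$, I would build a CAF $F = (\Args,\Att,\cl)$ together with $S = Y \cup \overline Y$ and argue that $S \notin \stage_c(F)$ iff $\Phi$ is valid. As in the semi-stable proof I would reuse the $Y$-guessing $2$-cycles $(y,\overline y)$, the starred claim-carriers $Y^* \cup \overline Y^*$ (unattacked, with $\cl(y^*)=y$) that guarantee $S$ is realizable by a free choice of one literal per $y\in Y$, the self-attacking clause-arguments $c$ with $(c,\varphi)\in\Att$ and literal-attacks $(x,c),(\overline x,c)$, and the dummy arguments $d_v$ with $(v,d_v),(d_v,d_v)\in\Att$. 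The dummies again force any range-dominating set to extend a realization: if $\cl(E)=S$ and $d_v\in E^+$ for a chosen literal $v$, then any conflict-free $D$ with $E^\oplus\subseteq D^\oplus$ must attack the self-attacking $d_v$, and $v$ is its only other attacker, so $v\in D$; together with the unattacked starred carriers this yields $E\subseteq D$. Membership $F\in\ImageTrans{1}$ would follow exactly as before via Proposition~\ref{prop:characterizationImage1-transitive}, since all edges of $\problematicPart(F)$ originate from $Y^*\cup\overline Y^*$ and hence form length-$1$ paths.

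The genuinely new difficulty is that stage works with conflict-free rather than admissible sets, so the defence mechanism the semi-stable proof relied on — $\varphi$ being forced into a dominating set because it is the unique defender of the $Z$-literals against the self-attacking $\overline\varphi$ — is no longer available: in the conflict-free world the ``booster'' argument that enlarges the range can typically just be thrown in for free. Concretely, with the verbatim semi-stable gadget one could enlarge the range of any realization $E$ merely by adding $\varphi$ (or $\overline\varphi$), independently of whether $\varphi$ is satisfiable, which would make $S\notin\stage_c(F)$ hold unconditionally. The remedy I would pursue is to re-tie the unique range gain to a genuine satisfying $Z$-assignment by a purely range-based device rather than by defence: arrange the goal gadget so that the only argument whose coverage is missing from every realization of $S$ is $\overline\varphi$, so that covering $\overline\varphi$ conflict-freely forces the inclusion of $\varphi$, while making the conflict-free inclusion of $\varphi$ possible as a net range gain only when every clause-argument is simultaneously covered — for instance by attaching to each clause an additional self-attacking guard that lies in the range precisely when that clause is satisfied and whose non-coverage blocks any net range increase. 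This would make a strictly larger range attainable exactly when some $Z'$ completes the chosen $Y'$ to a model of $\varphi$.

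With such a gadget in place the two directions proceed by range bookkeeping in place of defence. For the forward direction, assuming $\Phi$ valid, I would take any stage realization $E$ of $S$, read off $Y'=E\cap Y$, use validity to obtain a model $M=Y'\cup Z'$, and exhibit the conflict-free set $D=Y^*\cup\overline Y^*\cup M\cup\{\overline x\mid x\notin M\}\cup\{\varphi\}$, verifying $E^\oplus\subsetneq D^\oplus$ because $\varphi$ (hence $\overline\varphi$, all $Z$-literals, and all clause guards) enters $D^\oplus$ while $\varphi,\overline\varphi\notin E^\oplus$; this contradicts $E$ being stage. For the backward direction, fixing an arbitrary $Y'\subseteq Y$ and the realization $E=Y'\cup\{\overline y\mid y\notin Y'\}\cup Y^*\cup\overline Y^*$, the hypothesis $S\notin\stage_c(F)$ gives a conflict-free $D$ with $E^\oplus\subsetneq D^\oplus$; the dummy argument forces $E\subseteq D$, the guard gadget forces $\varphi\in D$ and that $D$ covers every clause-argument, and setting $Z'=Z\cap D$ yields a model $M=Y'\cup Z'$ of $\varphi$, establishing validity of $\Phi$. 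I expect the main obstacle to be exactly the design and correctness of this range-guard gadget: one must prove that no \emph{spurious} conflict-free set can strictly dominate the range of $E$ without encoding a real satisfying assignment, a step for which the semi-stable proof could simply invoke admissibility but which here has to be argued entirely through the range order on conflict-free sets.
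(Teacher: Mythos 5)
You correctly identify the template (reduction from $\text{QBF}^2_\forall$ to the complement, reusing the $Y$-literal $2$-cycles, the starred claim-carriers, the self-attacking clause arguments and the $d_v$ dummies, with membership in $\ImageTrans{1}$ following as before) and, more importantly, you correctly diagnose the obstacle: for stage semantics the admissibility-based mechanism of the semi-stable proof collapses, because a non-conflicting ``booster'' can be added to any conflict-free set for free. However, your proposed remedy does not close this gap, and the gadget you gesture at would fail for exactly the reason you yourself flag. If $\varphi$ is kept as an argument that is attacked only by the (self-attacking) clause arguments and is the sole attacker of a self-attacking $\overline{\varphi}$, then for \emph{any} conflict-free realization $E$ of $S$ the set $E\cup\{\varphi\}$ is again conflict-free and has strictly larger range (it gains $\varphi$ and $\overline{\varphi}$ and loses nothing), so $S\notin\stage_c(F)$ holds unconditionally regardless of satisfiability; the clause guards you describe cannot prevent this, since range domination only requires covering what $E$ covers plus something more, and adding $\varphi$ never uncovers a guard. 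Your sketch is also internally inconsistent on this point: you stipulate that $\overline{\varphi}$ is the only argument missing from every realization's range, yet in the forward direction you assert $\varphi\notin E^\oplus$. The essential missing idea is that the booster must \emph{conflict with something that every realization of $S$ is forced to contain}, so that gaining range in one place costs range elsewhere unless a genuine model is present.

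The paper achieves this by dropping $\varphi$ entirely and reversing the roles: $\overline{\varphi}$ is \emph{not} self-attacking, it is placed into the claim set ($S=Y\cup\overline{Y}\cup\{\overline{\varphi}\}$) so that it belongs to every realization, it attacks all clause arguments (so the realization covers them for free), and every $Z$-literal attacks $\overline{\varphi}$. The only arguments missing from a realization's range are then the $Z$-literals; covering them forces a choice of one literal per $Z$-variable, which conflicts with $\overline{\varphi}$ and hence forces $\overline{\varphi}$ out of the competing set $D$, whereupon the clause arguments can remain covered only if $D$'s literals attack them all, i.e., only if the chosen assignment is a model. Your high-level instinct (``re-tie the unique range gain to a satisfying $Z$-assignment by a purely range-based device'') is exactly right, but the concrete construction realizing it is the heart of the proof and is absent from your proposal, so as it stands the argument is incomplete.
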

\begin{proof}
	We provide a reduction from $\text{QBF}^2_\forall$ to the complementary problem: 
	For an instance $\Phi = \forall Y \exists Z \varphi(Y,Z)$ of $\text{QBF}^2_\forall$ where $\varphi$ is given by a set of clauses $C$ over atoms in $X = Y \cup Z$, we construct a CAF $F = (\Args,\Att,\cl)$ as well as a set of claims $S$: 
	\begin{itemize}
		\item $\Args = \{\overline{\varphi}\} \cup C \cup X \cup \overline{X} \cup Y^*\cup \overline Y^*\cup \{d_v\mid v\in Y\cup \overline Y\}$, where $\overline{X} = \{\overline x \mid x \in X\}$, $Y^*=\{y^*\mid y\in Y\}$, and $\overline Y^*=\{\overline y^*\mid y\in Y\}$;
		
		\item $\Att = \{(x,\overline x),(\overline x,x)\mid x \in X\}\ \cup$ \\
		\phantom{$\Att =\ $}%
		$\{(x,c) \mid x \in c, c \in C\} \cup \{(\overline{x},c) \mid \neg x \in c, c \in C\}\ \cup$
		\phantom{$\Att =\ $}%
		$\{(c,c)\mid c\in C\}\cup \{(d_v,d_v),(v,d_v)\mid v\in Y\cup \overline Y\}\ \cup $\\
		\phantom{$\Att =\ $}%
		$\{(\overline\varphi, c)\mid c\in C\}\cup  \{(z,\overline{\varphi}),(\overline z, \overline{\varphi}) \mid z \in Z \}$;
		\item $\cl(v^*)=v$ for $v\in Y^*\cup \overline Y^*$ and $\cl(v) = v$ else;
		\item $S = Y\cup \overline Y\cup \{\overline\varphi\}$. 
	\end{itemize}
	
	Figure~\ref{fig:reductionVerficationImage1Stage} illustrates the above construction. The constructed CAF is contained in $\ImageTrans{1}$ since all paths in $\problematicPart(F)=\{(v^*,v)\mid v\in Y\cup \overline Y\}$ are of length $1$ (only arguments in $Y^*\cup \overline Y^*$ have outgoing edges in $\problematicPart(F)$).
	It remains to verify the correctness of the reduction, i.e., we will show that $S\notin \stage\!_c(F)$ iff $\Phi$ is valid. The proof proceeds similar as the proof of Proposition~\ref{prop:verificationSemiFirstImage}.
	
	First assume $\Phi$ is valid. Towards a contradiction, assume $S\in\stage\!_c(F)$ and consider a $\stage$-realization $E$ of $S$. Clearly, $Y^*\cup \overline Y^*\subseteq E$; moreover, $E$ contains either $y$ or $\overline y$ for each $y\in Y$ (otherwise $E\cup \{y\}$ is a proper conflict-free superset of $E$, contradiction to $E$ being stage in $(\Args,\Att)$). Let $Y'=E\cap Y$. By assumption, there is $Z'\subseteq Z$ such that $M=Y'\cup Z'$ is a model of $\varphi$. Now, let $D=Y^*\cup \overline Y^*\cup M\cup \{\overline x\mid x\notin M\}$; clearly, $D$ is conflict-free by construction. 
	We will prove that $D\cup D^+\supset E\cup E^+$: First observe that $v\in D\cap E$ for every $v\in \Args$ with $\cl(v)\in Y\cup \overline Y$ ($Y^*\cup \overline Y^*\cup Y'\subseteq E\cap D$ by definition, and $\{\overline y\mid y\notin Y'\}\subseteq E$ since $E$ contains either $y$ or $\overline y$ for each $y\in Y$ as outlined above; notice that each other argument having claim $cl\in Y\cup \overline Y$ is attacked by $Y'\cup \{\overline y\mid y\notin Y'\}$ and thus cannot be contained in $D$ or in $E$). Moreover, we have that each $c\in C$ is attacked by $D$ since $M$ is a model of $\varphi$: Consider an arbitrary clause $c\in C$, then there is either $x\in M$ with $x\in c$ (in this case, $(x,c)\in \Att$), or there is some $x\in X$ with $x\notin M$ and $\neg x\in c$ (then $(\overline x,c)\in \Att$ and $\overline x\in D$). In both cases, we have that $D$ attacks $c$. Furthermore, $\overline\varphi\in D^+$ since for each $z\in Z$, either $z\in D$ or $\overline z\in D$ by definition of $D$. Thus $D\cup D^+\supset E\cup E^+$, contradiction to the assumption.
	
	Now assume $S\notin \stage\!_c(F)$ and consider an arbitrary subset $Y'\subseteq Y$. We show that there is $Z'\subseteq Z$ such that $Y'\cup Z'$ is a model of $\varphi$.  Let $E=Y'\cup \{\overline y\mid y\notin Y'\}\cup Y^*\cup \overline Y^*$ (and observe that $\cl(E)=S\setminus \{\overline\varphi\}$). By assumption we have that $E\cup \{\overline \varphi\}$ is not stage in $(\Args,\Att)$, that is, there  is some $D\in \cf((\Args,\Att))$ with $D\cup D^+\supset E\cup \{\overline\varphi\}\cup (E\cup \{\overline\varphi\})^+$. 
	We can assume that $D\supset E$ since each argument $d_v\in E^+$, $v\in Y\cup \overline Y$, has precisely one non-self-attacking attacker (namely the argument $v$) thus $Y'\cup \{\overline y\mid y\notin Y'\}\subseteq D$; moreover, $D$ contains each argument $v\in Y^*\cup \overline Y^*$ since each such argument is unattacked. 
	Moreover observe that $\overline\varphi\notin D$ since each set of the form $Y'\cup \{\overline y\mid y\notin Y'\}\cup Y^*\cup \overline Y^*\cup \{\overline\varphi\}$ is naive in $(\Args,\Att)$ (i.e., it is a maximal conflict-free set since $\overline\varphi$ attacks each literal over atoms in $Z$ as well as every clause-argument $c\in C$) and thus cannot be extended any further. It follows that there is some argument in $E\cup \{\overline\varphi\}$ which is contained in the range of $D$ rather than in $D$ - since $D\supset E$ as outlined above we conclude $\overline\varphi\notin D$ (and thus $\overline\varphi\in D^+$).
	Also, we have that $D$ attacks each clause-argument $c\in C$ since $c\in (E\cup \{\overline\varphi\})^+$, and, moreover, since each such argument is self-attacking.
	Now, let $Z'=Z\cap D$. We show that $M=Y'\cup Z'$ is a model of $\varphi$. Consider some arbitrary clause $c\in C$. Then there is some argument $v\in D$ such that $(v,c)\in \Att$. As outlined above, $v\neq \overline\varphi$ since $\overline\varphi$ is not contained in $D$. Consequently, we have $v\in X\cup \overline X$. In case $v\in X$ we have $v\in M\cap c$, in case $v\in \overline X$ we have $\neg v\in c$ by definition of $\Att$ and $v\notin M$ - in every case, the clause $c$ is satisfied by $M$. As $c$ was chosen arbitrary it follows that $\varphi$ is satisfiable and thus $\Phi$ is valid. 
\end{proof}
\begin{figure}[ht]
	\centering
	\tikz{
		\node[arg, label={left}:$a$] (a1) at (1.5,1) {$a$};
		\node[arg, label={right}:$\overline a$] (a2) at (2.5,1) {$\overline{a}$};
		\node[arg, label={left}:$d_{a}$] (d1) at (1.5,0) {$d_a$};
		\node[arg, label={right}:$d_{\overline a}$] (d2) at (2.5,0) {$d_{\overline a}$};
		\node[arg, label={left}:$b$] (b1) at (4,1) {$b$};
		\node[arg, label={right}:$\overline b$] (b2) at (5,1) {$\overline{b}$};
		\node[arg, label={left}:$a$] (a1p) at (0,2) {$a^*$};
		\node[arg, label={left}:$\overline a$] (a2p) at (0,1) {$\overline a^*$};
		\node[arg, label={above}:$c_1$] (c1) at (2,2.5) {$c_1$};
		\node[arg, label={above}:$c_2$] (c2) at (4.5,2.5) {$c_2$};
		\node[arg, label={right}:$\overline{\varphi}$] (negphi) at (6,4) {$\overline{\varphi}$};
		\draw[attack] 
		(a1) edge (d1)
		(d1) edge[loop below] (d1)
		(a2) edge (d2)
		(d2) edge[loop below] (d2)
		(a1) edge (a2)
		(a2) edge (a1)
		(b1) edge (b2)
		(b2) edge (b1)
		(a1) edge (c1)
		(b1) edge (c1)
		(b1) edge (c2)
		(a2) edge (c2)
		(c1) edge[loop left] (c1)
		(c2) edge[loop left] (c2)
		(negphi) edge (c1)
		(negphi) edge (c2)
		(b1) edge [bend angle=15, bend right] (negphi)
		(b2) edge [bend angle=15, bend right] (negphi)
		;
	}
	\caption{Reduction of the $\text{QBF}^2_\forall$ instance $\forall a \exists b ((a \lor b) \allowbreak \land \allowbreak (\neg a \lor b))$ to an instance of $\Ver_{\stage,1}^\PCAF$.}
	\label{fig:reductionVerficationImage1Stage}
\end{figure}

\verificationConflictfreeNaiveImagesTwoThreeFour*
\begin{proof}
	Let $F = (\Args, \Att, \cl, \succ)$ be a PCAF, $C$ a set of claims, and $i \in \{2,3,4\}$. To check whether $C \in \cf_p^i(F)$, by Lemma~\ref{lemma:cfDoesNotChangeForReductions234}, it suffices to check whether $C \in \cf\!_c((\Args, \Att, \cl))$ which can be done in polynomial time on well-formed CAFs (cf.\ Table~\ref{tab:complexity_CAFwfCAF}). Analogously, to check $C \in \naive_p^i(F)$ it suffices to check if $C \in \naive_c((\Args, \Att, \cl))$, which can also be done in polynomial time on well-formed CAFs.
\end{proof}

\conflictFreeLemma*
\begin{proof}
	Let $F = (\Args, \Att, \cl, \succ)$ be a PCAF. From \cite[Lemma~$1$]{DvorakW20} we know that $C \in \cf\!_c((\Args, \Att, \cl))$ iff $\cl(E_1^i(C)) = C$, as well as that, if $C \in \cf\!_c((\Args, \Att, \cl))$ then $E_1^i(C)$ is the unique maximal conflict-free set $S$ in $(\Args, \Att, \cl)$ with $\cl(S) = C$. From this and our Lemma~\ref{lemma:cfDoesNotChangeForReductions234}, our result follows immediately.
\end{proof}

\admissibleLemma*
\begin{proof}
	Let $F = (\Args,\Att,\cl,\succ)$ be a PCAF, $C$ a set of claims, and $i \in \{2,3,4\}$.
	
	Assume $\cl(E_*^i(C)) = C$. By construction, $E_*^i(C) \in \adm(\red{i}(F))$, and thus $C \in \adm_p^i(F)$. 
	
	Now assume $C \in \adm_p^i(F)$. Then there exists $S \subseteq \Args$ such that $\cl(S) = C$ and $S \in \adm(\red{i}(F))$. Furthermore, $C \in \cf_p^i(F)$ and therefore, by Lemma~\ref{lemma:Im234ConflictFree}, $S \subseteq E_1^i(C)$. By construction, $E_*^i(C) \subseteq E_1^i(C)$. Moreover, any $x \in S$ is defended by $S$ in $\red{i}(F)$ and therefore also by $E_1^i(C)$. Thus, by definition, $x \in E_2^i(C)$. By the same argument, if $x \in S$ and $x \in E_k^i(C)$ then $x \in E_{k+1}^i(C)$. We can conclude that $S \subseteq E_*^i(C) \subseteq E_1^i(C)$ and thus $\cl(E_*^i(C)) = C$. 
	By the above we have that $E_*^i(C)$ is admissible and each $S \subseteq \Args$ such that $\cl(S) = C$ is a subset of  $E_*^i(C)$. In other words $E_*^i(C)$ is the unique maximal admissible set $S$ in $\red{i}(F)$ such that $\cl(S) = C$.
\end{proof}

\verificationStableAdmissibleCompleteImagesTwoThreeFour*

We split the proof into the following two Lemmas:

\begin{lemma}
	$\Ver_{\sigma,i \in \{2,3,4\}}^\PCAF$ is in $\P$ for $\sigma \in \{\adm, \stable\}$. 
\end{lemma}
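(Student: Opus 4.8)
The plan is to reduce both verification tasks to the computation of a single canonical argument-set together with a constant number of polynomial-time checks, leaning entirely on Lemma~\ref{lemma:Im234Admissible}. Fix a PCAF $F = (\Args,\Att,\cl,\succ)$, a set of claims $C$, and $i \in \{2,3,4\}$. First I would observe that every object we need is polynomial-time computable: $\red{i}(F)$ by Definition~\ref{def:reductions} (all four reductions are poly-time), $E_1^i(C)$ directly from its definition, and $E_*^i(C)$ as the fixed point of at most $|\Args|$ rounds of removing undefended arguments, each round computable in polynomial time over $\red{i}(F)$.

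For $\sigma = \adm$ the argument is immediate: by Lemma~\ref{lemma:Im234Admissible} we have $C \in \adm_p^i(F)$ iff $\cl(E_*^i(C)) = C$, so the algorithm computes $E_*^i(C)$ and tests this claim-equality, which is clearly polynomial.

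For $\sigma = \stb$ I would proceed in two stages. Since $\stb_p^i(F) \subseteq \adm_p^i(F)$, I first run the admissibility test; if $\cl(E_*^i(C)) \neq C$ then $C \notin \adm_p^i(F)$ and hence $C \notin \stb_p^i(F)$, so we reject. Otherwise $\cl(E_*^i(C)) = C$ and, by Lemma~\ref{lemma:Im234Admissible}, $E_*^i(C)$ is the \emph{unique} maximal admissible set in $\red{i}(F)$ realizing $C$. The key claim is then that $C \in \stb_p^i(F)$ iff $E_*^i(C) \in \stb(\red{i}(F))$, which reduces verification to a single stability test (checking that $E_*^i(C)$ attacks every argument outside it in $\red{i}(F)$) that is again polynomial. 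The \emph{if} direction is trivial since $\cl(E_*^i(C)) = C$. For the \emph{only if} direction, suppose there is $S \in \stb(\red{i}(F))$ with $\cl(S) = C$; as $S$ is in particular admissible and realizes $C$, maximality gives $S \subseteq E_*^i(C)$. If $E_*^i(C)$ failed to be stable there would be some $x \notin E_*^i(C)$ not attacked by $E_*^i(C)$ in $\red{i}(F)$; then $x \notin S$ and, since $S \subseteq E_*^i(C)$, the set $S$ does not attack $x$ either, contradicting $S \in \stb(\red{i}(F))$. Hence $E_*^i(C)$ is itself stable.

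The substance of the argument is carried entirely by Lemma~\ref{lemma:Im234Admissible}, and in particular by its uniqueness/maximality half; the only genuine new step is the short monotonicity observation that any stable realization of $C$ must sit inside the maximal admissible realization $E_*^i(C)$, after which the stability of $E_*^i(C)$ follows by a one-line contradiction. I expect the main (mild) obstacle to be precisely this last point: making sure the existence of \emph{some} stable set with claim $C$ forces the canonical set $E_*^i(C)$ to be stable, so that testing only $E_*^i(C)$ is sound and we never need to search over the exponentially many argument-sets realizing $C$.
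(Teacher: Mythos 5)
Your proposal is correct and follows essentially the same route as the paper's proof: compute $E_*^i(C)$, use Lemma~\ref{lemma:Im234Admissible} for the admissibility test, and for stability check only the canonical set $E_*^i(C)$, justified by the observation that any stable realization $S$ of $C$ satisfies $S \subseteq E_*^i(C)$ and hence cannot attack an argument that $E_*^i(C)$ fails to attack. The paper phrases this last step as ruling out any other stable realization when $E_*^i(C)$ is not stable, which is the same one-line contradiction you give.
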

\begin{proof}
	Let $F = (\Args,\Att,\cl, \succ)$ be a PCAF, let $C$ be a set of claims, and let $i \in \{2,3,4\}$. We can compute $\red{i}(F)$, $E_1^i(C)$, and $E_*^i(C)$ in polynomial time. 
	
	For $\adm$, by Lemma~\ref{lemma:Im234Admissible}, it suffices to test whether $\cl(E_*^i(C)) = C$. 
	
	For $\stb$, we first check whether $C \in \adm_p^i(F)$. If not, ${C \not\in \stb_p^i(F)}$. If yes, then, by Lemma~\ref{lemma:Im234Admissible}, ${\cl(E_*^i(C)) = C}$. We can check in polynomial time if $E_*^i(C) \in \stb(\red{i}(F))$. If yes, we are done. If no, then there is an argument $x$ that is not in $E_*^i(C)$ but is also not attacked by $E_*^i(C)$ in $\red{i}(F)$. Moreover, there can be no other $S \in \stb(\red{i}(F))$ with $\cl(S) = C$ since for any such $S$ we would have $S \subseteq E_*^i(C)$, which would imply that $S$ does not attack $x$ and that $x \not \in S$. 
\end{proof}

\begin{lemma}
	$\Ver_{\sigma,i \in \{2,3,4\}}^\PCAF$ is $\coNP$-complete for $\sigma \in \{\pref, \semi, \stage\}$, even for transitive preferences.
\end{lemma}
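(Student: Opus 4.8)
The plan is to handle $\coNP$-membership and $\coNP$-hardness separately, reducing the membership side to a single maximality check after polynomial preprocessing. The guiding observation is that for each of the three semantics the \emph{canonical} set associated with $C$ --- namely $E_*^i(C)$ for $\pref$ and $\semi$, and $E_1^i(C)$ for $\stage$ --- is the only candidate that needs to be inspected, thanks to the uniqueness statements in Lemmas~\ref{lemma:Im234Admissible} and~\ref{lemma:Im234ConflictFree} and the monotonicity of range under $\subseteq$.

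For $\sigma = \pref$ I would first compute $\red{i}(F)$ and $E_*^i(C)$ in polynomial time and test whether $\cl(E_*^i(C)) = C$; if this fails then $C \notin \adm_p^i(F)$ and hence $C \notin \pref_p^i(F)$. Otherwise, by Lemma~\ref{lemma:Im234Admissible}, $E_*^i(C)$ is the unique $\subseteq$-maximal admissible set of $\red{i}(F)$ realizing $C$, so every admissible $S$ with $\cl(S) = C$ satisfies $S \subseteq E_*^i(C)$. Consequently $C \in \pref_p^i(F)$ iff $E_*^i(C)$ is itself preferred: a preferred $S$ realizing $C$ would be admissible, hence $S \subseteq E_*^i(C)$, and $\subseteq$-maximality of $S$ forces $S = E_*^i(C)$. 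It therefore suffices to verify that no admissible set strictly contains $E_*^i(C)$, a predicate in $\coNP$ (its negation is an $\NP$ guess of the larger admissible set).

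For $\sigma = \semi$ I would argue analogously but via ranges: since $S \subseteq E_*^i(C)$ for every admissible $S$ realizing $C$, we also have $S^\oplus_{\red{i}(F)} \subseteq (E_*^i(C))^\oplus_{\red{i}(F)}$, so $E_*^i(C)$ carries the maximal range among admissible sets realizing $C$. Hence any semi-stable $S$ realizing $C$ must share its range with $E_*^i(C)$, which makes $E_*^i(C)$ semi-stable as well; thus $C \in \semi_p^i(F)$ iff $\cl(E_*^i(C)) = C$ and no admissible set has strictly larger range than $E_*^i(C)$, again a $\coNP$ test. The case $\sigma = \stage$ is identical after replacing admissibility by conflict-freeness and $E_*^i(C)$ by $E_1^i(C)$, using Lemma~\ref{lemma:Im234ConflictFree} to ensure that $E_1^i(C)$ is the unique maximal conflict-free set realizing $C$ and therefore carries the maximal range among conflict-free realizers. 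For $\coNP$-hardness I would restrict to PCAFs with the empty preference relation: for $F = (\Args,\Att,\cl,\emptyset)$ every reduction leaves the attacks untouched, i.e.\ $\red{i}(F) = (\Args,\Att,\cl)$ is exactly the underlying well-formed CAF, so $\sigma_p^i(F) = \sigma_c((\Args,\Att,\cl))$. Thus $\Ver_{\sigma,i}^\PCAF$ inherits the $\coNP$-hardness of $\Ver_{\sigma}^\wfCAF$ for $\sigma \in \{\pref,\semi,\stage\}$ (Table~\ref{tab:complexity_CAFwfCAF}), and since $\emptyset$ is transitive the hardness holds even for transitive preferences.

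The main obstacle is the membership direction, specifically justifying that the single canonical set $E_*^i(C)$ (resp.\ $E_1^i(C)$) is the \emph{only} candidate worth checking for these maximization-based semantics. This rests on combining the uniqueness of the maximal admissible (resp.\ conflict-free) realizer with the fact that a semi-stable or stage extension realizing $C$ is forced, by range-monotonicity, to coincide in range with that canonical set; once these facts are in place, collapsing the whole problem to one $\coNP$ maximality test is routine.
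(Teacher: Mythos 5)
Your proposal is correct and follows essentially the same route as the paper: reduce membership to a single $\coNP$ maximality check on the canonical realizer $E_*^i(C)$ (resp.\ $E_1^i(C)$), justified by the uniqueness statements of Lemmas~\ref{lemma:Im234Admissible} and~\ref{lemma:Im234ConflictFree}, and obtain hardness from $\Ver_{\sigma}^\wfCAF$ via the empty (hence transitive) preference relation. The only cosmetic difference is that for $\semi$ you argue directly via range monotonicity while the paper routes through $\semi(\red{i}(F)) \subseteq \pref(\red{i}(F))$; both are sound and rest on the same key facts.
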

\begin{proof}
	Let $F = (\Args,\Att,\cl, \succ)$ be a PCAF, let $C$ be a set of claims, and let $i \in \{2,3,4\}$.
	
	$\coNP$-hardness for $\sigma \in \{\pref, \semi, \stage\}$ follows from known properties of wfCAFs (see Table~\ref{tab:complexity_CAFwfCAF}). For membership regarding $\pref$, we first check if $C \in \adm_p^i(F)$. If not, $C \not\in \pref_p^i(F)$. If yes, then $\cl(E_*^i(C)) = C$. We can check in $\coNP$ if $E_*^i(C) \in \pref(\red{i}(F))$. If yes, we are done. If no, then, by Lemma~\ref{lemma:Im234Admissible}, there can be no other $S \in \pref(\red{i}(F))$ with $\cl(S) = C$, since $S$ would have to be admissible in $\red{i}(F)$ and therefore $S \subseteq E_*^i(C)$.
	
	For $\semi$, we also first check if $C \in \adm_p^i(F)$. If not, $C \not\in \semi_p^i(F)$. If yes, then $\cl(E_*^i(C)) = C$. We can check in $\coNP$ if $E_*^i(C) \in \semi(\red{i}(F))$. If yes, we are done. If no, then, by Lemma~\ref{lemma:Im234Admissible}, there can be no other $S \in \pref(\red{i}(F))$ with $\cl(S) = C$, since $S$ would have to be admissible in $\red{i}(F)$
	and therefore $S \subseteq E_*^i(C)$.
	As $\semi(\red{i}(F)) \subseteq \pref(\red{i}(F))$ we obtain that there is no other $S \in \semi(\red{i}(F))$ with $\cl(S) = C$.
	
	For $\stage$, we first check if $C \in \naive_p^i(F)$. If not, $C \not\in \stage_p^i(F)$. If yes, then $\cl(E_1^i(C)) = C$. We can check in $\coNP$ if $E_1^i(C) \in \stage(\red{i}(F))$. If yes, we are done. If no, then, by Lemma~\ref{lemma:Im234ConflictFree}, there can be no other $S \in \naive(\red{i}(F))$ with $\cl(S) = C$, since $S$ would have to be conflict-free in $\red{i}(F)$and therefore $S \subseteq E_1^i(C)$.
	As $\stage(\red{i}(F)) \subseteq \naive(\red{i}(F))$ we obtain that there is no other $S \in \stage(\red{i}(F))$ with $\cl(S) = C$.
\end{proof}

\verificationCompleteImagesTwoThreeFour*
\begin{proof}
	$\P$-membership of $\Ver_{\comp,3}^\PCAF$ is shown below (Lemma~\ref{prop:verificationCompleteThirdImage}). 
	$\NP$-membership follows from results for general CAFs (cf.\ Table~\ref{tab:complexity_CAFwfCAF}). We show $\NP$-hardness of $\Ver_{\comp,2}^\PCAF$ and $\Ver_{\comp,4}^\PCAF$ separately below (Lemmas~\ref{prop:verificationCompleteSecondImage} and~\ref{prop:verificationCompleteFourthImage}).
\end{proof}

\begin{lemma} \label{prop:verificationCompleteThirdImage}
	$\Ver_{\comp,3}^\PCAF \in \P$.
\end{lemma}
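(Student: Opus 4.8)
The plan is to reduce complete-verification to a single polynomial-time check on the canonical set $E_*^3(C)$. First I would compute $\red{3}(F)$, $E_1^3(C)$, and $E_*^3(C)$ in polynomial time, exactly as in the proof of Proposition~\ref{prop:verificationAdm2to4}. Since $\comp_p^3(F) \subseteq \adm_p^3(F)$, a necessary condition for $C \in \comp_p^3(F)$ is $C \in \adm_p^3(F)$, which by Lemma~\ref{lemma:Im234Admissible} is equivalent to $\cl(E_*^3(C)) = C$; if this fails, reject. Otherwise I would prove the characterization that $C \in \comp_p^3(F)$ holds if and only if $E_*^3(C)$ itself is a complete extension of $\red{3}(F)$. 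The backward direction is immediate, since then $C = \cl(E_*^3(C)) \in \comp_p^3(F)$; the content lies in the forward direction. Testing whether $E_*^3(C)$ is complete is polynomial: admissibility is already known, and one only has to verify that every argument defended by $E_*^3(C)$ in $\red{3}(F)$ already lies in $E_*^3(C)$.

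The crux is the forward direction, where I would show that any complete witness coincides with the maximal admissible realization. Suppose $S \in \comp(\red{3}(F))$ with $\cl(S) = C$. By Lemma~\ref{lemma:Im234Admissible} we have $\cl(E_*^3(C)) = C$ and $S \subseteq E_*^3(C)$, so it suffices to force equality. Take $x \in E_*^3(C)$ and let $y$ be any attacker of $x$ in $\red{3}(F)$. Since $E_*^3(C)$ is admissible it defends $x$, so some $z \in E_*^3(C)$ attacks $y$; because Reduction~3 only deletes attacks, $(z,y) \in \Att$. As $\cl(z) \in C = \cl(S)$, there is $z' \in S$ with $\cl(z') = \cl(z)$, and well-formedness of the underlying CAF gives $(z',y) \in \Att$. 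Now either this attack survives, $(z',y) \in \red{3}(F)$, so $S$ attacks $y$; or Reduction~3 deletes it, in which case $(y,z') \in \Att$ and $y \succ z'$, hence $(y,z') \in \red{3}(F)$, and since $S$ is admissible it defends $z'$ and therefore again attacks $y$. In both cases $S$ attacks $y$, so $S$ defends $x$; completeness of $S$ then yields $x \in S$, whence $S = E_*^3(C)$. Consequently $E_*^3(C)$ is itself complete, which establishes the characterization.

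The hard part will be precisely this defense-propagation step: I must rule out the possibility that the counter-attack $(z',y)$ needed to defend $x$ from $S$ is destroyed by the reduction. The argument exploits the specific shape of Reduction~3 (mirroring the reasoning in the proof of Proposition~\ref{prop:prop:i-max-image3-pref}), namely that Reduction~3 removes an attack $(z',y)$ only when the reverse attack $(y,z')$ is present and kept, so a ``lost'' attack is always compensated by an attack on $z'$ that the admissible set $S$ is forced to counter. Once the characterization is in place, $\P$-membership is immediate: $\red{3}(F)$, $E_1^3(C)$, and $E_*^3(C)$ are polynomial-time computable, the conditions $\cl(E_*^3(C)) = C$ and completeness of $E_*^3(C)$ are polynomial-time checkable, and together they decide $C \in \comp_p^3(F)$.
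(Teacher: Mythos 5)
Your proposal is correct and takes essentially the same approach as the paper: the same algorithm (test $\cl(E_*^3(C))=C$, then test completeness of $E_*^3(C)$), the same appeal to Lemma~\ref{lemma:Im234Admissible} for $S\subseteq E_*^3(C)$, and the same defense-propagation step exploiting well-formedness together with the fact that Reduction~3 deletes an attack $(z',y)$ only when the reverse attack $(y,z')$ is kept, forcing the admissible witness to counterattack $y$. The only cosmetic difference is that you show directly that any complete witness equals $E_*^3(C)$, whereas the paper argues by contradiction from the assumption that $E_*^3(C)$ is not complete.
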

\begin{proof}	
	Let $F = (\Args,\Att,\cl, \succ)$ be a PCAF and let $C$ be a set of claims.
	For $\comp$ and $i = 3$, we first check if $C \in \adm_p^i(F)$. If not, ${C \not\in \comp_p^i(F)}$. If yes, then $\cl(E_*^i(C)) = C$. We can check in polynomial time if $E_*^i(C) \in \comp(\red{i}(F))$. If no, then $E_*^i(C)$ defends some $x \not\in E_*^i(C)$ in $\red{i}(F)$. Towards a contradiction, assume there is some $S \subseteq \Args$ such that $S \in \comp(\red{i}(F))$ and $\cl(S) = C$. By Lemma~\ref{lemma:Im234Admissible}, $S \subseteq E_*^i(C)$, which implies $x \not\in S$. Then $S$ can not defend $x$ in $\red{i}(F)$, i.e., there must be $y$ and $z$ such that $y \in E_*^i(C)$, $y \not\in S$, $(z,x) \in \red{i}(F)$, and $(y,z) \in \red{i}(F)$. Then also $(y,z) \in F$ by the definition of Reduction~$3$. But there must also be some $y' \in S$ with $\cl(y') = \cl(y)$, and since the underlying CAF of $F$ is well-formed there must be $(y',z) \in F$. Since there cannot be $(y',z) \in \red{i}(F)$, otherwise $S$ would defend $x$, it has to be that $z \succ y'$. For Reduction~$3$ this further requires $(z,y') \in F$. Crucially, $(z,y') \in \red{i}(F)$. But then $S$ must be defended from $z$, i.e., there must be some $w \in S$ such that $(w,z) \in \red{i}(F)$. But this means that $S$ defends $x$, i.e., $S$ is not complete. Contradiction.
\end{proof}

\begin{restatable}{lemma}{verificationCompleteSecondImage} \label{prop:verificationCompleteSecondImage}
	$\Ver_{\comp,2}^\PCAF$ is $\NP$-hard, even for transitive preferences.
\end{restatable}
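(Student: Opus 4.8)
The plan is to prove $\NP$-hardness by a polynomial reduction from $3$-\SAT, directly mirroring the construction used for Reduction~$4$ in Proposition~\ref{prop:verificationComp2to4} but adapted to the behaviour of Reduction~$2$. Given a formula $\varphi$ with clause set $C$ over variables $X$, I would build a PCAF $F=(\Args,\Att,\cl,\succ)$ together with the query claim-set $S = X \cup \{\varphi\}$, where each clause $c$ is an argument carrying a self-attack $(c,c)$ and an attack $(c,\varphi)$ on a dedicated formula-argument $\varphi$; for each variable $x$ there are two literal-arguments $x,\overline{x}$ sharing the claim $\cl(x)=\cl(\overline x)=x\in S$; and a consistency gadget with fresh arguments $d_x,d'_x,d'_{\overline x}$ whose claims lie outside $S$. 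As in the Reduction~$4$ case, the intended effect in $\red{2}(F)$ is that each literal attacks exactly the clauses it satisfies and attacks its private gadget-argument $d'_x$, while $d'_x,d'_{\overline x}$ both attack $d_x$; the clause self-loops play the same role as before, preventing a satisfied clause from being \emph{defended} by the selected literal. The target equivalence to establish is $S\in\comp_c(\red{2}(F))$ iff $\varphi$ is satisfiable, and $\NP$-membership then follows from the general-CAF bound in Table~\ref{tab:complexity_CAFwfCAF}.

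The point where Reduction~$2$ forces a departure from the Reduction~$4$ construction is the treatment of an against-preference attack with no counter-attack: Reduction~$4$ \emph{keeps} such an attack (producing a symmetric attack), whereas Reduction~$2$ only reverts it, producing a single directed attack. Concretely, taking $(c,x),(d'_x,x)\in\Att$ with $x\succ c$ and $x\succ d'_x$ yields the symmetric pairs $x\leftrightarrow c$ and $x\leftrightarrow d'_x$ under Reduction~$4$, but only the reverted attacks $x\to c$ and $x\to d'_x$ under Reduction~$2$. First I would recompute $\red{2}(F)$ for the naive port of the Reduction~$4$ construction and record the resulting defect: the literal-arguments lose \emph{all} of their incoming attacks, so they become unattacked, hence belong to the grounded extension; the grounded extension then attacks every $d'_x$, defends every $d_x$, and drags all gadget-claims into every complete extension, so that $S$ could never be complete. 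Thus the redesign must ensure that each literal-argument remains attacked in $\red{2}(F)$ by some argument that the literal can only neutralise \emph{when it is selected}, so that a complete extension realising exactly $S$ corresponds to choosing one literal per variable; achieving this under a still-transitive $\succ$ (e.g.\ a two-layer ordering) while keeping the underlying CAF well-formed is the substance of the construction.

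The hard part will be exactly this gadget redesign and the accompanying extension analysis, because under Reduction~$2$ reverting an incoming attack necessarily deletes it, so the Reduction~$4$ device of letting a literal defend itself against its own attacker is not directly available and a more careful auxiliary gadget is required. Once a construction keeping the literals attacked is fixed, I would verify the two directions as follows. For soundness, from a satisfying assignment I would select one literal per variable and argue that the resulting $E$ is complete with $\cl(E)=S$: $E$ defends $\varphi$ since every clause is attacked, $E$ defends no clause since each satisfied clause retains an attacker (the selected, still-unattacked literal) that is not in $E^+$, and $E$ defends no $d_x$ since only one of $d'_x,d'_{\overline x}$ is attacked. For the converse, from a complete extension $E$ with $\cl(E)=S$ I would read off a consistent assignment, invoking the completeness closure condition to exclude selecting both $x$ and $\overline{x}$ (which would defend $d_x$ and force the out-of-$S$ claim $d_x$ into $\cl(E)$) and the defence of $\varphi$ to guarantee every clause is satisfied. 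Finally I would check that $F$ is a legitimate PCAF (underlying CAF well-formed, $\succ$ asymmetric and transitive) and that the reduction is polynomial, yielding $\NP$-completeness, even for transitive preferences.
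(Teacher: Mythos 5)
There is a genuine gap: you correctly diagnose why a naive port of the Reduction~4 construction fails under Reduction~2 (the literal arguments become unattacked, enter the grounded extension, and force the out-of-$S$ claims $d_x$ into every complete extension), and you correctly state the requirement a repaired gadget must satisfy --- but you never actually construct that gadget, and you yourself flag it as ``the substance of the construction''. Since the entire hardness proof is the construction, the proposal as written does not constitute a proof. The paper's solution is a specific four-cycle device: for each literal $x$ it adds arguments $d_x^1,d_x^2,d_x^3$ with attacks $(d_x^1,x),(d_x^1,d_x^2),(d_x^3,d_x^2),(d_x^3,x)$ and preferences $x \succ d_x^1$ and $d_x^2 \succ d_x^3$, so that in $\red{2}(F)$ one obtains the cycle $x \to d_x^1 \to d_x^2 \to d_x^3 \to x$. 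The literal thus remains attacked (by $d_x^3$), can only be defended by including $d_x^2$, and $d_x^2$ in turn is only defended by $x$; the decisive trick is that $d_x^2$ is given the \emph{same claim} $x$ as the literal, so selecting the pair $\{x,d_x^2\}$ adds nothing outside $S = X \cup \{\varphi\}$. This claim-sharing is exactly what makes the gadget compatible with the target claim-set, and it is absent from your desiderata.

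Beyond the missing gadget, a couple of points in your verification sketch would need care once a construction is fixed: your soundness step speaks of ``the selected, still-unattacked literal'', which contradicts the very requirement you derived (the literal must stay attacked); and you should check that the gadget preserves well-formedness of the underlying CAF given that $x$, $\overline{x}$, $d_x^2$, $d_{\overline{x}}^2$ all share claim $x$ (in the paper's construction this is immediate because none of these arguments has any outgoing attack before the reduction), and that the two-layer preference relation is asymmetric and transitive. Your converse direction (using the completeness closure condition on $d_x$ to forbid selecting both $x$ and $\overline{x}$, and the defence of $\varphi$ to extract a satisfying assignment) does match the paper's argument, so with the gadget supplied the rest of your plan would go through.
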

\begin{proof}
	Let $\varphi$ be an arbitrary instance of 3-\SAT\ given as a set $C$ of clauses over variables $X$ 
	and let $\overline{X} = \{\overline x \mid x \in X\}$.
	We construct a PCAF $F = (\Args,\Att,\cl, \succ)$ as well as a set of claims $S$: 
	\begin{itemize}
		\item $\Args = \{\varphi\} \cup C \cup X \cup \overline{X} \cup \{d_x \mid x\in X\}\ \cup $\\ 
		\phantom{$\Args =\ $}%
		$\{d_x^j \mid x\in X \cup \overline X, 1 \leq j \leq 4\}$;
		\item $\Att = \{(c,\varphi)\mid c\in C\} \cup \{(c,c)\mid c\in C\}\ \cup$ \\
		\phantom{$\Att =\ $}%
		$\{(c,x) \mid x \in c, c \in C\}\ \cup$\\ 
		\phantom{$\Att =\ $}%
		$\{(c,\overline{x}) \mid \neg x \in c, c \in C\}\ \cup $\\
		\phantom{$\Att =\ $}%
		$\{(d_x^1,x),(d_x^1,d_x^2),(d_x^3,d_x^2),(d_x^3,x),(d_x^4,x)  \mid$\\ 
		\phantom{$\Att =\ $}%
		$x \in X \cup \overline{X}\}\ \cup$ \\
		\phantom{$\Att =\ $}%
		$\{(d_x^4,d_x),(d_{\overline{x}}^4,d_x)\mid x \in X \}$;
		\item $\cl(x)=\cl(\overline{x})=\cl(d_x^2)=\cl(d_{\overline{x}}^2)= x$ for $x \in X$, $\cl(v) = v$ else;
		\item $x \succ c$, $x \succ d_x^1$, $x \succ d_x^4$, $d_x^2 \succ d_x^3$ for all $x \in X \cup \overline{X}$ and all $c \in C$; 
		\item $S = X \cup \{\varphi\}$. 
	\end{itemize}
	Figure~\ref{fig:reductionVerficationImage2CompleteAppendix} illustrates the above construction. It remains to show that $\varphi$ is satisfiable if and only if $S \in \comp_c(\red{2}(F))$.
	
	\begin{figure}[ht]
		\centering
		\tikz{
			\node[arg, label={left}:$a$] (a1) at (1.8,0) {$a$};
			\node[arg, label={right}:$a$] (a2) at (2.7,0) {$\overline{a}$};
			\node[arg, label={above}:$d_{a}$] (da1) at (0,0.9) {$d_a$};
			\node[arg, label={below}:$d_{a}^4$] (da14) at (0.9,0.5) {$d_a^4$};
			\node[arg, label={above}:$d_{\overline a}^4$] (da24) at (0.9,1.15) {$d_{\overline a}^4$};
			\node[arg, label={left}:$d_{a}^1$] (da11) at (1.8,-1) {$d_a^1$};
			\node[arg, label={left}:$a$] (da12) at (1.8,-2) {$d_a^2$};
			\node[arg, label={left}:$d_{a}^3$] (da13) at (1.8,-3) {$d_a^3$};
			\node[arg, label={right}:$d_{\overline a}^1$] (da21) at (2.7,-1) {$d_{\overline a}^1$};
			\node[arg, label={right}:$a$] (da22) at (2.7,-2) {$d_{\overline a}^2$};
			\node[arg, label={right}:$d_{\overline a}^3$] (da23) at (2.7,-3) {$d_{\overline a}^3$};
			
			\node[arg, label={left}:$b$] (b1) at (4.5,0) {$b$};
			\node[arg, label={right}:$b$] (b2) at (5.4,0) {$\overline{b}$};
			\node[arg, label={above}:$d_{b}$] (db1) at (7.2,1) {$d_b$};
			\node[arg, label={above}:$d_{b}^4$] (db14) at (6.3,1.15) {$d_b^4$};
			\node[arg, label={below}:$d_{\overline b}^4$] (db24) at (6.3,0.5) {$d_{\overline b}^4$};
			\node[arg, label={left}:$d_{b}^1$] (db11) at (4.5,-1) {$d_b^1$};
			\node[arg, label={left}:$b$] (db12) at (4.5,-2) {$d_b^2$};
			\node[arg, label={left}:$d_{b}^3$] (db13) at (4.5,-3) {$d_b^3$};
			\node[arg, label={right}:$d_{\overline b}^1$] (db21) at (5.4,-1) {$d_{\overline b}^1$};
			\node[arg, label={right}:$b$] (db22) at (5.4,-2) {$d_{\overline b}^2$};
			\node[arg, label={right}:$d_{\overline b}^3$] (db23) at (5.4,-3) {$d_{\overline b}^3$};
			
			\node[arg, label={above}:$c_1$] (c1) at (2.25,1.5) {$c_1$};
			\node[arg, label={above}:$c_2$] (c2) at (4.95,1.5) {$c_2$};
			\node[arg, label={left}:$\varphi$] (phi) at (3.6,3) {$\varphi$};
			\draw[attack] 
			
			(a1) edge[color=gray, ultra thick] (c1)
			(b1) edge[color=gray, ultra thick]  (c1)
			(b2) edge[color=gray, ultra thick] (c2)
			(a2) edge[color=gray, ultra thick]  (c2)
			(c1) edge[loop left] (c1)
			(c2) edge[loop left] (c2)
			(c1) edge (phi)
			(c2) edge (phi)
			
			(a1) edge[color=gray, ultra thick] (da14)
			(a2) edge[color=gray, ultra thick] (da24)
			(da14) edge (da1)
			(da24) edge (da1)
			(a1) edge [color=gray, ultra thick] (da11) 
			(da11) edge (da12) 
			(da12) edge [color=gray, ultra thick] (da13) 
			(da13) edge [bend angle=55,bend left] (a1) 
			(a2) edge[color=gray, ultra thick] (da21) 
			(da21) edge (da22) 
			(da22) edge [color=gray, ultra thick] (da23) 
			(da23) edge [bend angle=55,bend right] (a2) 
			
			(b1) edge[color=gray, ultra thick]  (db14)
			(b2) edge[color=gray, ultra thick] (db24)
			(db14) edge (db1)
			(db24) edge (db1)
			(b1) edge[color=gray, ultra thick] (db11) 
			(db11) edge (db12) 
			(db12) edge [color=gray, ultra thick]  (db13) 
			(db13) edge [bend angle=55,bend left] (b1) 
			(b2) edge [color=gray, ultra thick] (db21) 
			(db21) edge (db22) 
			(db22)edge [color=gray, ultra thick]  (db23) 
			(db23) edge [bend angle=55,bend right] (b2) 
			;
		}
		\caption{$\red{2}(F)$ from the proof of Lemma~\ref{prop:verificationCompleteSecondImage}, with $\varphi =  ((a \lor b) \allowbreak \land \allowbreak (\neg a \lor \neg b))$. Attacks drawn in gray/thick have been reversed by Reduction~$2$.}
		\label{fig:reductionVerficationImage2CompleteAppendix}
	\end{figure}
	
	Assume $\varphi$ is satisfiable. Then there is an interpretation $I$ such that $I \models \varphi$. Let $E = \{x,d_x^2 \mid x \in X, x \in I\} \cup \{\overline{x},d_{\overline{x}}^2 \mid x \in X, x \not\in I\} \cup \{\varphi\}$. Clearly, $\cl(E) = S$. Furthermore, $E$ defends $\varphi$ in $\red{2}(F)$ since each clause is satisfied by $I$, and thus each clause argument $c_j$ is attacked by some $x$ (or $\overline{x}$) in $E$. For each variable $x$, if $x \in E$, then $x$ defends $d_x^2$ and $d_x^2$ defends $x$. Moreover, if $x \in E$, then $\overline{x} \not\in E$ and none of $d_{x}$, $\overline{x}$, or $d_{\overline{x}}^j$ with $1 \leq j \leq 4$ is defended by $E$. Analogously for the case that $\overline{x} \in E$. Thus, $E$ is admissible, and contains all arguments it defends, i.e., $E \in \comp(\red{2}(F))$.
	
	Assume $S \in \comp_c(\red{2}(F))$. Then there is $E \subseteq A$ such that $\cl(E) = S$ and $E \in \comp(\red{2}(F))$. For each $x \in X$, at least one of $x, \overline{x}, d_x^2, d_{\overline{x}}^2$ must be contained in $E$. In fact, if $x \in E$, then also $d_x^2 \in E$ and vice versa. Analogous for $\overline{x}$ and $d_{\overline{x}}^2$. However, it can not be that $x \in E$ and $\overline{x} \in E$, otherwise $d_x$ would be defended by $E$ and we would have $\cl(E) \neq S$. Thus, for each $x \in X$, there is either $x \in E$ or $\overline{x} \in E$, but not both. Furthermore, $E$ defends $\varphi$, i.e., $E$ attacks all clause arguments $c_j$. Therefore, $I \models \varphi$ for $I = X \cap E$. %
\end{proof}

\begin{restatable}{lemma}{verificationCompleteFourthImage} \label{prop:verificationCompleteFourthImage}
	$\Ver_{\comp,4}^\PCAF$ is $\NP$-hard, even for transitive preferences.
\end{restatable}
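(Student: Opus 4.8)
The plan is to reduce 3-\SAT\ to $\Ver_{\comp,4}^\PCAF$ using exactly the PCAF $F = (\Args,\Att,\cl,\succ)$ and claim set $S = X \cup \{\varphi\}$ already exhibited in the proof of Proposition~\ref{prop:verificationComp2to4} and depicted in Figure~\ref{fig:reductionVerficationImage4Complete}. Since the reduction is polynomial and $\NP$-membership is inherited from general CAFs, the whole work lies in establishing that $\varphi$ is satisfiable if and only if $S \in \comp_c(\red{4}(F))$, together with checking that $\succ$ can be taken transitive. The first concrete step is therefore to compute $\red{4}(F)$ explicitly, i.e.\ to determine for every original attack whether Reduction~4 keeps it, deletes it, or symmetrizes it.

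The crucial observation is how the preferences $x \succ c$ and $x \succ d'_x$ interact with Reduction~4: whenever a literal argument $x$ is attacked by a clause $c \ni x$ (resp.\ by $d'_x$) and there is no counter-attack in $\Att$, the original attack survives through the third disjunct of Reduction~4 (no counter-attack), while the second disjunct adds the reversed attack, so the conflict becomes \emph{symmetric} in $\red{4}(F)$. In contrast, the attacks $(c,\varphi)$ and $(d'_x,d_x)$ carry no applicable preference and no counter-attack, so they are simply retained without a reverse edge. I would record the resulting structure: each literal $x$ is in symmetric conflict with $d'_x$ and with every clause containing it (hence $\{x\}$ is self-defending), $\varphi$ is a pure sink attacked by all clauses, each clause is self-attacking, and $d_x$ has exactly $d'_x$ and $d'_{\overline x}$ as attackers and no outgoing attack.

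With this picture the correctness argument mirrors the Reduction~2 case (Lemma~\ref{prop:verificationCompleteSecondImage}). For the forward direction I would take a model $I$ of $\varphi$, set $E = \{\text{literal arguments chosen by } I\} \cup \{\varphi\}$, and verify that $E$ is conflict-free (using Lemma~\ref{lemma:cfDoesNotChangeForReductions234}), that it is admissible (each chosen literal counter-attacks all its attackers, and $\varphi$ is defended because $I$ satisfies every clause so $E$ attacks every clause argument), and that it is \emph{complete}: no $d_x$ is defended since that requires both $x$ and $\overline x$ in $E$, no $d'_x$ is defended since its only attacker is the member $x \in E$, and no self-attacking clause is ever defended. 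Hence $\cl(E) = S$ and $S \in \comp_c(\red{4}(F))$. For the converse, from a complete $E$ with $\cl(E) = S$ I would argue that $\varphi \in E$ (the only argument with that claim) and that for each variable exactly one of $x,\overline x$ lies in $E$: at least one, to realize the claim $x$; and not both, because both would make $E$ defend $d_x$, forcing $d_x \in E$ by completeness and violating $\cl(E) = S$ since $\cl(d_x) \notin S$. Defending $\varphi$ then forces $E$ to attack every clause, so the induced assignment satisfies $\varphi$.

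Finally I would check transitivity of $\succ$: since clause arguments and the $d'_x$ never occur on the left of $\succ$ while literal arguments never occur on the right, there are no two-step chains $a \succ b \succ c$, so $\succ$ is vacuously transitive (and clearly asymmetric), giving the result even for transitive preferences. The main obstacle is the completeness bookkeeping: the argument hinges delicately on the $d_x$-gadget enforcing \emph{exactly one} literal per variable and on verifying that $E$ defends nothing outside itself in both directions, which is where the symmetrization behavior of Reduction~4 (as opposed to the pure reversal of Reduction~2) must be tracked with care.
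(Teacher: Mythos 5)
Your proposal is correct and follows essentially the same route as the paper: it reuses the exact construction from Proposition~\ref{prop:verificationComp2to4} (with $S = X \cup \{\varphi\}$), establishes the same forward direction via the model-induced extension $E$ and the same backward direction via the $d_x$-gadget forcing exactly one of $x,\overline{x}$ into $E$, and correctly tracks the symmetrization behavior of Reduction~4. The explicit verification that $\succ$ is vacuously transitive (literals only on the left, clause and $d'_x$ arguments only on the right) is a welcome detail the paper leaves implicit.
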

\begin{proof}
	This proof is similar to that of Lemma~\ref{prop:verificationCompleteSecondImage}. Let $\varphi$ be an arbitrary instance of 3-\SAT\ given as a set $C$ of clauses over variables $X$
	and let $\overline{X} = \{\overline x \mid x \in X\}$. 
	We construct a PCAF $F = (\Args,\Att,\cl, \succ)$ as well as a set of claims $S$: 
	\begin{itemize}
		\item $\Args = \{\varphi\} \cup C \cup X \cup \overline{X} \cup \{d_x \mid x\in X\} \cup \{d_x^1 \mid x\in X \cup \overline X\}$;
		\item $\Att =  \{(c,\varphi)\mid c\in C\} \cup \{(c,c)\mid c\in C\}\ \cup$\\ 
		\phantom{$\Att =\ $}%
		$\{(c,x) \mid x \in c, c \in C\}\ \cup$\\
		\phantom{$\Att =\ $}%
		$\{(c,\overline{x}) \mid \neg x \in c, c \in C\}\ \cup$\\
		\phantom{$\Att =\ $}%
		$\{(d_x^1,x) \mid x \in X \cup \overline{X}\}\ \cup$\\
		\phantom{$\Att =\ $}%
		$\{(d_x^1,d_x),(d_{\overline{x}}^1,d_x)\mid x \in X \}$;
		\item $\cl(x)=\cl(\overline{x})= x$ for $x \in X$,\\
		$\cl(v) = v$ otherwise;
		\item $x \succ c$, $x \succ d_x^1$ for all $x \in X \cup \overline{X}$ and all $c \in C$; 
		\item $S = X \cup \{\varphi\}$. 
	\end{itemize}
	Figure~\ref{fig:reductionVerficationImage4CompleteAppendix} illustrates the above construction. 
	It remains to show that $\varphi$ is satisfiable if and only if $S \in \comp_c(\red{4}(F))$.
	
	\begin{figure}[ht]
		\centering
		\tikz{
			\node[arg, label={below}:$a$] (a1) at (1.8,0) {$a$};
			\node[arg, label={below}:$a$] (a2) at (2.7,0) {$\overline{a}$};
			\node[arg, label={above}:$d_{a}$] (da1) at (0,1) {$d_a$};
			\node[arg, label={below}:$d_{a}^1$] (da12) at (0.9,0.5) {$d_a^1$};
			\node[arg, label={above}:$d_{\overline a}^1$] (da22) at (0.9,1.15) {$d_{\overline a}^1$};
			
			\node[arg, label={below}:$b$] (b1) at (4.5,0) {$b$};
			\node[arg, label={below}:$b$] (b2) at (5.4,0) {$\overline{b}$};
			\node[arg, label={above}:$d_{b}$] (db1) at (7.2,1) {$d_b$};
			\node[arg, label={above}:$d_{b}^1$] (db12) at (6.3,1.15) {$d_b^1$};
			\node[arg, label={below}:$d_{\overline b}^1$] (db22) at (6.3,0.5) {$d_{\overline b}^1$};

			\node[arg, label={above}:$c_1$] (c1) at (2.25,1.5) {$c_1$};
			\node[arg, label={above}:$c_2$] (c2) at (4.95,1.5) {$c_2$};
			\node[arg, label={left}:$\varphi$] (phi) at (3.6,3) {$\varphi$};
			\draw[attack] 
			
			(a1) edge[color=gray, ultra thick] (c1)
			(c1) edge[color=gray, ultra thick] (a1)
			(b1) edge[color=gray, ultra thick]  (c1)
			(c1) edge[color=gray, ultra thick]  (b1)
			(b2) edge[color=gray, ultra thick] (c2)
			(c2) edge[color=gray, ultra thick] (b2)
			(a2) edge[color=gray, ultra thick]  (c2)
			(c2) edge[color=gray, ultra thick]  (a2)
			(c1) edge[loop left] (c1)
			(c2) edge[loop left] (c2)
			(c1) edge (phi)
			(c2) edge (phi)
			
			(a1) edge[color=gray, ultra thick] (da12)
			(da12) edge[color=gray, ultra thick] (a1)
			(a2) edge[color=gray, ultra thick] (da22)
			(da22) edge[color=gray, ultra thick] (a2)
			(da12) edge (da1)
			(da22) edge (da1)
			
			(b1) edge[color=gray, ultra thick]  (db12)
			(db12) edge[color=gray, ultra thick]  (b1)
			(b2) edge[color=gray, ultra thick] (db22)
			(db22) edge[color=gray, ultra thick] (b2)
			(db12) edge (db1)
			(db22) edge (db1)
			;
		}
		\caption{$\red{4}(F)$ from the proof of Lemma~\ref{prop:verificationCompleteFourthImage}, with $\varphi =  ((a \lor b) \allowbreak \land \allowbreak (\neg a \lor \neg b))$. Symmetric attacks drawn in gray/thick have been introduced by Reduction~$4$.}
		\label{fig:reductionVerficationImage4CompleteAppendix}
	\end{figure}
	
	Assume $\varphi$ is satisfiable. Then there is an interpretation $I$ such that $I \models \varphi$. Let $E = \{x\mid x \in X, x \in I\} \cup \{\overline{x}\mid x \in X, x \not\in I\} \cup \{\varphi\}$. Clearly, $\cl(E) = S$. Furthermore, $E$ defends $\varphi$ in $\red{4}(F)$ since each clause is satisfied by $I$, and thus each clause argument $c_j$ is attacked by some $x$ (or $\overline{x}$) in $E$. Each variable $x \in X$ clearly defends itself. Moreover, if $x \in E$, then $\overline{x} \not\in E$ and none of $d_x$, $\overline{x}$, or $d_{\overline{x}}^1$ is defended by $E$. Analogously for the case that $\overline{x} \in E$. Thus, $E$ is admissible, and contains all arguments it defends, i.e., $E \in \comp(\red{4}(F))$.
	
	Assume $S \in \comp_c(\red{4}(F))$. Then there is $E \subseteq A$ such that $\cl(E) = S$ and $E \in \comp(\red{4}(F))$. For each $x \in X$, at least one of $x, \overline{x}$ must be contained in $E$. In fact, it can not be that $x \in E$ and $\overline{x} \in E$, otherwise $d_x$ would be defended by $E$ and we would have $\cl(E) \neq S$. Thus, for each $x \in X$, there is either $x \in E$ or $\overline{x} \in E$, but not both. Furthermore, $E$ defends $\varphi$, i.e., $E$ attacks all clause arguments $c_j$. Thus, $I \models \varphi$ for $I = X \cap E$. %
\end{proof}

\section{Efficient enumeration of claim-extensions}
In this section, we briefly discuss how our findings can be used for efficient enumeration of claim-extensions. 
For CAFs obtainable by Reduction 2, 3, and 4, we present an fixed-parameter tractable (FPT) algorithm for enumerating extensions that scales exponential with the number $k$ of different claims in the given CAF but only polynomial in its size $n$.

\begin{proposition}
For $\sigma\in\{\cf,\allowbreak\adm,\allowbreak\naive,\allowbreak\stb,\allowbreak\pref,\allowbreak\semi,\allowbreak\stage\}$, $i\in\{2,3,4\}$, and for $\sigma=\comp$ in case $i=3$,
there is a polynomial $poly(\cdot)$ such that
enumerating all $\sigma$-extensions of a CAF $(A,R,\cl)\in \Image{i}$ can be done in time in 
$\mathcal{O}(4^k\cdot poly(n))$, with $|\Args|= n$, $|\cl(\Args)|= k$.
\end{proposition}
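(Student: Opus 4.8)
The plan is to enumerate over all $2^k$ candidate claim-sets $C \subseteq \cl(\Args)$ and, for each, decide membership in $\sigma_p^i(F)$ using the polynomial-time machinery around $E_1^i(C)$ and $E_*^i(C)$ from Lemmas~\ref{lemma:Im234ConflictFree} and~\ref{lemma:Im234Admissible}. I would split the semantics into two groups. For $\sigma\in\{\cf,\adm,\naive,\stb\}$ with $i\in\{2,3,4\}$, and for $\comp$ with $i=3$, verification of a single claim-set is already in $\P$ (Propositions~\ref{prop:verificationConflictfreeNaive2to4}, \ref{prop:verificationAdm2to4}, and~\ref{prop:verificationComp2to4}); hence one simply iterates over the $2^k$ candidates and runs the respective verifier, giving total time $\mathcal{O}(2^k\cdot \mathit{poly}(n))\subseteq\mathcal{O}(4^k\cdot\mathit{poly}(n))$.

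The delicate cases are $\sigma\in\{\pref,\semi,\stage\}$, whose single-instance verification is $\coNP$-complete, so per-candidate checking does not obviously stay polynomial. Here I would exploit that each candidate has a canonical representative computable in polynomial time: for an admissible claim-set $C$, $E_*^i(C)$ is the unique $\subseteq$-maximal admissible argument-set with $\cl(E_*^i(C))=C$, and for a conflict-free claim-set $C$, $E_1^i(C)$ is the unique $\subseteq$-maximal conflict-free set realizing $C$. The key structural observations are then: (i)~the preferred argument-extensions of $\red{i}(F)$ are exactly the $\subseteq$-maximal members of the family $\mathcal{A}=\{E_*^i(C)\mid \cl(E_*^i(C))=C\}$; (ii)~the semi-stable argument-extensions correspond to those $E_*^i(C)$ whose range $E_*^i(C)^{\oplus}_{\red{i}(F)}$ is $\subseteq$-maximal within $\{E_*^i(C')^{\oplus}_{\red{i}(F)}\}$; and (iii)~analogously, the stage extensions correspond to the $E_1^i(C)$ of $\subseteq$-maximal range. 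Each implication follows from the maximality and uniqueness parts of the lemmas: any admissible (resp.\ conflict-free) set realizing $C$ is contained in its representative, so global subset- and range-maximality collapse to maximality within the family of at most $2^k$ representatives.

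Carrying this out, I would first compute all representatives (at most $2^k$ of them, each in $\mathit{poly}(n)$ time), then determine the $\subseteq$-maximal sets (for $\pref$) or $\subseteq$-maximal ranges (for $\semi$, $\stage$) by pairwise comparison of the candidates, and finally output the claim-sets $\cl(\cdot)$ of the surviving representatives. The pairwise comparison performs $\mathcal{O}((2^k)^2)=\mathcal{O}(4^k)$ set comparisons, each in polynomial time, which is precisely where the $4^k$ factor originates; all remaining work is bounded by $\mathcal{O}(2^k\cdot\mathit{poly}(n))$. The main obstacle is establishing items~(i)--(iii): one must argue that restricting attention to the polynomially many representatives loses no preferred, semi-stable, or stage extension, i.e.\ that maximization on the argument level is faithfully reflected by maximization over the representatives. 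This is exactly what the uniqueness-and-maximality statements of Lemmas~\ref{lemma:Im234ConflictFree} and~\ref{lemma:Im234Admissible} guarantee, and it is the only place where the well-formedness of the underlying CAF and the conflict-preservation of Reductions~$2$, $3$, and~$4$ (Lemma~\ref{lemma:cfDoesNotChangeForReductions234}) are genuinely needed.
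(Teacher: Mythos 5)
Your proposal is correct and follows essentially the same route as the paper's proof: polynomial-time verification looped over the $2^k$ candidate claim-sets for the easy semantics, and for $\pref$, $\semi$, $\stage$ the reduction to the family of unique maximal realizations $E_*^i(C)$ resp.\ $E_1^i(C)$ followed by pairwise ($4^k$) subset- or range-maximality comparisons. Your explicit justification of items (i)--(iii) via the containment of every admissible (resp.\ conflict-free) realization in its canonical representative is exactly the argument the paper leaves implicit.
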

\begin{proof}	
First recall that for CAFs $F \in \Image{i}$, $i\in \{2,3,4\}$ for the semantics $\sigma\in\{\cf,\adm,\naive,\stb\}$ as well as for complete semantics for CAFs in $\Image{3}$, the verification problem is in $\P$ (cf.\ Table~\ref{table:complexityResults}). 
Thus iterating through all $2^k$ many sets $C\subseteq \cl(\Args)$ and checking whether $C\in \sigma(F)$ can be done in time $\mathcal{O}(2^k\cdot poly(n))$.

For the remaining semantics, the algorithm builds heavily on the existence and polynomial-time computability of unique maximal realizations (i.e., sets of arguments $E\subseteq \Args$ with $\cl(E)=C$ for a given claim-set $C$) for conflict-free and admissible claim-sets:
Using Lemmata~\ref{lemma:Im234ConflictFree} and~\ref{lemma:Im234Admissible}, 
it suffices to compute $E^i_1(C)$ resp.\ $E^i_*(C)$ to verify conflict-freeness resp.\ admissibility of a claim-set $C$; moreover, $E^i_1(C)$ resp.\ $E^i_*(C)$ is the unique maximal conflict-free resp.\ admissible realization of $C$ in $F$.
Computing all conflict-free resp.\ admissible claim-sets along with their unique $\subseteq$-maximal realizations lies thus in $\mathcal{O}(2^k\cdot poly(n))$  as it suffices to loop through all sets $C\subseteq \cl(\Args)$. 

For $\sigma=\pref$, we compute all preferred argument-extensions of the underyling AF $(\Args,\Att)$. This requires %
two loops over all unique maximal admissible realizations of the admissible claim-sets of $F$; the overall runtime lies thus in $\mathcal{O}((2^k)^2\cdot poly(n))=\mathcal{O}(4^k\cdot poly(n))$.
We note that for $i=3$, we obtain a slightly improved runtime by exploiting I-maximality: When iterating through all $C\subseteq \cl(\Args)$, consider larger sets first; if an admissible set $C$ is found, exclude all subsets from further inspection. This procedure runs in time in $\mathcal{O}(2^k\cdot poly(n))$.

For $\sigma\in\{\stage,\semi\}$ we compute the range for the extensions by adding all attacked arguments to $E_1^i(C)$ resp.\ $E^i_*(C)$. Finally, we eliminate all sets for which the range is not $\subseteq$-maximal (analogous to the case for preferred semantics). This algorithm runs in $\mathcal{O}(4^k\cdot poly(n))$.
\end{proof}

As a consequence, we obtain FPT procedures for deciding credulous and skeptical acceptance as well as for verification of extensions for the respective CAF classes.

\begin{proposition}
For $\sigma\in\{\cf,\adm,\naive,\stb,\pref,\semi,$ $\stage\}$, $i\!\in\!\{2,3,4\}$, and for $\sigma\!=\!\comp$ in case $i\!=\!3$,
$\Cred_{\sigma,i}^\PCAF$, $\Skept_{\sigma,i}^\PCAF$, and $\Ver_{\sigma,i}^\PCAF$ can be solved in time in $\mathcal{O}(4^k\cdot poly(n))$ for CAFs $(\Args,\Att,\cl)$ with $|\cl(\Args)|\leq k$.
\end{proposition}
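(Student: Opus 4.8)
The plan is to derive this proposition directly from the preceding enumeration result, which already guarantees that the full collection of $\sigma$-extensions of a CAF $(\Args,\Att,\cl) \in \Image{i}$ can be produced in time $\mathcal{O}(4^k\cdot poly(n))$ for the semantics and reductions under consideration. First I would observe that this enumeration outputs a list $\mathcal{E} = \sigma_p^i(F)$ of claim-sets, and that $|\mathcal{E}| \leq 2^k$, since every element of $\mathcal{E}$ is a subset of $\cl(\Args)$ and $|\cl(\Args)| \leq k$. Hence, after the enumeration phase we hold an explicit representation of $\sigma_p^i(F)$ whose total description size is bounded by $\mathcal{O}(2^k \cdot k)$.

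The three decision problems then reduce to trivial passes over $\mathcal{E}$. For $\Cred_{\sigma,i}^\PCAF$ with input claim $\alpha$, I would accept iff $\alpha \in C$ for some $C \in \mathcal{E}$; for $\Skept_{\sigma,i}^\PCAF$, accept iff $\alpha \in C$ for every $C \in \mathcal{E}$; and for $\Ver_{\sigma,i}^\PCAF$ with input claim-set $C$, accept iff $C \in \mathcal{E}$. Each such scan inspects at most $2^k$ sets, each containing at most $k$ claims, so the post-processing cost is $\mathcal{O}(2^k \cdot k)$. Since $k \leq n$, this is dominated by the enumeration cost $\mathcal{O}(4^k\cdot poly(n))$, and the overall running time stays within $\mathcal{O}(4^k\cdot poly(n))$.

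There is no genuine obstacle here: the statement is a corollary of the enumeration proposition, the only point worth making explicit being that the enumerated output already has size bounded by $2^k$ (in claim-sets), so that the additional membership and scanning queries underlying credulous acceptance, skeptical acceptance, and verification do not increase the asymptotic complexity class. In particular, this yields \emph{fixed-parameter tractability} in the number $k$ of distinct claims even for those semantics whose verification problem is $\coNP$-complete (namely $\pref$, $\semi$, $\stage$ under Reductions~$2$, $3$, $4$, cf.\ Table~\ref{table:complexityResults}), since a parameter-bounded enumeration circumvents the parameter-free hardness.
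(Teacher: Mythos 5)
Your proposal is correct and matches the paper's intent: the paper states this proposition as an immediate consequence of the preceding enumeration result, and your argument simply makes the (straightforward) reduction explicit — enumerate $\sigma_p^i(F)$ in $\mathcal{O}(4^k\cdot poly(n))$ time, then answer each of the three queries by scanning the at most $2^k$ enumerated claim-sets, which is dominated by the enumeration cost. The only detail left implicit (and equally implicit in the paper) is that the input is a PCAF, so one first computes $\red{i}(F)\in\Image{i}$ in polynomial time before invoking the enumeration.
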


\end{document}